\definecolor{orange}{rgb}{1,0.4,0.0}
\DeclarePairedDelimiterXPP{\KL}[2]{D_\textnormal{KL}}{(}{)}{}{%
#1\:\delimsize\|\:#2%
}
\DeclarePairedDelimiterXPP{\RD}[2]{D_{$\alpha$}}{(}{)}{}{%
#1\:\delimsize\|\:#2%
}
\DeclarePairedDelimiterXPP\Prob[1]{\mathbb{P}}{\lbrace}{\rbrace}{}{

#1}
\DeclarePairedDelimiterXPP{\lnorm}[2]{}{\lVert}{\rVert}{_{#2}}{#1}
\newcommand{\bA}{\ensuremath{\mathbb{A}}}
\newcommand{\bE}{\ensuremath{\mathbb{E}}}
\newcommand{\bI}{\ensuremath{\mathds{1}}}
\newcommand{\bN}{\ensuremath{\mathbb{N}}}
\newcommand{\bP}{\ensuremath{\mathbb{P}}}
\newcommand{\bQ}{\ensuremath{\mathbb{Q}}}
\newcommand{\bR}{\ensuremath{\mathbb{R}}}
\newcommand{\bV}{\ensuremath{\mathbb{V}}}
\newcommand{\bW}{\ensuremath{\mathbb{W}}}
\newcommand{\cA}{\ensuremath{\mathcal{A}}}
\newcommand{\cB}{\ensuremath{\mathcal{B}}}
\newcommand{\cE}{\ensuremath{\mathcal{E}}}
\newcommand{\cF}{\ensuremath{\mathcal{F}}}
\newcommand{\cG}{\ensuremath{\mathcal{G}}}
\newcommand{\cK}{\ensuremath{\mathcal{K}}}
\newcommand{\cN}{\ensuremath{\mathcal{N}}}
\newcommand{\cO}{\ensuremath{\mathcal{O}}}
\newcommand{\cP}{\ensuremath{\mathcal{P}}}
\newcommand{\cW}{\ensuremath{\mathcal{W}}}
\newcommand{\cX}{\ensuremath{\mathcal{X}}}
\newcommand{\cZ}{\ensuremath{\mathcal{Z}}}
\newcommand{\ccF}{\ensuremath{\mathscr{F}}}
\newcommand{\ccR}{\ensuremath{\mathscr{R}}}
\newcommand{\relent}{\textup{D}}
\newcommand{\relentber}{\textup{d}}
\newcommand{\poprisk}[1]{\ccR(#1)}
\newcommand{\emprisk}[2]{\widehat{\ccR}(#1,#2)}
\DeclareMathOperator*{\esssup}{ess\,sup}
\begin{document}

\title{More PAC-Bayes bounds: From bounded losses, to losses with general tail behaviors, to anytime validity}

\author{\name Borja Rodríguez-Gálvez \email borjarg@kth.se \\
       \addr Division of Information Science and Engineering (ISE) \\
       KTH Royal Institute of Technology\\
       Stockholm, Sweden
       \AND
       \name Ragnar Thobaben\email ragnart@kth.se \\
       \addr Division of Information Science and Engineering (ISE) \\
       KTH Royal Institute of Technology\\
       Stockholm, Sweden
       \AND
       \name Mikael Skoglund\email skoglund@kth.se \\
       \addr Division of Information Science and Engineering (ISE) \\
       KTH Royal Institute of Technology\\
       Stockholm, Sweden}

\editor{Ohad Shamir}

\maketitle

\begin{abstract}%
In this paper, we present new high-probability PAC-Bayes bounds for different types of losses.  Firstly, for losses with a bounded range, we recover a strengthened version of Catoni's bound that holds uniformly for all parameter values. This leads to new fast-rate and mixed-rate bounds that are interpretable and tighter than previous bounds in the literature. In particular, the fast-rate bound is equivalent to the Seeger--Langford bound. Secondly, for losses with more general tail behaviors, we introduce two new parameter-free bounds: a PAC-Bayes Chernoff analogue when the loss' cumulative generating function is bounded, and a bound when the loss' second moment is bounded. These two bounds are obtained using a new technique based on a discretization of the space of possible events for the ``in probability'' parameter optimization problem. This technique is both simpler and more general than previous approaches optimizing over a grid on the parameters' space. Finally, using a simple technique that is applicable to any existing bound,
we extend all previous results to anytime-valid bounds.
\end{abstract}

\begin{keywords}
    Generalization bounds, PAC-Bayes bounds, concentration inequalities, rate of convergence (fast, slow, mixed), tail behavior, parameter optimization.
\end{keywords}

\section{Introduction}
\label{sec:introduction}

A learning algorithm $\bA$ is a (possibly randomized) mechanism that generates a hypothesis $w \in \cW$ of the solution of a certain problem %
given a sequence of $n$ training data samples $s \coloneqq (z_1, \ldots, z_n)$, or \emph{training set}. The performance of a hypothesis $w$ on %
an instance $z$ of the problem %
is described by a loss function $\ell(w,z)$. Hence, if %
the problem's instances follow a distribution $\bP_Z$, the goal is to produce a  hypothesis $w$ that attains a low \emph{population risk} $\poprisk{w} \coloneqq \bE \ell(w,Z)$, which is defined as the expected loss of the hypothesis $w$ on samples $Z$ drawn randomly from the problem's distribution $\bP_Z$. 

\looseness=-1 Often, computing the population risk is not feasible. This is because, in general, the distribution $\bP_Z$ is unknown or intractable. However, having access to a training set $s$, a computable proxy for the population risk is the \emph{empirical risk} $\emprisk{w}{s} \coloneqq \frac{1}{n} \sum_{i=1}^n \ell(w, z_i)$, which is defined as the average loss of the hypothesis $w$ on the samples from the training set $s$.

There are different attempts at characterizing the population risk based on the decomposition 
\begin{equation*}
    \label{eq:population_risk_decomposition}
    \poprisk{w} = \emprisk{w}{s} + \underbrace{\Big( \poprisk{w} - \emprisk{w}{s} \Big)}_{\mathclap{ \textnormal{generalization error}}}.
\end{equation*}

\emph{Probably approximately correct (PAC)} theory gives bounds on the generalization error that hold with a probability larger than a certain threshold. Classically, these bounds depend only on the complexity of the hypothesis space $\cW$, which is measured by, for example, the Vapnik--Cherovenkis (VC) dimension or the Rademacher complexity. See, for example, the book from \citet{shalev2014understanding} for a pedagogical exposition of the topic.

In this work, we are concerned with \emph{PAC-Bayesian bounds} \citep{shawe1996framework,mcallester1998some,mcallester1999pac, mcallester2003pac}, which also consider the dependence of the hypothesis returned by the algorithm $W = \bA(S)$ on the training set S. These bounds are often of the following type: ``for every $\beta \in (0,1)$, with probability no smaller than $1 - \beta$ 
\begin{equation*}
    \bE^S \poprisk{W} \leq \bE^S \emprisk{W}{S} + \alpha_{\textnormal{PAC-Bayes}}(S)\textnormal,\text{''}
\end{equation*}
where the probability is taken with respect to the sampling of the training set $S \sim \bP_S$ and $\bE^S$ denotes the conditional expectation operator with respect to the $\sigma$-algebra induced by $S$. The term $\alpha_{\textnormal{PAC-Bayes}}(S)$ describes the discrepancy between the population and empirical risks and is a random variable depending on $S$. Intuitively, this term (i) decreases with $n$ as a better characterization of the risk is possible the more samples are available; (ii) increases with $\nicefrac{1}{\beta}$ as certainty comes with a price; and (iii) decreases as the hypothesis becomes less statistically dependent on the training set, as intuitively motivated by the fact that the empirical risk is always an unbiased estimate of the population risk in the extreme that the algorithm produces a fully independent hypothesis, that is $\bE \poprisk{W} = \bE \emprisk{W}{S}$.
In this paper, similarly to~\citet{hellstrom2020generalization}, we shall agree to the convention that the bounds are of \emph{high probability} if the dependence on $\nicefrac{1}{\beta}$ is logarithmic, that is, $\log \nicefrac{1}{\beta}$. The review from~\citet{alquier2021user} offers an extensive introduction to PAC-Bayes theory.

\citet{mcallester1998some, mcallester1999pac, mcallester2003pac} showed the original PAC-Bayes bound considering bounded losses. 
The bound\footnote{The bound written is the one obtained relaxing the Seeger--Langford bound~\citep{langford2001bounds, seeger2002pac} via a lower bound on the binary relative entropy using Pinsker's inequality. The term dependence $\xi(n)$ with the number os samples $n$ is the one established by~\citet{maurer2004note}. See, for example, Section 2.2 of~\citet{tolstikhin2013pac}.} states that for any prior $\bQ_W$ independent of $S$ and every $\beta \in (0,1)$, with probability no less than $1-\beta$
\begin{equation}
\label{eq:mcallester_with_germain_and_mauer_pac_bayes}
    \bE^S \poprisk{W} \leq \bE^S \emprisk{W}{S} + \sqrt{\frac{\relent(\bP_W^S \Vert \bQ_W) + \log \frac{\xi(n)}{\beta}}{2n}}
\end{equation}
\looseness=-1 simultaneously for every Markov kernel $\bP_W^S$, where $\xi(n) \in \big[\sqrt{n}, 2 + \sqrt{2n} \ \big]$~\citep{maurer2004note} and the dependency between the hypothesis and the dataset is measured by the relative entropy $\relent( \bP_W^S \Vert \bQ_W)$ of the algorithm's hypothesis kernel $\bP_W^S$, or \emph{posterior}, with respect to an arbitrary data-independent distribution on the hypothesis space $\bQ_W$, or \emph{prior}.\footnote{The range of $\xi(n)$ is usually set to $[\sqrt{n}, 2\sqrt{n}]$ for all $n \geq 1$ as per the analysis of \citet{maurer2004note} and empirical further analysis of \citet[Lemma 19]{germain2015risk}. From \citet[Theorem 1]{maurer2004note}, we can observe that the tighter $2 + \sqrt{2n}$ is valid as an upper bound for all $n \geq 2$ and the case where $n=1$ can be verified empirically using the bound from \citet[Lemma 19]{germain2015risk}.} The dependency term $\relent(\bP_W^S \Vert \bQ)$ inside the square root plays the role of the complexity term of the classical PAC bounds, while the extra dependence $\xi(n)$ on the number of samples comes from the concentration of the empirical risk around the population risk (see~\citet{maurer2004note} for the details). Finally, the term $\log \nicefrac{1}{\beta}$ is the confidence penalty of being a high-probability bound. Sometimes, to simplify the discussion we will refer to this structure as the (normalized) \emph{dependence-confidence} term and we define it as $\mathfrak{C}_{n, \beta,S} \coloneqq \frac{1}{n} \big(\relent(\bP_W^S \Vert \bQ_W) + \log \nicefrac{1}{\beta}\big)$, which in the case of~\eqref{eq:mcallester_with_germain_and_mauer_pac_bayes} corresponds to $\mathfrak{C}_{2n, \nicefrac{\beta}{\xi(n)}, S}$.

Many works on PAC-Bayes bounds have focused on two main tasks: (i) refining the bound to better characterize the population risk for bounded losses and (ii) extending this bound relaxing their assumptions or their setting.

In the first front, \citet{langford2001bounds, seeger2002pac} and \citet{catoni2003pac, catoni2007pac} developed more accurate bounds for estimating the population risk for bounded losses. 
However, either these bounds are not easily interpretable, minimizing them to find an appropriate posterior is hard, or they depend on an arbitrary parameter that needs to be selected \emph{before} the draw of the data. To address these issues \citet{tolstikhin2013pac}, \citet{thiemann2017strongly}, and \citet{rivasplata2019pac} relaxed the Seeger--Langford bound~\citep{langford2001bounds, seeger2002pac} to find more interpretable bounds where an approximate minimization to find a suitable posterior is possible. To contribute in this front:
\begin{quote}
    \textbf{Contribution 1.}  In \Cref{sec:specialized_pac_bayes_bounds_bounded_losses}, we show an %
    alternative proof of a strengthened version of \citet{catoni2007pac}'s PAC-Bayes bound that holds uniformly for all values of the parameter $\lambda$ (\Cref{th:catoni_pac_bayes_uniform}). We then build on this bound to show tighter fast-rate (\Cref{th:fast_rate_bound_strong}) and mixed-rate (\Cref{th:mixed_rate_bound}) bounds that are interpretable and help us to clarify the relationship between the population risk, the empirical risk, and the relative entropy of the algorithm's posterior with respect to the prior. A mixed-rate bound is a bound with a mixture of a fast rate, and an amortized slow rate. The precise meaning becomes clear looking at \Cref{th:mixed_rate_bound}. The fast-rate bound of \Cref{th:fast_rate_bound_strong} is of particular interest since it is equivalent to the Seeger--Langford bound~\citep{langford2001bounds, seeger2002pac}. This 
    reveals two significant insights: (i) that a linear combination of the empirical risk and the dependence-confidence term characterizes the bound and (ii) that the optimal posterior is a Gibbs distribution with a data-dependent ``temperature''. 
\end{quote}

\citet{wu2022split} derived a ``split-kl'' inequality that competes with the Seeger--Langford bound~\citep{langford2001bounds, seeger2002pac} for ternary losses and  \citet{jang2023tighter} proved an even tighter bound via ``coin-betting''. However, their bounds still neither are easily interpretable nor directly aid to the selection of an appropriate posterior. Moreover, there are other advances in this front when further quantities are considered.  If the variance is known, \citet[Theorem 8]{seldin2012pac} and \citet[Theorem 9]{wu2021chebyshev} introduced, respectively, PAC-Bayes analogues to Bernstein and Bennet inequalities. The PAC-Bayes Bernstein inequality was later improved by further bounding the variance using an empirical estimate of that quantity~\citep[Theorems 3 and 4]{tolstikhin2013pac}. Finally, \citet{mhammedi2019pac} derived a PAC-Bayes analogue to the ``un-expected Bernstein inequality'' where they use an empirical estimate of the second moment.

In the second front, \citet{guedj2021still} and \citet{hellstrom2020generalization} extended \citet{mcallester2003pac}'s bound to subgaussian losses, resulting in the same rate as the original bound~\eqref{eq:mcallester_with_germain_and_mauer_pac_bayes}. However, the proof of these new bounds contains a small mistake. They derive intermediate PAC-Bayes bounds depending on a parameter $\lambda$ that needs to be selected \emph{before} the draw of the training data, and then they optimize this parameter without paying the necessary union bound price~\citep[Remark 14]{banerjee2021information}.\footnote{\citet{hellstrom2021corrections} later corrected this issue using unique subgaussian properties~\citep[Theorem 2.6]{wainwright2019high}.} This leads to vacuous bounds as potentially an infinite number of parameters can be optimal for different data and it is a known standing problem in the PAC-Bayes literature~\citep[Section 2.1.4]{alquier2021user}. To address this issue:
\begin{quote}
    \textbf{Contribution 2.} In \Cref{sec:pac_bayes_beyond_bounded_losses}, we devise a proof technique that allows us to bypass this optimization subtlety. We use this technique to extend \citet{mcallester2003pac}'s bound to losses with more general tail behaviors. First, we derive a PAC-Bayes Chernoff analogue (\Cref{th:pac_bayes_chernoff_analogue}) that specializes to the bounds of \citet{guedj2021still} and \citet{hellstrom2020generalization, hellstrom2021corrections} for subgaussian losses. After that, we derive a parameter-free PAC-Bayes bound requiring only that the loss has a bounded second moment (\Cref{th:parameter_free_anytime_valid_bounded_2nd_moment}). This last bound is of the nature of \citep{kuzborskij2019efron} and is obtained by optimizing the parameter of \citet{wang2015pac}'s bound on martingales in \Cref{subapp:closed_form_parameter_free_wang}.
    The proposed technique is simpler and more general than previous approaches that generate a grid on the parameters' space, optimize the parameter over that grid, and pay the union bound price. Contrary to our technique, these approaches either can't generate a parameter-free bound~\citep{langford2001not, catoni2003pac} or need to craft the grid in a case-to-case basis and need that an explicit solution of the optimal parameter exists~\citep{seldin2012pac}, which may not be the case  (see \Cref{subsubsec:related_work_optimization}). Therefore, the proposed technique is of independent interest for the development of future bounds ``in probability''.
\end{quote}
Other works also developed PAC-Bayes bounds with more general tail behaviors~\citep{catoni2004statistical, alquier2006transductive,  kuzborskij2019efron, haddouche2023pacbayes, alquier2018simpler, holland2019pac, haddouche2021pac,  chugg2023unified}. However, most of these bounds either are not of high probability, or contain terms that often make the bounds non-decreasing with the number of samples $n$, or decrease at a slower rate than \eqref{eq:mcallester_with_germain_and_mauer_pac_bayes} when restricted to the bounded case, or also depend on parameters that need to be chosen \emph{before} the draw of the training data.

Recently, some research has focused on developing PAC-Bayes bounds that hold \emph{simultaneously} for all numbers of samples $n$~\citep{chugg2023unified, jang2023tighter, haddouche2023pacbayes}. These bounds are particularly useful for online learning algorithms that process data sequentially. These \emph{anytime-valid} (or \emph{time-uniform}) PAC-Bayes bounds are typically based on supermartingales and \citet{ville1939etude}'s extension of Markov's inequality. To contribute in this end:
\begin{quote}
    \textbf{Contribution 3.} In \Cref{sec:anytime_valid_pac_bayes_bounds}, we note that every PAC-Bayes bound can be extended to an anytime-valid one at a union bound cost (\Cref{th:standard_to_anytime_valid}). For high-probability PAC-Bayes bounds, this cost is small.
\end{quote}

    Finally, note that while the relative entropy is widely used as the dependency measure in PAC-Bayes bounds due to its simplicity, interpretability from an information-theoretic perspective, and mathematical tractability, it has been shown that there are situations where an algorithm generalizes but this measure is large, making the bounds vacuous~\citep{bassily2018learners,livni2020limitation,haghifam2022limitations,nagarajan2019uniform,nachum2023fantastic}. The study of different dependency measures for PAC-Bayes bounds is outside the scope of this paper. Nonetheless, we refer the reader to other literature substituting the relative entropy as the dependency measure by different metrics like other $f$-divergences~\citep{esposito2021generalization,ohnishi2021novel,kuzborskij2024better}, Rényi divergences~\citep{begin2016pac,esposito2021generalization,hellstrom2020generalization}, or integral probability metrics like the Wasserstein distance~\citep{amit2022integral,haddouche2023wasserstein,viallard2024learning,viallard2024tighter}.

\section{Specialized PAC-Bayes bounds for bounded losses}
\label{sec:specialized_pac_bayes_bounds_bounded_losses}

This section is separated into two parts. In \Cref{subsec:review_pac_bayes_bounded_losses}, we review the state of the art of PAC-Bayes bounds for bounded losses. Then, in \Cref{subsec:seeger_langford_to_catoni_and_new_fast_and_mixed_rate_bounds} we %
give an alternative proof of a strengthened version of \citet{catoni2007pac}'s parameterized bound that holds \emph{simultaneously} for all values of the parameter. After that, we show that relaxing this strengthened bound (\Cref{th:catoni_pac_bayes_uniform}) yields fast-rate (\Cref{th:fast_rate_bound_strong} and \Cref{cor:fast_rate_bound}) and mixed-rate (\Cref{th:mixed_rate_bound}) bounds tighter than \citet{thiemann2017strongly}'s fast-rate and \citet{tolstikhin2013pac}'s and \citet{rivasplata2019pac}'s mixed-rate bounds.

\subsection{A review of PAC-Bayes bounds for bounded losses}
\label{subsec:review_pac_bayes_bounded_losses}

There are many important inequalities in the PAC-Bayes literature, especially for the case where the loss is bounded. These bounds are often presented for losses with a range in $[0,1]$, which includes the interesting 0--1 loss for classification tasks. The Seeger--Langford~\citep{langford2001bounds, seeger2002pac} and \citet[Theorem 1.2.6]{catoni2007pac}'s bounds are known to be (two of) the tightest bounds in this setting (cf. \citep{foong2021tight}). Both of them can be derived from \citet[Theorem 2.1]{germain2009pac}'s convex function bound. Below we state the extension from \citet{begin2014pac} that lifts the double absolute continuity requirement from the original statement noted by \citet{haddouche2021pac}.

\begin{theorem}[{\bf \citet[Theorem 4]{begin2016pac}}]
\label{th:germain_convex_pac_bayes}
    \sloppy Consider a loss function $\ell$ with bounded range $[0,1]$, let $\bQ_W$ be any prior independent of $S$, and let $W'$ be distributed according to $\bQ_W$. Then, for every convex function $f: [0,1] \times [0,1] \to \bR$ such that $\bE \big[ \exp \big( n f \big( \emprisk{W'}{s}, \poprisk{W'} \big) \big) \big] < \infty$ for all $s \in \cZ^n$, and every $\beta \in (0,1)$, with probability no smaller than $1-\beta$
    \begin{equation*}
        \label{eq:germain_convex_pac_bayes}
        f \big( \bE^S \emprisk{W}{S} , \bE^S \poprisk{W} \big) \leq \frac{1}{n} \bigg[ \relent \big( \bP_W^S \Vert \bQ_W \big) + \log \frac{1}{\beta} + \log \bE \Big[e^{n f \big(  \emprisk{W'}{S} , \poprisk{W'} \big)} \Big]\bigg]
    \end{equation*}
    holds \emph{simultaneously} for every posterior $\bP_W^S$.
\end{theorem}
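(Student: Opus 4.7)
The plan is to combine three classical tools: the Donsker--Varadhan change-of-measure inequality, multivariate Jensen's inequality applied via the joint convexity of $f$, and Markov's inequality. The crucial structural observation, which underlies the simultaneity over all posteriors, is that the randomness in the argument should be isolated in a quantity that depends only on the prior $\bQ_W$ and the loss, never on the posterior $\bP_W^S$.

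First, I would fix an arbitrary realization $S = s$ and apply the Donsker--Varadhan variational inequality to the function $w \mapsto n f(\emprisk{w}{s}, \poprisk{w})$. Because this variational inequality holds for every probability measure absolutely continuous with respect to $\bQ_W$, it yields, \emph{pointwise in $s$ and simultaneously in $\bP_W^s$},
\[
n \int f(\emprisk{w}{s}, \poprisk{w}) \, \mathrm{d}\bP_W^s(w) \leq \relent(\bP_W^s \Vert \bQ_W) + \log \int e^{n f(\emprisk{w}{s}, \poprisk{w})} \, \mathrm{d}\bQ_W(w).
\]
Second, using the joint convexity of $f$ on $[0,1] \times [0,1]$, I would apply multivariate Jensen's inequality to the random vector $(\emprisk{W}{s}, \poprisk{W})$ under $\bP_W^s$ to lower bound the left-hand side by $n f(\bE^s \emprisk{W}{s}, \bE^s \poprisk{W})$.

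Third, to control the data-dependent moment generating term $\xi(S) \coloneqq \bE[e^{n f(\emprisk{W'}{S}, \poprisk{W'})} \mid S]$ (with $W' \sim \bQ_W$ independent of $S$), I would apply Markov's inequality: the finiteness hypothesis guarantees that $\xi(s)$ is well-defined for every $s$, and Fubini's theorem (together with the independence of $W'$ and $S$) gives $\bE \xi(S) = \bE[e^{n f(\emprisk{W'}{S}, \poprisk{W'})}]$. Hence with probability at least $1-\beta$ over $S$,
\[
\xi(S) \leq \frac{1}{\beta} \bE[e^{n f(\emprisk{W'}{S}, \poprisk{W'})}].
\]

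The main point to get right — more conceptual than technical — is the order of quantifiers: the probabilistic step (Markov) is applied \emph{before} any choice of posterior, and only to a quantity indexed by $\bQ_W$ and the loss. On the resulting high-probability event, the deterministic Donsker--Varadhan bound then gives the full inequality for \emph{every} Markov kernel $\bP_W^S$ at once, after monotonically taking $\log$. Chaining the three bounds together and rearranging produces the statement. I expect no serious obstacle beyond verifying that the finiteness assumption makes each step (in particular the Fubini swap and the applicability of Donsker--Varadhan under the data-dependent exponential integrand) rigorous.
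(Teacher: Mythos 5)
Your proposal is correct: the chain Donsker--Varadhan change of measure (applied pointwise in $s$, uniformly over posteriors), Jensen's inequality via the joint convexity of $f$, and Markov's inequality applied to the posterior-independent quantity $\xi(S) = \bE\big[e^{n f(\emprisk{W'}{S}, \poprisk{W'})} \mid S\big]$ is exactly the standard argument, and your emphasis on applying the probabilistic step only to a quantity indexed by $\bQ_W$ is the right way to secure simultaneity over all $\bP_W^S$. Note that the paper itself does not prove this theorem --- it imports it verbatim from \citet[Theorem 4]{begin2016pac} --- but your argument is precisely the one given in that reference and in \citet{germain2009pac}, so there is nothing to correct.
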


This general bound is useful because an appropriate choice of the convex function $f$ can be used to recover \citet{mcallester2003pac}'s bound.\footnote{It is often mentioned that this is done choosing $f(p,q) = 2(p-q)^2$. However, technically, to use \citet{mcallester2003pac}'s proof $f(p,q) = \nicefrac{(2n-1)}{n}(p-q)^2$ should be used instead.}
Similarly, choosing $f(p,q)= \relentber(p \Vert q) \coloneqq \relent(\textnormal{Ber}(p) \Vert \textnormal{Ber}(q))$ combined with \citet{maurer2004note}'s trick recovers the improved Seeger--Langford bound~\citep{langford2001bounds, seeger2002pac}, and choosing $f(p,q) = - \log \big(1 - q(1-e^{\nicefrac{-\lambda}{n}})\big) - \nicefrac{\lambda p}{n}$ recovers \citet[Theorem 1.2.6]{catoni2007pac}'s bound. 

\begin{theorem}
\label{th:seeger_langford_pac_bayes}
{\bf (Improved Seeger--Langford bound \citep{langford2001bounds,seeger2002pac,maurer2004note})}
    Consider a loss function $\ell$ with bounded range $[0,1]$ and let $\bQ_W$ be any prior independent of $S$. Then, for every $\beta \in (0,1)$, with probability no smaller than $1-\beta$
    \begin{equation}
        \label{eq:seeger_langford_pac_bayes}
        \relentber \big( \bE^S \emprisk{W}{S} \Vert \bE^S \poprisk{W} \big) \leq \frac{\relent(\bP_W^S \Vert \bQ_W) + \log \frac{\xi(n)}{\beta}}{n}
    \end{equation}
    holds \emph{simultaneously} for every posterior $\bP_W^S$.
\end{theorem}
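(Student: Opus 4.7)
The plan is to apply \Cref{th:germain_convex_pac_bayes} with the convex function $f(p,q) = \relentber(p \Vert q)$. Under this choice, the left-hand side of the conclusion of \Cref{th:germain_convex_pac_bayes} is exactly $\relentber\!\big(\bE^S \emprisk{W}{S} \Vert \bE^S \poprisk{W}\big)$, matching the quantity we want to control. The entire remaining work is therefore to bound the prior-and-data-side exponential moment
\[
M_n \coloneqq \bE\!\left[\exp\!\big(n \relentber(\emprisk{W'}{S} \Vert \poprisk{W'})\big)\right],
\]
where $W' \sim \bQ_W$ is independent of $S$, by $\xi(n)$. Plugging $\log M_n \leq \log \xi(n)$ into the conclusion of \Cref{th:germain_convex_pac_bayes} then yields \eqref{eq:seeger_langford_pac_bayes}.

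\textbf{Checking the hypotheses of \Cref{th:germain_convex_pac_bayes}.} First I would verify that $f(p,q) = \relentber(p\Vert q)$ is jointly convex on $[0,1]^2$. This is the standard joint convexity of the relative entropy specialized to the pair $(\textnormal{Ber}(p), \textnormal{Ber}(q))$ and can alternatively be obtained from the log-sum inequality in two lines. The integrability condition $\bE[\exp(n f(\emprisk{w'}{S}, \poprisk{w'}))] < \infty$ is immediate with the usual conventions $0 \log 0 = 0$ and $\relentber(p\Vert p) = 0$: for any fixed $s$, the exponent is finite whenever $\poprisk{w'} \in (0,1)$, and it is harmless at the endpoints since then $\emprisk{w'}{S}$ equals $\poprisk{w'}$ almost surely.

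\textbf{Maurer's bound on $M_n$, the main obstacle.} The crux is proving $M_n \leq \xi(n)$. For a fixed hypothesis $w'$ (drawn from $\bQ_W$ independently of $S$), $\emprisk{w'}{S}$ is the empirical mean of $n$ i.i.d.\ random variables in $[0,1]$ with common mean $\poprisk{w'}$. I would first reduce to the Bernoulli case via the classical MGF-domination: for any $X \in [0,1]$ with $\bE X = q$ and any $\lambda \in \bR$, $\bE[e^{\lambda X}] \leq \bE[e^{\lambda B}]$ with $B \sim \textnormal{Ber}(q)$; combined with the Legendre representation
\[
\exp\!\big(n\relentber(\hat{p} \Vert q)\big) = \sup_{\lambda \in \bR} \frac{e^{\lambda n \hat{p}}}{(1-q + q e^{\lambda})^n},
\]
this shows that $\bE\big[\exp(n\relentber(\emprisk{w'}{S}\Vert \poprisk{w'}))\big]$ is dominated by the analogous quantity for i.i.d.\ $\textnormal{Ber}(\poprisk{w'})$ summands. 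For Bernoullis, an explicit calculation collapses the binomial expectation to the combinatorial sum $\sum_{k=0}^n \binom{n}{k}(k/n)^k((n-k)/n)^{n-k}$, which remarkably does not depend on $\poprisk{w'}$ and, by Stirling-type estimates, is at most $\xi(n) \in [\sqrt{n}, 2 + \sqrt{2n}\,]$. Because this uniform-in-$w'$ bound is data-independent, taking expectation against $\bQ_W$ preserves it, giving $M_n \leq \xi(n)$ and completing the proof. The hard part will be this last Bernoulli moment computation; the rest of the argument is essentially bookkeeping once \Cref{th:germain_convex_pac_bayes} is in hand.
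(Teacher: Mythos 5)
Your proposal follows exactly the route the paper indicates for this theorem: instantiate \Cref{th:germain_convex_pac_bayes} with $f(p,q)=\relentber(p \Vert q)$ and bound the exponential moment by $\xi(n)$ via \citet{maurer2004note}'s reduction to Bernoulli summands and the $q$-free combinatorial sum. The paper imports the result without a detailed proof, and your sketch correctly fills in that same argument; the only slightly loose step is the Bernoulli domination, which should be justified by Maurer's coordinate-wise convexity lemma applied to $x \mapsto e^{n\relentber(\bar{x}\Vert q)}$ rather than a per-$\lambda$ MGF comparison under the supremum, but that is precisely what the cited reference supplies.
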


\begin{theorem}[{\bf \citet[Theorem 1.2.6]{catoni2007pac}}]
\label{th:catoni_pac_bayes}
    Consider a loss function $\ell$ with bounded range $[0,1]$ and let $\bQ_W$ be any prior independent of $S$. Then, for every $\lambda > 0$ and every $\beta \in (0,1)$, with probability no smaller than $1-\beta$
    \begin{equation*}
        \label{eq:catoni_pac_bayes}
        \bE^S \poprisk{W} \leq \frac{1}{1 - e^{- \frac{\lambda}{n}}} \Bigg[1 - e^{- \frac{\lambda \bE^S \emprisk{W}{S}}{n}   - \frac{\relent(\bP_W^S \Vert \bQ_W) + \log \frac{1}{\beta}}{n}} \Bigg]
    \end{equation*}
    holds \emph{simultaneously} for every posterior $\bP_W^S$.
\end{theorem}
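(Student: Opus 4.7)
The plan is to invoke \Cref{th:germain_convex_pac_bayes} with the convex function $f(p,q) = -\log\bigl(1 - q(1-e^{-\lambda/n})\bigr) - \lambda p/n$ that the excerpt flags right after the statement of that theorem. Writing $c \coloneqq 1 - e^{-\lambda/n} \in (0,1)$, joint convexity on $[0,1]\times[0,1]$ is immediate: $f$ splits as a linear function of $p$ plus the univariate map $q \mapsto -\log(1-cq)$, whose second derivative $c^2/(1-cq)^2$ is positive on $[0,1]$. The finiteness condition $\bE[\exp(nf(\emprisk{W'}{s},\poprisk{W'}))]<\infty$ is automatic, since both $\emprisk{W'}{s}$ and $\poprisk{W'}$ live in $[0,1]$, giving a pointwise bound by $e^\lambda$.

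The main calculation is to control the (joint) exponential moment that appears on the right of \Cref{th:germain_convex_pac_bayes}, namely
$$\bE\bigl[\exp\bigl(n f(\emprisk{W'}{S}, \poprisk{W'})\bigr)\bigr] = \bE_{W'}\Bigl[(1 - \poprisk{W'} c)^{-n}\, \bE_S[e^{-\lambda \emprisk{W'}{S}} \mid W']\Bigr].$$
Because $W'$ is drawn from the prior independently of $S$ and the $Z_i$ are i.i.d., the inner conditional factorises as $\prod_{i=1}^n \bE_Z[e^{-(\lambda/n)\ell(W',Z)}]$. The key elementary step is the interpolation inequality $e^{-a x} \leq 1 - x(1-e^{-a})$, valid for $a\geq 0$ and $x \in [0,1]$ by convexity of $x \mapsto e^{-ax}$ and comparison with the secant line through $x=0$ and $x=1$. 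Applied with $a = \lambda/n$ and $x = \ell(W',Z)$, and integrating over $Z$, it yields $\bE_Z[e^{-(\lambda/n)\ell(W',Z)}] \leq 1 - \poprisk{W'} c$. Therefore the inner conditional is at most $(1 - \poprisk{W'} c)^n$, which exactly cancels the prefactor $(1-\poprisk{W'}c)^{-n}$, so the whole expectation is bounded by $\bE_{W'}[1] = 1$.

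Substituting $\log 1 = 0$ back into \Cref{th:germain_convex_pac_bayes} gives, with probability no smaller than $1-\beta$ and uniformly in the posterior $\bP_W^S$,
$$-\log\bigl(1 - c\, \bE^S \poprisk{W}\bigr) - \frac{\lambda\, \bE^S \emprisk{W}{S}}{n} \leq \frac{\relent(\bP_W^S \Vert \bQ_W) + \log(1/\beta)}{n}.$$
Exponentiating, isolating $\bE^S \poprisk{W}$, and dividing by $c = 1 - e^{-\lambda/n}$ reproduces the stated closed form. I do not expect any genuine obstacle in this route: the subtlety is not technical but structural, namely that $f$ is engineered precisely so that the $(1-qc)^{-n}$ factor annihilates the product arising from the i.i.d. Chernoff-style step, making the exponential moment collapse cleanly to $1$ and the bound drop out after pure algebra.
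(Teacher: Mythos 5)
Your proposal is correct and follows exactly the route the paper indicates: the paper states this result as a citation of \citet[Theorem 1.2.6]{catoni2007pac} and notes that it follows from \Cref{th:germain_convex_pac_bayes} by choosing $f(p,q) = -\log\big(1 - q(1-e^{-\lambda/n})\big) - \lambda p/n$, which is precisely the function you use. Your verification of convexity, the secant-line bound $e^{-ax} \leq 1 - x(1-e^{-a})$, the cancellation that makes the exponential moment collapse to $1$, and the final algebraic inversion are all sound.
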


The Seeger-Langford bound \citep{langford2001bounds,seeger2002pac} is hindered by its lack of interpretability and the difficulty in minimizing it to find an appropriate posterior $\bP_{W}^{S}$. This is due to the non-convexity of the bound with respect to the posterior $\bP_W^S$~\citep{thiemann2017strongly} as well as the fact that it cannot be expressed explicitly as a function of the empirical risk $\bE^S \emprisk{W}{S}$ and the
dependency term $\relent(\bP_W^S \Vert \bQ_W)$~\citep{germain2009pac}. On the other hand, while \citet{catoni2007pac}'s bound is minimized by the Gibbs posterior proportional to $\bQ_W(w) e^{- \lambda \emprisk{w}{S}}$, it still lacks interpretability and depends on an arbitrary parameter $\lambda$ that has to be chosen \emph{before} the draw of the data.

To remedy these issues, several works relax the Seeger--Langford bound~\citep{langford2001bounds,seeger2002pac} using lower bounds on the relative entropy~\citep{tolstikhin2013pac,thiemann2017strongly,rivasplata2019pac}. Since \citet{mcallester2003pac}'s bound~\eqref{eq:mcallester_with_germain_and_mauer_pac_bayes} is recovered with the standard Pinsker's inequality~\citep[Theorem 7.9]{polyanskiy2022lecture}, these works employ different relaxations of the stronger \citet{marton1996measure}'s bound, cf. \citep[Corollaries 2.19 and 2.20]{seldinNotes}. \citet{tolstikhin2013pac} use \citep[Corollary 2.20]{seldinNotes} and \citet{thiemann2017strongly} and \citet{rivasplata2019pac} use \citep[Corollary 2.19]{seldinNotes}. The latter bound results in an intractable PAC-Bayes bound; therefore \citet{thiemann2017strongly} relax it using the inequality $\sqrt{xy} \leq \frac{1}{2}(\lambda x + \nicefrac{y}{\lambda})$ for all $\lambda > 0$ to obtain a \emph{fast-rate} bound, and \citet{rivasplata2019pac} solve the resulting quadratic inequality for $\sqrt{\bE^S \poprisk{W}}$ to obtain a \emph{mixed-rate} bound.

\begin{theorem}[{\bf \citet[Theorem 3]{thiemann2017strongly}'s fast-rate bound}]
\label{th:thiemann_pac_bayes}
    Consider a loss function $\ell$ with bounded range $[0,1]$ and let $\bQ_W$ be any prior independent of $S$. Then, for every $\beta \in (0,1)$, with probability no smaller than $1-\beta$
    \begin{equation*}
        \label{eq:thiemann_pac_bayes}
        \bE^S \poprisk{W} \leq \inf_{\lambda \in (0,2)} \Bigg \{ \frac{\bE^S \emprisk{W}{S}}{1 - \frac{\lambda}{2}} + \frac{\relent(\bP_W^S \Vert \bQ_W) + \log \frac{\xi(n)}{\beta}}{n \lambda (1- \frac{\lambda}{2})} \Bigg \}
    \end{equation*}
    holds \emph{simultaneously} for every posterior $\bP_W^S$.
\end{theorem}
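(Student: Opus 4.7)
The plan is to derive \Cref{th:thiemann_pac_bayes} as a purely algebraic relaxation of the Seeger--Langford bound, following the recipe sketched at the end of \Cref{subsec:review_pac_bayes_bounded_losses}. First I would invoke \Cref{th:seeger_langford_pac_bayes} to secure the event, of probability at least $1-\beta$, on which $\relentber(\bE^S \emprisk{W}{S} \Vert \bE^S \poprisk{W}) \leq \mathfrak{C}_{n, \beta/\xi(n), S}$ holds simultaneously for every posterior $\bP_W^S$. Abbreviating $p = \bE^S \emprisk{W}{S}$, $q = \bE^S \poprisk{W}$, and $C = \mathfrak{C}_{n, \beta/\xi(n), S}$, the remaining task is the purely deterministic implication $\relentber(p \Vert q) \leq C \Rightarrow q \leq p/(1-\lambda/2) + C/(\lambda(1-\lambda/2))$ for all $\lambda \in (0, 2)$.

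Next I would apply the refined Pinsker-type bound \citep[Corollary 2.19]{seldinNotes} alluded to in the excerpt, which states that $\relentber(p \Vert q) \geq (q - p)_+^2 / (2q)$. If $q \leq p$ the target inequality is immediate, since $p/(1-\lambda/2) \geq p \geq q$ for any $\lambda \in (0,2)$; otherwise, combining with the Seeger--Langford inequality yields $q - p \leq \sqrt{2qC}$. The standard AM--GM relaxation $\sqrt{xy} \leq (\lambda x + y/\lambda)/2$, applied with $x = q$ and $y = 2C$, then gives $\sqrt{2qC} \leq \lambda q / 2 + C/\lambda$. Substituting and rearranging produces $q(1 - \lambda/2) \leq p + C/\lambda$, and dividing by the strictly positive quantity $1 - \lambda/2$ yields the desired bound.

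Finally, I would stress the order of quantifiers, which is the only subtle point worth guarding against. The high-probability event was fixed at the Seeger--Langford step and makes no reference to $\lambda$, so on that single event the displayed inequality holds simultaneously for all $\lambda \in (0,2)$ and all posteriors $\bP_W^S$. Hence one may take the infimum over $\lambda \in (0,2)$ on the right-hand side without paying any union-bound price in the parameter. The main obstacle is not computational but conceptual: it is the one-parameter Marton-type relaxation of an already-established inequality -- rather than a parameterized family of PAC-Bayes bounds -- that is being optimized. Had we instead started from a $\lambda$-dependent statement such as \Cref{th:catoni_pac_bayes}, the optimization over $\lambda$ would have required either a union bound over a grid or the more delicate discretization technique developed in \Cref{sec:pac_bayes_beyond_bounded_losses}.
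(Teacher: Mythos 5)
Your proposal is correct and follows exactly the derivation route the paper attributes to \citet{thiemann2017strongly} in \Cref{subsec:review_pac_bayes_bounded_losses}: relax the Seeger--Langford bound via the refined Pinsker inequality \citep[Corollary 2.19]{seldinNotes}, then apply $\sqrt{xy} \leq \frac{1}{2}(\lambda x + \nicefrac{y}{\lambda})$ and rearrange. The algebra checks out, and your remark that the $\lambda$-optimization is free because the high-probability event is fixed at the Seeger--Langford step (rather than being $\lambda$-dependent as in \Cref{th:catoni_pac_bayes}) is precisely the right point to emphasize.
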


\begin{theorem}[{\bf \citet[Theorem 1]{rivasplata2019pac}'s mixed-rate bound}]
\label{th:rivasplata_pac_bayes}
    Consider a loss function $\ell$ with bounded range $[0,1]$ and let $\bQ_W$ be any prior independent of $S$. Then, for every  $\beta \in (0,1)$, with probability no smaller than $1-\beta$
    \begin{align*}
        \label{eq:rivasplata_pac_bayes}
        \bE^S \poprisk{W} \leq &\bE^S \emprisk{W}{S} + \frac{\relent(\bP_W^S \Vert \bQ_W) + \log \frac{\xi(n)}{\beta}}{n} \nonumber \\ &+ \sqrt{2 \bE^S \emprisk{W}{S} \cdot \frac{\relent(\bP_W^S \Vert \bQ_W) + \log \frac{\xi(n)}{\beta}}{n } + \Bigg[\frac{\relent(\bP_W^S \Vert \bQ_W) + \log \frac{\xi(n)}{\beta}}{n } \Bigg]^2}
    \end{align*}
    holds \emph{simultaneously} for every posterior $\bP_W^S$.
\end{theorem}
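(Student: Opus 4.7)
The plan is to derive this bound by relaxing the Seeger--Langford inequality (\Cref{th:seeger_langford_pac_bayes}), as the paragraph just before the statement indicates. The key step is to lower-bound the binary relative entropy on the left-hand side of \eqref{eq:seeger_langford_pac_bayes} by a tractable algebraic expression, yielding an implicit constraint on $\bE^S \poprisk{W}$ that can then be inverted. The appropriate relaxation is \citet{marton1996measure}'s refined Pinsker-type inequality, collected as \citep[Corollary 2.19]{seldinNotes}: for all $0 \leq p \leq q \leq 1$,
\[
\relentber(p \Vert q) \geq \frac{(q-p)^2}{2q}.
\]

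To carry this out, write $\hat{R} \coloneqq \bE^S \emprisk{W}{S}$, $R \coloneqq \bE^S \poprisk{W}$, and $c \coloneqq \big( \relent(\bP_W^S \Vert \bQ_W) + \log(\xi(n)/\beta) \big)/n$. On the event $\{R \leq \hat{R}\}$, the claim holds trivially since the right-hand side is already at least $\hat{R} + c \geq R$. On the complementary event, Marton's inequality applied to \eqref{eq:seeger_langford_pac_bayes} yields $(R - \hat{R})^2 \leq 2 c R$, which becomes a quadratic inequality in $\sqrt{R}$: setting $u = \sqrt{R}$, the relation rearranges to $u^2 - \sqrt{2c}\, u - \hat{R} \leq 0$. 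Solving for the larger root and squaring produces
\[
R \leq \Big( \sqrt{c/2} + \sqrt{c/2 + \hat{R}} \Big)^2 = \hat{R} + c + \sqrt{2 \hat{R} c + c^2},
\]
which is exactly the stated bound. An equivalent shortcut is to view $(R-\hat{R})^2 \leq 2cR$ directly as $R^2 - 2R(\hat{R} + c) + \hat{R}^2 \leq 0$ and read off the upper root as $(\hat{R}+c) + \sqrt{(\hat{R}+c)^2 - \hat{R}^2}$.

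Uniform validity across all posteriors is inherited at no cost from \Cref{th:seeger_langford_pac_bayes}, since every algebraic manipulation above is deterministic in $(\hat{R}, R, c)$. The only subtle point is the one-sided nature of the Marton bound: the inequality $\relentber(p \Vert q) \geq (q-p)^2/(2q)$ is only useful when $q \geq p$, which is why the sub-case $R \leq \hat{R}$ must be separated out at the start. Beyond this bookkeeping, I expect no real obstacle; the remainder is elementary quadratic algebra.
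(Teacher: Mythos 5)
Your proposal is correct and follows exactly the route the paper attributes to this result: relax the Seeger--Langford bound via Marton's inequality $\relentber(p \Vert q) \geq \nicefrac{(q-p)^2}{2q}$ \citep[Corollary 2.19]{seldinNotes} and solve the resulting quadratic in $\sqrt{\bE^S \poprisk{W}}$. The algebra checks out, and your explicit handling of the $\bE^S \poprisk{W} \leq \bE^S \emprisk{W}{S}$ case (where Marton's one-sided inequality does not apply but the bound is trivial) is a careful touch the paper leaves implicit.
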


Originally, \citet{rivasplata2019pac} present their bound in a different form, but this form shows explicitly the combination of a \emph{fast-rate} term and a \emph{slow-rate} term. Moreover, this form makes it easy to see that the bound is tighter than \citep[Equation (3)]{tolstikhin2013pac} as their bound can be recovered using the inequality $\sqrt{x + y} \leq \sqrt{x} + \sqrt{y}$.

\subsection{From Seeger--Langford to an improved Catoni and new fast and mixed-rate bounds}
\label{subsec:seeger_langford_to_catoni_and_new_fast_and_mixed_rate_bounds}

As mentioned previously, both the Seeger--Langford~\citep{langford2001bounds, seeger2002pac} and \citet{catoni2007pac}'s bounds are known to be very tight (see \citet{foong2021tight}) and are often used when a numerical certificate of the population risk is needed~\citep{dziugaite2017computing,perez2021tighter, lotfi2022pac}.  Below, we show that a strengthened version of \citet{catoni2007pac}'s bound that holds \emph{simultaneously} for all $\lambda > 0$ can be obtained from the Seeger--Langford~\citep{langford2001bounds,seeger2002pac} bound at the small cost of $\log \xi(n)$ in the dependence-confidence term. The idea behind the proof is to apply the \citet{donsker1975asymptotic}'s lemma to $\relentber(\bE^S \emprisk{W}{S} \Vert \bE^S \poprisk{W})$. This was also observed by~\citet[Proposition 2.1]{germain2009pac} and proved with different techniques than ours by~\citep[Chapter 20]{catoni2015pac} and~\citet[Lemmata E1 and E2]{foong2021tight}, although it was not stated explicitly as a PAC-Bayes bound.

\begin{theorem}
\label{th:catoni_pac_bayes_uniform}
    Consider a loss function $\ell$ with bounded range $[0,1]$ and let $\bQ_W$ be any prior independent of $S$. Then, for every $\beta \in (0,1)$, with probability no smaller than $1-\beta$
    \begin{equation}
        \label{eq:catoni_pac_bayes_uniform}
        \bE^S \poprisk{W} \leq \inf_{\lambda > 0} \Bigg \{\frac{1}{1 - e^{- \frac{\lambda}{n}}} \Bigg[1 - e^{- \frac{\lambda \bE^S \emprisk{W}{S}}{n}   - \frac{\relent(\bP_W^S \Vert \bQ_W) + \log \frac{\xi(n)}{\beta}}{n}} \Bigg] \Bigg \}
    \end{equation}
    holds \emph{simultaneously} for every posterior $\bP_W^S$.
\end{theorem}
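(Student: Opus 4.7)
The plan is to start from the Seeger--Langford bound (Theorem~\ref{th:seeger_langford_pac_bayes}) and relax its left-hand side, the binary relative entropy $\relentber(\bE^S \emprisk{W}{S} \Vert \bE^S \poprisk{W})$, via a one-parameter family of deterministic lower bounds indexed by $\lambda>0$. Crucially, the Seeger--Langford inequality does not involve $\lambda$ at all, so the $1-\beta$ event on which it holds is the same for every $\lambda$; the ``simultaneously for every $\lambda>0$'' part therefore comes for free, with no union bound over a grid of parameters.

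The specific relaxation I would use is obtained from the Donsker--Varadhan variational formula $\relent(P\Vert Q) \geq \bE_P f - \log \bE_Q e^f$ applied to $P=\textnormal{Ber}(p)$ and $Q=\textnormal{Ber}(q)$ with the two-point test function that equals $-\lambda/n$ at $1$ and $0$ at $0$. This yields the deterministic inequality
\begin{equation*}
\relentber(p\Vert q) \;\geq\; -\frac{\lambda p}{n} - \log\!\bigl(1 - q(1-e^{-\lambda/n})\bigr),
\qquad \text{valid for all } p,q\in[0,1],\ \lambda>0,
\end{equation*}
which can equivalently be verified by direct computation.

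Substituting this lower bound into the Seeger--Langford inequality with $p = \bE^S \emprisk{W}{S}$ and $q = \bE^S \poprisk{W}$, and then isolating $\bE^S\poprisk{W}$ (using $1-e^{-\lambda/n}>0$ for $\lambda>0$), produces exactly the expression inside the infimum of~\eqref{eq:catoni_pac_bayes_uniform}. Since the relaxation is purely algebraic and deterministic, and since Seeger--Langford already holds simultaneously for every posterior $\bP_W^S$, the resulting bound holds simultaneously for every posterior \emph{and} every $\lambda>0$ on the same $1-\beta$ event, justifying the infimum over $\lambda>0$. I do not anticipate any serious obstacle: the main choice is which two-point test function to use in Donsker--Varadhan, and this is pinned down by requiring that the dual form reproduces Catoni's exponential shape. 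The small multiplicative cost of $\log\xi(n)$ relative to the original Catoni bound (Theorem~\ref{th:catoni_pac_bayes}) appears automatically because it is already part of the complexity term of Seeger--Langford.
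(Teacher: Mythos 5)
Your proposal is correct and is essentially the paper's own proof: the paper likewise starts from the Seeger--Langford bound and applies the Donsker--Varadhan variational representation to $\relentber(\bE^S \emprisk{W}{S} \Vert \bE^S \poprisk{W})$ with a two-point test function (parametrized there as $g_0,g_1$ with $\lambda = n(g_0-g_1)$, which reduces to your choice $g_0=0$, $g_1=-\lambda/n$), before solving for $\bE^S\poprisk{W}$. Your observation that uniformity in $\lambda$ is free because the Seeger--Langford event does not depend on $\lambda$ is exactly the argument the paper makes.
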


\begin{proof}
    Consider the Seeger--Langford bound from \Cref{th:seeger_langford_pac_bayes}. Applying the \citet[Lemma 2.1]{donsker1975asymptotic}'s variational representation from~\Cref{lemma:dv-var-rep} to the left hand side of~\eqref{eq:seeger_langford_pac_bayes} results in
    \begin{align*}
        \relentber(\bE^S \emprisk{W}{S} \Vert \bE^S \poprisk{W} ) = \sup_{g_0, g_1 \in (-\infty, \infty)} \bigg \{ \bE^S &\emprisk{W}{S} g_1 + (1 - \bE^S \emprisk{W}{S}) g_0 \\&- \log \Big[ \bE^S \poprisk{W} e^{g_1} + (1 - \bE^S \poprisk{W})e^{g_0} \Big]  \bigg \},
    \end{align*}
    where we defined $g_0 \coloneqq g(0)$ and $g_1 \coloneqq g(1)$.
    Re-arranging the terms and plugging them into~\eqref{eq:seeger_langford_pac_bayes} states that with probability no smaller than $1 - \beta$
    \begin{align*}
        \sup_{g_0, g_1 \in (-\infty, \infty)}  \bigg \{ g_0 + \bE^S \emprisk{W}{S} (g_1 - g_0) - \log \Big[e^{g_0} + \bE^S \poprisk{W} (e^{g_1} - e^{g_0}) \Big]
        \bigg \} \leq \mathfrak{C}_{n, \nicefrac{\beta}{\xi(n)}, S}
    \end{align*}
    holds \emph{simultaneously} for every posterior $\bP_W^S$.
    Note that, similarly to \citet{thiemann2017strongly}'s result from \Cref{th:thiemann_pac_bayes}, the bound holds \emph{simultaneously} for all values of the parameters $g_0$ and $g_1$, and therefore these parameters can be chosen adaptively, that is, different values of $g_0$ and $g_1$ can be chosen for different realizations of the training set $s$. Therefore, with probability no smaller than $1 - \beta$
        \begin{equation*}
             -(g_0 - g_1) \bE^S \emprisk{W}{S} - \log \Big(1 - \big(1 - e^{-(g_0-g_1)}\big) \bE^S \poprisk{W} \Big)  \leq \mathfrak{C}_{n, \nicefrac{\beta}{\xi(n)}, S}
        \end{equation*}
    \emph{simultaneously} for every posterior $\bP_W^S$ and all $g_0$ and $g_1$ in $\bR$. Letting $\lambda \coloneqq n (g_0 - g_1)$ and re-arranging the terms in the equation it follows that, with probability no smaller than $1  - \beta$
    \begin{equation*}
        \bE^S \poprisk{W} \leq \frac{1}{1 - e^{-\frac{\lambda}{n}}} \Bigg[1 - e^{- \frac{\lambda \bE^S \emprisk{W}{S}}{n}    - \frac{\relent(\bP_W^S \Vert \bQ_W) + \log \frac{\xi(n)}{\beta}}{n}} \Bigg]
    \end{equation*}
    \emph{simultaneously} for every posterior $\bP_W^S$ and all $\lambda > 0$. The restriction to $\lambda > 0$ instead of $\lambda \in \bR$ comes from the fact that if $\lambda < 0$, then the resulting inequality is a lower bound instead of an upper bound. 
\end{proof}

The bound in \Cref{th:catoni_pac_bayes_uniform} is an explicit expression of the Seeger--Langford bound~\citep{langford2001bounds,seeger2002pac} in terms of $\bE^S \emprisk{W}{S}$ and $\relent(\bP_W^S \Vert \bQ_W)$. Compared to \citet{catoni2007pac}'s \Cref{th:catoni_pac_bayes}, this bound holds \emph{simultaneously} for all $\lambda>0$, making it useful for finding numerical population risk certificates without the need to pay an extra price for the parameter search. It also allows for an iterative procedure for obtaining a good posterior by updating the posterior $\bP_W^S$ and parameter $\lambda$ alternately. We note that, contrary to the statment from the Seeger--Langford bound in \Cref{th:seeger_langford_pac_bayes}, this statment tells us that the optimal posterior is given by the Gibbs distribution $\bP_W^S(w) \propto \bQ_W(w) \cdot e^{- \lambda \emprisk{w}{S}}$. However, finding the global optimum for the parameter $\lambda$ is tedious, and the function is not convex in that parameter.

We may %
massage \Cref{th:catoni_pac_bayes_uniform} to obtain a simpler, more interpretable fast-rate bound.

\begin{theorem}
\label{th:fast_rate_bound_strong}
    Consider a loss function $\ell$ with bounded range $[0,1]$ and let $\bQ_W$ be any prior independent of $S$. Then, for every $\beta \in (0,1)$, with probability no smaller than $1-\beta$
    \begin{equation}
        \label{eq:fast_rate_bound_strong}
        \bE^S \poprisk{W} \leq \inf_{\substack{\gamma > 1 \\c \in (0,1]}} \Bigg \{ c \gamma \log \Big(\frac{\gamma}{\gamma - 1} \Big) \cdot \bE^S \emprisk{W}{S} + c \gamma \cdot \frac{\relent(\bP_W^S \Vert \bQ_W) + \log \frac{\xi(n)}{\beta}}{n} + \kappa(c) \gamma  \Bigg \}
    \end{equation}
    holds \emph{simultaneously} for every posterior $\bP_W^S$,
    where $\kappa(c) \coloneqq 1 - c(1 - \ln c) $.
\end{theorem}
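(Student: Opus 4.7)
The plan is to take \Cref{th:catoni_pac_bayes_uniform} and linearize the $1-e^{-x}$ factor in the numerator using a one-parameter tangent-line upper bound indexed by $c$, then reparameterize $\lambda$ through $\gamma$ so that the denominator $1-e^{-\lambda/n}$ is replaced by $1/\gamma$.

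The key deterministic lemma I would prove first is: for every $x \geq 0$ and every $c \in (0,1]$,
\begin{equation*}
    1 - e^{-x} \;\leq\; c x + \kappa(c), \qquad \kappa(c) = 1 - c + c\ln c.
\end{equation*}
This is a Fenchel-type inequality; I would prove it by studying $g(x) \coloneqq cx + \kappa(c) - (1-e^{-x})$, whose derivative $c - e^{-x}$ vanishes at $x^\star = -\ln c \geq 0$. At that point $g(x^\star) = -c\ln c + 1 - c + c\ln c - 1 + c = 0$, and $g$ is convex, so $g \geq 0$ everywhere. Equality holds at $x = -\ln c$, and the envelope recovered by minimizing the right-hand side over $c \in (0,1]$ is exactly $1-e^{-x}$, so no tightness is lost in principle.

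Next, I would apply this lemma pointwise (for every realization of $S$ and every posterior) to the exponent $x = \lambda \bE^S \emprisk{W}{S}/n + \mathfrak{C}_{n,\beta/\xi(n),S}$ appearing in~\eqref{eq:catoni_pac_bayes_uniform}; since $x \geq 0$, this is legal. Because $1 - e^{-\lambda/n} > 0$, I can divide through without flipping the inequality and obtain, for every $\lambda > 0$ and every $c \in (0,1]$,
\begin{equation*}
    \bE^S \poprisk{W} \leq \frac{c\lambda}{n(1-e^{-\lambda/n})}\cdot \bE^S \emprisk{W}{S} + \frac{c}{1-e^{-\lambda/n}}\cdot \mathfrak{C}_{n,\beta/\xi(n),S} + \frac{\kappa(c)}{1-e^{-\lambda/n}}.
\end{equation*}
Now I would make the substitution $\gamma \coloneqq 1/(1-e^{-\lambda/n})$. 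As $\lambda$ ranges over $(0,\infty)$, $\gamma$ ranges over $(1,\infty)$, and inverting gives $\lambda/n = \log(\gamma/(\gamma-1))$. Plugging this in yields exactly the RHS of~\eqref{eq:fast_rate_bound_strong} before taking the infimum over $\gamma$ and $c$.

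The last step is to justify that the infimum over the joint pair $(\gamma, c) \in (1,\infty)\times(0,1]$ can be taken inside the high-probability event. This is immediate: \Cref{th:catoni_pac_bayes_uniform} already delivers simultaneity over $\lambda > 0$ (equivalently, over $\gamma > 1$), and the inequality $1-e^{-x} \leq cx + \kappa(c)$ is deterministic and holds for every $c \in (0,1]$ simultaneously. Hence both parameters may be chosen adaptively to the data. I do not anticipate any genuine obstacle beyond bookkeeping; the only non-routine piece is noticing that $\kappa$ is the correct conjugate expression that makes the tangent-line bound on $1-e^{-x}$ tight, and recognizing the natural substitution that converts $\lambda$ into $\gamma$ and produces the $\gamma\log(\gamma/(\gamma-1))$ coefficient on the empirical risk.
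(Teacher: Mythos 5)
Your proposal is correct and follows essentially the same route as the paper's proof: the tangent-line bound $1-e^{-x}\leq cx+\kappa(c)$ is exactly the paper's envelope $1-e^{-x}=\inf_{a>0}\{e^{-a}x+1-e^{-a}(1+a)\}$ written directly in the variable $c=e^{-a}$, and the substitution $\gamma=(1-e^{-\lambda/n})^{-1}$ is the same change of variable the paper uses. The only difference is that you spell out the verification of the envelope inequality and the non-negativity of the exponent, which the paper leaves implicit.
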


\begin{proof}
The bound follows by noting that the function $1 - e^{-x}$ is a non-decreasing, concave, continuous function for all $x > 0$ and therefore can be upper bounded by its envelope, that is, $1 - e^{-x} = \inf_{a > 0} \{ e^{-a} x + 1 - e^{-a} (1 + a) \}$. Using the envelope in~\eqref{eq:catoni_pac_bayes_uniform}, the changes of variable $a \coloneqq \log \nicefrac{1}{c}$ and $\lambda \coloneqq n \log \nicefrac{\gamma}{(\gamma-1)}$, and noting that $c = e^{-a} \in (0,1]$ and $\gamma = (1 - e^{- \frac{\lambda}{n}})^{-1} > 1$ completes the proof.
\end{proof}

The parameter $\gamma$ controls the influence of the empirical risk compared to the normalized dependence-confidence: if the empirical risk is large relative to the normalized dependence-confidence, then $\gamma$ is larger and the normalized dependence-confidence coefficient increases, if instead the empirical risk is small or even close to interpolation, then $\gamma$ is close to $1$ and the empirical risk coefficient increases. In particular, for a fixed value of $c$, the optimal value of $\gamma$ is
\begin{align*}
	\gamma &= 1 + \left[ -1 - \mathtt{W} \left( - \exp \left(-1 - \frac{c\cdot \frac{\relent(\bP_W^S \Vert \bQ_W) + \log \frac{\xi(n)}{\beta}}{n} + \kappa(c)}{c \cdot \bE \big[ \emprisk{W}{S} \big]} \right) \right) \right]^{-1} \\
	&\approx 1 + \left[ \sqrt{2 \cdot \frac{c\cdot \frac{\relent(\bP_W^S \Vert \bQ_W) + \log \frac{\xi(n)}{\beta}}{n} + \kappa(c)}{c \cdot \bE \big[ \emprisk{W}{S} \big]}} +  \frac{5}{6} \cdot \frac{c\cdot \frac{\relent(\bP_W^S \Vert \bQ_W) + \log \frac{\xi(n)}{\beta}}{n} + \kappa(c)}{c \cdot \bE \big[ \emprisk{W}{S} \big]}\right]^{-1},
\end{align*}
where we considered the Lambert W function $\mathtt{W}$ and \citet{chatzigeorgiou2013bounds}'s approximation of the $-1$ branch.

The parameter $c \in (0,1]$ controls how much weight is given to the empirical risk and normalized dependence-confidence terms compared to a bias. For larger values of the empirical risk and the normalized dependence-confidence term, the value of $c$ is small, decreasing their contribution to the bound and increasing the contribution of the bias $\kappa(c) \in [0,1)$. If the empirical risk and the normalized dependence-confidence term are smaller, then the value of $c$ approaches $1$, where the contribution of these two terms is only controlled by $\gamma$ and the bias is $0$. In fact, a weaker version of~\Cref{th:fast_rate_bound_strong} can be obtained considering this small empirical risk and small normalized dependence-confidence regime by letting $c = 1$.

\begin{corollary}
\label[corollary]{cor:fast_rate_bound}
    Consider a loss function $\ell$ with bounded range $[0,1]$ and let $\bQ_W$ be any prior independent of $S$. Then, for every $\beta \in (0,1)$, with probability no smaller than $1-\beta$
    \begin{equation}
        \label{eq:fast_rate_bound}
        \bE^S \poprisk{W} \leq \inf_{\gamma > 1} \Bigg \{ \gamma \log \Big(\frac{\gamma}{\gamma - 1} \Big) \cdot \bE^S \emprisk{W}{S} + \gamma \cdot \frac{\relent(\bP_W^S \Vert \bQ_W) + \log \frac{\xi(n)}{\beta}}{n}  \Bigg \}
    \end{equation}
    holds \emph{simultaneously} for every posterior $\bP_W^S$.
\end{corollary}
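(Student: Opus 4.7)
The plan is to derive \Cref{cor:fast_rate_bound} as an immediate specialization of \Cref{th:fast_rate_bound_strong} by restricting the joint infimum to a one-dimensional slice. The right-hand side of~\eqref{eq:fast_rate_bound_strong} has the form $\inf_{\gamma > 1,\, c \in (0,1]} F(\gamma, c)$ for the displayed function $F$, and the high-probability event of \Cref{th:fast_rate_bound_strong} is already uniform in both parameters as well as in the posterior. Consequently, fixing any admissible value of $c$ and then taking the infimum only over $\gamma > 1$ yields a (possibly looser) bound that still holds with probability no smaller than $1 - \beta$ simultaneously for every posterior $\bP_W^S$.

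The distinguished choice is $c = 1$. At this endpoint, the bias function $\kappa(c) = 1 - c(1 - \ln c)$ evaluates to $\kappa(1) = 1 - 1 \cdot (1 - \ln 1) = 0$, which annihilates the additive term $\kappa(c)\gamma$. Simultaneously, each factor $c\gamma$ collapses to $\gamma$. Substituting $c = 1$ into~\eqref{eq:fast_rate_bound_strong} therefore reproduces~\eqref{eq:fast_rate_bound} verbatim. There is no genuine obstacle here: the corollary simply isolates the small-empirical-risk / small-normalized-dependence-confidence regime discussed in the paragraph preceding the statement, where taking $c = 1$ is optimal precisely because the bias $\kappa(c)$ has nothing left to trade off.
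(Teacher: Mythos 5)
Your proposal is correct and coincides with the paper's own (implicit) argument: the text immediately preceding \Cref{cor:fast_rate_bound} states that the corollary is obtained from \Cref{th:fast_rate_bound_strong} by letting $c = 1$, exactly as you do, using $\kappa(1) = 0$ and the uniformity of the event in both parameters. Nothing further is needed.
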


Interestingly, this result can also be obtained using the variational representation of the relative entropy based on $f$-divergences~\citep[Example 7.5]{polyanskiy2022lecture} as shown in \Cref{app:alternative_proof_fast_rate}. That is, the Seeger--Langford bound (\Cref{th:seeger_langford_pac_bayes}), the strengthed Catoni's bound (\Cref{th:catoni_pac_bayes_uniform}), and this fast-rate bound (\Cref{th:fast_rate_bound_strong}) are equally tight. This is important since it means that the Seeger--Langford bound~\citep{langford2001bounds, seeger2002pac} can be exactly described with a linear combination of the empirical risk and the dependence-confidence term, where the coefficients of this combination and the bias vary depending on the data realization. This could have been hypothesized observing the derivatives of the Seeger--Langford bound~\citep{langford2001bounds,seeger2002pac} from \citet[Appendix A]{reeb2018learning}, and a proof is now available. 
Furthermore, the optimal posterior of this bound is given by the Gibbs distribution $\bP_W^S(w) \propto \bQ_W(w) \cdot e^{-n \log \big(\frac{\gamma}{\gamma -1}\big) \emprisk{w}{S}}$, where the value of $\gamma$ depends on the dataset realization $s$.%

The bound improves upon \citet{thiemann2017strongly}'s \Cref{th:thiemann_pac_bayes} as it is tighter for all values of the empirical risk and the dependency measure (see \Cref{subapp:comparison_fast_rate_bounds}). For instance, the value $\lambda = 1$ minimizes the multiplicative factor in the dependence-confidence term in \Cref{th:thiemann_pac_bayes}. Letting $\gamma = 2$ in \Cref{cor:fast_rate_bound} matches this factor and improves the multiplicative factor of the empirical risk from 2 to $2 \log 2 \approx 1.38$. Moreover, if we are in the \emph{realizable setting} and $\bE^S \emprisk{W}{S} = 0$ (that is, we are using an empirical risk minimizer), then letting $\gamma \to 1^+$ in this bound reveals that the fast rate can be achieved with multiplicative factor $1$, clarifying that the dependence-confidence term completely characterizes the population risk in this regime. Note that this is not clear  in \citet{thiemann2017strongly}'s nor \citet{rivasplata2019pac}'s bounds, where the multiplicative factor is 2.

However, substituting the value of the optimal $\gamma$ into~\eqref{eq:fast_rate_bound_strong} or~\eqref{eq:fast_rate_bound} does not produce an interpretable bound. Nonetheless, the bound in \Cref{cor:fast_rate_bound} can be further relaxed to obtain a parameter-free mixed-rate bound that is tighter than \citet{rivasplata2019pac}'s mixed-rate and \citet{thiemann2017strongly}'s fast-rate bounds (see \Cref{subapp:comparison_fast_rate_bounds}).

\begin{theorem}[{\bf mixed-rate bound}]
\label{th:mixed_rate_bound}
    Consider a loss function $\ell$ with bounded range $[0,1]$ and let $\bQ_W$ be any prior independent of $S$. Then, for every $\beta \in (0,1)$, with probability no smaller than $1-\beta$
    \begin{equation}
        \label{eq:mixed_rate_bound}
        \bE^S \poprisk{W} \leq  \bE^S \emprisk{W}{S} +  \frac{\relent(\bP_W^S \Vert \bQ_W) + \log \frac{\xi(n)}{\beta}}{n}  + \sqrt{2 \bE^S \emprisk{W}{S} \cdot \frac{\relent(\bP_W^S \Vert \bQ_W) + \log \frac{\xi(n)}{\beta}}{n}}
    \end{equation}
    holds \emph{simultaneously} for every posterior $\bP_W^S$.
\end{theorem}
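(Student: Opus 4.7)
The plan is to start from the parameter-free fast-rate bound in \Cref{cor:fast_rate_bound} and relax it in a way that allows a tractable, closed-form optimization over $\gamma$. Let me abbreviate $\hat{R} \coloneqq \bE^S \emprisk{W}{S}$ and $\mathfrak{C} \coloneqq \big(\relent(\bP_W^S \Vert \bQ_W) + \log \nicefrac{\xi(n)}{\beta}\big)/n$ so that \eqref{eq:fast_rate_bound} reads $\bE^S \poprisk{W} \leq \inf_{\gamma > 1}\{\gamma \log(\gamma/(\gamma-1)) \hat{R} + \gamma \mathfrak{C}\}$. The coefficient $\gamma \log(\gamma/(\gamma-1))$ is awkward because its infimum (reached as $\gamma \to \infty$) is $1$, but it does not admit a closed-form minimization when combined with $\gamma \mathfrak{C}$. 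The key step is therefore to replace this coefficient by a simpler upper bound of the form $1 + \frac{1}{2(\gamma - 1)}$.

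The analytical inequality I would use is
\begin{equation*}
    \gamma \log \Bigl(\frac{\gamma}{\gamma-1}\Bigr) \;\leq\; 1 + \frac{1}{2(\gamma-1)} \qquad \text{for all } \gamma > 1.
\end{equation*}
Writing $x \coloneqq 1/(\gamma-1) > 0$, this is equivalent to $\log(1+x) \leq (x + x^2/2)/(1+x)$, which follows by noting that both sides vanish at $x = 0$ and that the derivative of the difference equals $-x^2/\bigl(2(1+x)^2\bigr) \leq 0$ on $(0,\infty)$. Substituting this inequality into \Cref{cor:fast_rate_bound} yields, with probability no less than $1-\beta$ and simultaneously for all posteriors and all $\gamma > 1$,
\begin{equation*}
    \bE^S \poprisk{W} \;\leq\; \hat{R} + \mathfrak{C} + \frac{\hat{R}}{2(\gamma-1)} + (\gamma-1)\mathfrak{C}.
\end{equation*}

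Setting $\delta \coloneqq \gamma - 1 > 0$ and optimizing the remaining terms, the AM--GM inequality (equivalently, zeroing the derivative at $\delta^\star = \sqrt{\hat{R}/(2\mathfrak{C})}$) gives $\hat{R}/(2\delta) + \delta \mathfrak{C} \geq \sqrt{2 \hat{R} \mathfrak{C}}$ with equality at $\delta^\star$. Since the fast-rate bound holds simultaneously for every $\gamma > 1$, I can pick $\delta = \delta^\star$ in a data-dependent way (handling the degenerate case $\hat{R} = 0$ by a limiting argument, where the bound reduces trivially to $\hat{R} + \mathfrak{C}$, and the case $\mathfrak{C} = 0$ analogously). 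Plugging the optimum back in yields exactly \eqref{eq:mixed_rate_bound}.

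The main obstacle, and the only genuinely non-mechanical step, is identifying the relaxation $\gamma\log(\gamma/(\gamma-1)) \leq 1 + 1/(2(\gamma-1))$: a crude bound like $\gamma \log(\gamma/(\gamma-1)) \leq \gamma/(\gamma-1)$ would produce $\hat R + \mathfrak{C} + 2\sqrt{\hat R \mathfrak{C}} = (\sqrt{\hat R}+\sqrt{\mathfrak{C}})^2$, which is strictly weaker than the claimed $\sqrt{2\hat R \mathfrak{C}}$ cross term and would merely reproduce the looser form \citep[eq. (3)]{tolstikhin2013pac}. Picking up the factor $1/2$ inside the square root is what matches \citet{rivasplata2019pac}'s mixed-rate structure while shaving off the extra $[\mathfrak{C}]^2$ term inside the root that appears in \Cref{th:rivasplata_pac_bayes}. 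Everything else is a routine substitution and a one-variable minimization.
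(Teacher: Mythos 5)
Your proposal is correct and follows essentially the same route as the paper: the relaxation $\gamma \log\bigl(\tfrac{\gamma}{\gamma-1}\bigr) \leq 1 + \tfrac{1}{2(\gamma-1)}$ is precisely the paper's inequality $\log x \leq \tfrac{1}{2}(x - \nicefrac{1}{x})$ evaluated at $x = \tfrac{\gamma}{\gamma-1}$ (note $\tfrac{1}{2}\cdot\tfrac{2\gamma-1}{\gamma-1} = 1 + \tfrac{1}{2(\gamma-1)}$), and the subsequent data-dependent choice $\gamma = 1 + \sqrt{\nicefrac{\bE^S \emprisk{W}{S}}{2\mathfrak{C}_{n,\nicefrac{\beta}{\xi(n)},S}}}$ with the limiting treatment of the interpolation case matches the paper's optimization step exactly.
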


\begin{proof}
    Using the inequality $\log x \leq \frac{1}{2} (x - \nicefrac{1}{x})$ for all $x \geq 1$ in~\eqref{eq:fast_rate_bound} establishes that  with probability no smaller than $1 - \beta$
    \begin{equation}
        \label{eq:fast_rate_bound_relaxed}
        \bE^S \poprisk{W} \leq \inf_{\gamma > 1} \Bigg \{ \frac{1}{2} \cdot \frac{2 \gamma - 1}{\gamma - 1} \cdot \bE^S \emprisk{W}{S} + \gamma \cdot \frac{\relent(\bP_W^S \Vert \bQ_W) + \log \frac{\xi(n)}{\beta}}{n}  \Bigg \}.
    \end{equation}
    If $\bE^S \emprisk{W}{S} > 0$, the optimal $\gamma$, which we recall can now be chosen adaptively as the bound holds simultaneously for all $\gamma > 1$, is $\gamma = 1 + \big( \nicefrac{\bE^S \emprisk{W}{S}}{2 \mathfrak{C}_{n, \nicefrac{\beta}{\xi(n)}, S}}\big)^{\nicefrac{1}{2}}$%
    . Substituting this parameter yields~\eqref{eq:mixed_rate_bound}. If $\bE^S \emprisk{W}{S} = 0$, letting $\gamma \to 1^+$ is optimal, which also agrees with the bound in~\eqref{eq:mixed_rate_bound}.
\end{proof}

The mixed-rate bound presented in \Cref{th:mixed_rate_bound} provides a deeper insight into the relationship between the population risk, the empirical risk, and the dependence-confidence term. The bound grows linearly with both the empirical risk and the dependence-confidence term, with a correction term that reflects their interaction. Importantly, the bound is symmetric in these two terms, giving them equal importance. This may be beneficial for methods using PAC-Bayes bounds to optimize the posterior, such as PAC-Bayes with backprop~\citep{rivasplata2019pac, perez2021tighter}, where using the bound from \citet{thiemann2017strongly} or \citet{rivasplata2019pac} alone may cause the algorithm to disregard posteriors farther from the prior but that achieve lower population risk (see \Cref{subapp:pbb}).

\section{PAC-Bayes bounds beyond bounded losses}
\label{sec:pac_bayes_beyond_bounded_losses}

This section is divided into two parts. In \Cref{subsec:losses_more_general_tail_behaviors}, we describe what a ``more general tail behavior'' means and review existing approaches to address this problem. In \Cref{subsec:pac_bayes_bound_bounded_cgf_or_bounded_2nd_moment}, we present a PAC-Bayes analogue to Chernoff's inequality and a bound only requiring that the loss has a bounded second moment. Both bounds are obtained using a new technique for optimizing parameters that need to be selected \emph{before} the draw of the data, but for which optimal solution depends on the data realization. 

\subsection{What are losses with more general tail behaviors?}
\label{subsec:losses_more_general_tail_behaviors}

Probably, the most natural extension of a loss with a bounded range is a subgaussian loss~\citep[Chapter 2]{wainwright2019high}. The loss $\ell(w,Z)$ is \emph{$\sigma^2$-subgaussian} if it is concentrated around its mean with high probability. More precisely, it is \emph{at least} as concentrated as if it were Gaussian with variance $\sigma^2$. That is, for all $w \in \cW$, with probability no smaller than $1 - \beta$ it holds that $\bE \ell(w,Z) \leq \ell(w,Z) + \sqrt{2 \sigma^2 \log \nicefrac{1}{\beta}}$. Notice that this definition of $\sigma^2$-subgaussian is equivalent to definitions expressed in terms of tail probabilities of the random variable $\ell(w,Z)$, or in terms of the moments of this random variable (see~\citet[Theorem 2.6]{wainwright2019high}).

\citet{hellstrom2020generalization} and \citet{guedj2021still} obtained parameterized bounds for subgaussian losses. They also provided a parameter-free version of the bound optimizing the parameter. However, the optimization contained a small mistake, as the parameter needs to be selected \emph{before} the draw of the data, and the value they chose depended on the data realization~\citep[Remark 14]{banerjee2021information}. \citet{hellstrom2021corrections} later resolved this issue to obtain analogous PAC-Bayes bounds employing properties unique to subgaussian random variables~\citep[Theorem 2.6]{wainwright2019high}. \citet{esposito2021generalization} also derived PAC-Bayes bounds for this setting considering different dependency measures.

A step further beyond subgaussian losses are subexponential ones~\citep[Chapter 2]{wainwright2019high}. The loss $\ell(w,Z)$ is subexponential if the probability that it is not concentrated around its mean is exponentially small. That is, for all $w \in \cW$, with probability no smaller than $1-\beta$ it holds that $\bE \ell(w,Z) \leq \ell(w,Z) + \nicefrac{1}{c_2} \log \nicefrac{c_1}{\beta}$ for some $c_1, c_2 > 0$. Note that if a loss is subgaussian, then it is immediately subexponential by letting, for example, $c_1 = e^{\nicefrac{1}{4}}$ and $c_2 = \sqrt{\nicefrac{1}{2\sigma^2}}$.

\citet{catoni2004statistical} derived PAC-Bayes bounds for subexponential losses. However, these bounds are limited to the squared error loss in regression scenarios, where $\cZ = \cX \times \bR$  and the hypothesis $w$ represents the parameters of a regressor $\phi_w: \cX \to \bR$. The analysis assumes that the regressor is finite, that is, that $\lVert \phi_w \rVert_\infty < \infty$ for all $w \in \cW$. Additionally, the derived bounds also rely on a parameter that must be chosen \emph{before} the draw of the data.

These extensions can be generalized with the concept of \emph{cumulant generating function} (CGF). The CGF of a random variable $X$ is defined as $\Lambda_X(\lambda) \coloneqq \log \bE \big[ e^{\lambda (X - \bE X)} \big]$ for all $\lambda \in \bR$. If it exists, it completely characterizes the random variable's distribution, it is convex, continuously differentiable, and $\Lambda_X(0) = \Lambda_X'(0) = 0$~\citep{zhang2006information,banerjee2021information}. We say that the CGF exits if it is bounded in $(-b,b)$ for some $b > 0$.

\begin{definition}[\bf Bounded CGF]
\label[definition]{ass:bounded_cgf}
A loss function $\ell$ is of \emph{bounded CGF} if for all $w \in \cW$, there is a convex and continuously differentiable function $\psi(\lambda)$ defined on $[0, b)$ for some $b \in \bR_+$ such that $\psi(0) = \psi'(0) = 0$ and $\Lambda_{-\ell(w,Z)}(\lambda) \leq \psi(\lambda)$ for all $\lambda \in [0,b)$.
\end{definition}

Notice that we say that a loss has a bounded CGF if the CGF of $- \ell(w,Z)$ is dominated by $\psi$. The reason is that we are interested in bounding from above the random variable $(\poprisk{w} - \ell(w,Z))$, while the CGF of the loss $\ell(w,Z)$ characterizes the tails of the random variable $(\ell(w,Z))- \poprisk{w})$, where we recall that $\poprisk{w} = \bE \ell(w,Z)$.

Under this assumption, the \emph{Cramér--Chernoff} method establishes that with probability no smaller than $1 - \beta$ it holds that $\bE \ell(w,Z) \leq \ell(w,Z) + \psi_*^{-1}(\log \nicefrac{1}{\beta})$~\citep[Section 2.3]{boucheron2013concentration}, where $\psi_*$ is the \emph{convex conjugate} of the function $\psi$ dominating the CGF, and $\psi_*^{-1}$ is its inverse. More details about the convex conjugate and its inverse are given in \Cref{subapp:convex_conjugates_and_inverses}, but as illustrative examples, we have that for bounded losses $\psi_*^{-1}(y) = \sqrt{\nicefrac{y}{2}}$ and for $\sigma^2$-subgaussian losses $\psi_*^{-1}(y) = \sqrt{2\sigma^2 y}$.%

\citet{banerjee2021information} built on \citep{zhang2006information} to prove a parameterized PAC-Bayes bound for losses with bounded CGF. The parameter must be chosen \emph{before} drawing the data, similarly to the bounds in~\citep{hellstrom2020generalization, guedj2021still}, which is a known standing issue in PAC-Bayes literature~\citep{alquier2021user}. A slight extension is given below.

\begin{theorem}[{\bf \citet[Theorem 6]{banerjee2021information}}]
\label{lemma:extension_banerjee}
Consider a loss function $\ell$ with a bounded CGF in the sense of \Cref{ass:bounded_cgf}. Let $\bQ_W$ be any prior independent of $S$. Then, for every $\beta \in (0,1)$ and every $\lambda \in (0,b)$, with probability no smaller than $1-\beta$
\begin{equation}
    \label{eq:pac_bayes_cgf_with_lambda}
    \bE^S \poprisk{W} \leq \bE^S \emprisk{W}{S} + \frac{1}{\lambda} \Bigg[ \frac{\relent(\bP_{W}^{S} \Vert \bQ_W) + \log \frac{1}{\beta}}{n} + \psi(\lambda) \Bigg]
\end{equation}
holds \emph{simultaneously} for every posterior $\bP_W^S$.
\end{theorem}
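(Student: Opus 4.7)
The plan is to execute the standard PAC-Bayes ``change-of-measure plus Markov'' recipe, with the moment generating function control supplied by the bounded CGF assumption.

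First, I would fix $\lambda \in (0,b)$ and apply the Donsker--Varadhan variational representation of the relative entropy (\Cref{lemma:dv-var-rep}) to the test function $h(w,s) \coloneqq \lambda n (\poprisk{w} - \emprisk{w}{s}) - n\psi(\lambda)$. This yields, simultaneously for every posterior $\bP_W^S$,
\begin{equation*}
\lambda n \bE^S \big[\poprisk{W} - \emprisk{W}{S}\big] - n\psi(\lambda) \leq \relent(\bP_W^S \Vert \bQ_W) + \log \bE_{W' \sim \bQ_W}\Big[e^{\lambda n (\poprisk{W'} - \emprisk{W'}{S}) - n\psi(\lambda)}\Big].
\end{equation*}

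Second, I would control the inner expectation of the exponential in expectation over $S$. For any fixed $w' \in \cW$, since $Z_1, \ldots, Z_n$ are i.i.d.\ and \Cref{ass:bounded_cgf} gives $\Lambda_{-\ell(w',Z)}(\lambda) \leq \psi(\lambda)$,
\begin{equation*}
\bE\big[e^{\lambda n (\poprisk{w'} - \emprisk{w'}{S})}\big] = \prod_{i=1}^n \bE\big[e^{\lambda(\poprisk{w'} - \ell(w', Z_i))}\big] \leq e^{n\psi(\lambda)}.
\end{equation*}
Since $\bQ_W$ is independent of $S$, Fubini's theorem then gives $\bE\big[ \bE_{W' \sim \bQ_W} e^{\lambda n (\poprisk{W'} - \emprisk{W'}{S}) - n\psi(\lambda)}\big] \leq 1$.

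Third, I would apply Markov's inequality to the nonnegative random variable $Y(S) \coloneqq \bE_{W' \sim \bQ_W}\big[e^{\lambda n (\poprisk{W'} - \emprisk{W'}{S}) - n\psi(\lambda)}\big]$ to conclude that with probability at least $1-\beta$ over the draw of $S$, $\log Y(S) \leq \log(1/\beta)$. Plugging this into the variational inequality from the first step and dividing by $\lambda n > 0$ gives the claimed bound, simultaneously for all posteriors.

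The main obstacle is essentially bookkeeping: the argument is a textbook deployment of change-of-measure followed by Markov. The two subtle points worth flagging are (i) stating the CGF assumption for $-\ell(w,Z)$ rather than $\ell(w,Z)$, so that the direction of the exponential inequality matches the upper tail of $\poprisk{w} - \emprisk{w}{S}$ that we want to control, and (ii) requiring $\lambda > 0$ strictly, so that the final rearrangement by division by $\lambda n$ preserves the inequality direction. Note also that because the right-hand side of the Donsker--Varadhan step does not depend on $\bP_W^S$ outside of the $\relent$ term, the Markov event can be chosen uniformly in the posterior, which is what yields the ``simultaneously for every posterior'' qualifier.
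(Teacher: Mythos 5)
Your proposal is correct and follows essentially the same route as the paper's proof: a change of measure from posterior to prior, tensorization of the CGF bound over the i.i.d.\ samples, and Markov's inequality applied to the prior-averaged exponential moment, with the event independent of the posterior so that uniformity over $\bP_W^S$ is free. The only (cosmetic) differences are that you invoke the Donsker--Varadhan form (\Cref{lemma:dv-var-rep}) rather than its dual, the Gibbs variational principle (\Cref{lemma:gvp}) used in \Cref{subapp:proof_extension_banerjee}, and that you scale the exponent by $\lambda n$ with $\lambda \in (0,b)$ up front instead of working with $\lambda \in (0, \nicefrac{b}{n})$ and rescaling at the end.
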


\noindent{\bf Proof sketch\ }
The proof follows similarly to the first part of \citet[Proposition 3]{guedj2021still}'s proof with the addition of the ideas from \citet{bu2020tightening} and is given in \Cref{subapp:proof_extension_banerjee}.    
\hfill\BlackBox\\[2mm]

A further generalization of the tail behavior of the loss is to consider its moments. The \emph{$m$-th moment} of the loss is defined as $\bE \ell(w,Z)^{m}$. If the CGF exists (that is, it is bounded on an interval around zero), then all the moments are bounded. However, the reverse is not true: for instance, the Pareto distribution of the first kind with parameters $a=3$ and $k=1$ does not have a CGF but its variance is $\nicefrac{3}{4}$~\citep[Chapter 20]{johnson1994continuous}, and the lognormal distribution does not have a CGF but all its moments are finite~\citep[Chapter 14]{johnson1994continuous}~\citep{asmussen2016laplace}.
The smaller the moment assumed to be bounded, the weaker the assumption as if $\bE |\ell(w,Z)^m| < \infty$, then $\bE |\ell(w,Z)^l| < \infty$ for all $l \leq m$.

Instead of directly bounding the population risk considering its tail behavior, \citet{alquier2006transductive} proposed to bound it by studying a \emph{truncated} version of the loss. %
That is, let $\ell^-(w,z) \coloneqq \min \{ \ell(w,z), a \}$ and $\ell^+(w,z) \coloneqq \max \{ \ell(w,z) - a , 0 \}$ for some $a \in \bR$ such that $\ell(w,z) \leq \ell^-(w,z) + \ell^+(w,z)$.
Then, one may find a PAC-Bayes bound for the population risk associated to this truncated loss $\ell^-$ using the standard techniques from \Cref{sec:specialized_pac_bayes_bounds_bounded_losses} and translate that into a PAC-Bayes bound for the real loss accounting for the tail probability $\bE^S \ell^+(w,Z)$. Following this strategy, \citet{alquier2006transductive} proposes bounds that still depend on a parameter that needs to be selected \emph{before} the draw of the data. Moreover, similarly to \citet{catoni2004statistical}, for a regression problem with the square loss and a bounded regressor, \citet{alquier2006transductive} shows the effect of the tail probability $\bP[\cA_w^c]$ is not dominant even if the loss is subexponential.

\citet{alquier2018simpler} also developed PAC-Bayes bounds for losses with heavier tails that sometimes work for non i.i.d. data, although they are not of high probability and consider $f$-divergences as the dependency measure. \citet{holland2019pac} found PAC-Bayes bounds for losses with bounded second and third moments, but consider a different estimate than the empirical risk, and their bounds contain a term that may increase with the number of samples $n$. Finally, \citet{kuzborskij2019efron} and \citet{haddouche2023pacbayes} developed bounds for losses with a bounded second moment. The bound from~\citet{haddouche2023pacbayes} is anytime valid but depends on a parameter that needs to be chosen \emph{before} the draw of the training data.

\citet{haddouche2021pac} developed PAC-Bayes bounds under a different generalization, namely the hypothesis-dependent range (HYPE) condition, that is, that there is a function $\kappa$ with positive range such that $\sup_{z \in \cZ} \ell(w,z) \leq \kappa(w)$ for all hypotheses $w \in \cW$, but their bounds decrease at a slower rate %
than \eqref{eq:mcallester_with_germain_and_mauer_pac_bayes}
when they are restricted to the bounded case. Finally, \citet{chugg2023unified} also proved anytime-valid bounds for bounded CGFs and bounded moments, although their bounds contain parameters that need to be chosen \emph{before} the draw of the training data with other technical conditions. 

\subsection{PAC-Bayes bounds for losses with bounded CGF or bounded second moment}
\label{subsec:pac_bayes_bound_bounded_cgf_or_bounded_2nd_moment}

\citet{mcallester1998some, mcallester1999pac, mcallester2003pac}'s PAC-Bayes bound \eqref{eq:mcallester_with_germain_and_mauer_pac_bayes} can be understood as a PAC-Bayes analogue to Hoeffding's inequality for bounded losses~\citep[Theorem 2.8]{boucheron2013concentration}: the two bounds are equivalent except for the dependency term between the hypothesis $W$ and the training set $S$, namely $\relent(\bP_W^S \Vert \bQ_W)$. If we could optimize the parameter $\lambda$ in \Cref{lemma:extension_banerjee}, we would obtain a PAC-Bayes analogue to Chernoff's inequality for losses with a bounded CGF~\citep[Section 2.2]{boucheron2013concentration}. However, this is not possible since the optimal parameter depends on the data realization but needs to be selected \emph{before} the draw of this data~\citep[Remark 14]{banerjee2021information}.

In the following theorem, we present a technique that allows us to bypass this subtlety for a small penalty of $\log n$ or $\log \log n$. The idea is simple: separate the event space into a finite set of events where the optimization can be performed and then pay the union bound price. This can also be seen as optimizing over the set of parameters $\lambda$ that will yield \emph{almost optimal} bounds, and paying the union bound price for the cardinality of that set. In this  case, the event space is separated using a quantization based on the relative entropy $\relent(\bP_W^S \lVert \bQ_W)$ and noting that the event where $\relent(\bP_{W}^{S} \lVert \bQ_W) > n $ is not interesting as the resulting bound is non-decreasing with $n$ given that event.

\begin{theorem}[\bf PAC-Bayes Chernoff analogue]
\label{th:pac_bayes_chernoff_analogue}
Consider a loss function $\ell$ with a bounded CGF in the sense of \Cref{ass:bounded_cgf}. Let $\bQ_W$ be any prior independent of $S$ and define the event $\cE \coloneqq \{ \relent(\bP_{W}^{S} \Vert \bQ_W) \leq n \}$. Then, for every $\beta \in (0,1)$, with probability no smaller than $1-\beta$
\begin{equation}
\label{eq:pac_bayes_chernoff_analogue}
    \bE^S \poprisk{W} \leq \bI_{\cE} \cdot \Bigg[ \bE^S \emprisk{W}{S} + \psi_{*}^{-1} \bigg( \frac{\relent(\bP_{W}^{S} \Vert \bQ_W) + \log \frac{en}{\beta}}{n} \bigg) \Bigg] + \bI_{\cE^c} \cdot \esssup \bE^S \poprisk{W}
\end{equation}
holds \emph{simultaneously} for every posterior $\bP_W^S$,
where $\bI_\cE$ is the indicator function defined as $\bI_\cE(\omega) = 1$ if $\omega \in \cE$ and $\bI_\cE(\omega) = 0$ otherwise.
\end{theorem}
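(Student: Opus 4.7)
The plan is to apply the parameterized bound of \Cref{lemma:extension_banerjee} finitely many times---each time with a fixed, data-independent parameter $\lambda$ chosen to be optimal for one of $n$ representative values of the relative entropy---and then paste these bounds together via a union bound. The Cramér--Chernoff identity $\inf_{\lambda \in (0,b)} (u + \psi(\lambda))/\lambda = \psi_*^{-1}(u)$ produces the closed-form $\psi_*^{-1}$ only if one may select $\lambda$ \emph{after} seeing the data; the point of discretizing the ``interesting'' event $\cE = \{\relent(\bP_W^S\Vert\bQ_W)/n \in [0,1]\}$ into $n$ buckets is to recover this identity up to a small $\log n$ correction in the confidence term, without paying the union-bound cost of a continuum of parameters.

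Concretely, I would partition $[0,1]$ into the buckets $B_k = [k/n, (k+1)/n)$ for $k \in \{0,\ldots,n-1\}$ and, for each $k$, set $u_k \coloneqq (k+1)/n + \log(n/\beta)/n$ and choose a fixed $\lambda_k^\star \in (0,b)$ that (essentially) minimizes $\lambda \mapsto (u_k + \psi(\lambda))/\lambda$, so that this minimum equals $\psi_*^{-1}(u_k)$. Applying \Cref{lemma:extension_banerjee} with each $\lambda_k^\star$ at confidence $\beta/n$, and then taking a union bound over the $n$ events, yields with probability at least $1-\beta$ that for every $k \in \{0,\ldots,n-1\}$ and every posterior $\bP_W^S$
$$\bE^S \poprisk{W} \leq \bE^S \emprisk{W}{S} + \frac{1}{\lambda_k^\star}\bigg[\frac{\relent(\bP_W^S\Vert\bQ_W) + \log(n/\beta)}{n} + \psi(\lambda_k^\star)\bigg].$$

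On the event $\cE$, the quantity $\relent(\bP_W^S\Vert\bQ_W)/n$ lies in some bucket $B_k$; for that $k$, the bracketed expression is upper-bounded by $u_k$, so the right-hand side is at most $\bE^S \emprisk{W}{S} + \psi_*^{-1}(u_k)$ by the defining choice of $\lambda_k^\star$. Since $u_k \leq \relent(\bP_W^S\Vert\bQ_W)/n + 1/n + \log(n/\beta)/n = (\relent(\bP_W^S\Vert\bQ_W) + \log(en/\beta))/n$ and $\psi_*^{-1}$ is non-decreasing, this yields the claimed bound on $\cE$; on $\cE^c$ the inequality is trivial because $\bE^S \poprisk{W} \leq \esssup \bE^S \poprisk{W}$ almost surely. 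The main obstacle is conceptual rather than computational: convincing oneself that a deterministic grid of only $n$ values of $\lambda$ suffices. It works because the monotonicity of $\psi_*^{-1}$ absorbs the $1/n$ rounding error incurred by passing from the true $\relent/n$ to the top $u_k$ of its bucket, so the union-bound penalty costs just the additive $\log n$ inside $\log(en/\beta)$.
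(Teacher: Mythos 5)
Your proposal is correct and follows essentially the same route as the paper's proof: discretize the event $\cE$ into $n$ unit-width buckets of the relative entropy, pick a deterministic (near-)optimal $\lambda_k$ per bucket via the Cramér--Chernoff identity of Lemma 2.4 of Boucheron et al., take a union bound at cost $\log n$, and let the monotonicity of $\psi_*^{-1}$ absorb the $+1$ rounding slack, with the paper merely phrasing the union bound as a conditional decomposition over the buckets rather than over the $n$ parameterized bounds. The only cosmetic difference is that your half-open buckets $B_k=[\nicefrac{k}{n},\nicefrac{(k+1)}{n})$ miss the boundary point $\relent(\bP_W^S \Vert \bQ_W)=n$ of $\cE$, which is fixed by closing the last bucket.
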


\begin{proof}
Let $\cB_\lambda$ be the complement of the event in~\eqref{eq:pac_bayes_cgf_with_lambda} such that $\bP[\cB_\lambda] < \beta$ and consider the sub-events $\cE_1 \coloneqq \{ \relent(\bP_W^S \Vert \bQ) \leq 1 \}$ and $\cE_k \coloneqq \{ \lceil \relent(\bP_W^S \Vert \bQ) \rceil = k \}$ for all $k=2, \ldots, n$, which form a covering of the event $\cE$.\footnote{The notation $\lceil x \rceil$ stands for the ceiling of $x$, that is, the nearest integer larger or equal to $x$.} Furthermore, define $\cK \coloneqq \{k \in \bN : 1 \leq k \leq n \textnormal{ and } \bP[\cE_k] > 0 \}$. For all $k \in \cK$, given the event $\cE_k$, with probability no more than $\bP[\cB_\lambda | \cE_k]$, there exists some posterior $\bP_W^S$ such that
\begin{equation}
    \label{eq:pac_bayes_cgf_with_lambda_and_k}
    \bE^S \poprisk{W} > \bE^S \emprisk{W}{S} + \frac{1}{\lambda} \bigg[ \frac{k + \log \frac{1}{\beta}}{n} + \psi(\lambda) \bigg],
\end{equation}
for all $\lambda \in (0, b)$. The right hand side of \eqref{eq:pac_bayes_cgf_with_lambda_and_k} can be minimized with respect to $\lambda$ \emph{independently of the training set $S$}. Let $\cB_{\lambda_k}$ be the event resulting from this minimization and note that $\bP[\cB_{\lambda_k}] \leq \beta$. According to  \citep[Lemma 2.4]{boucheron2013concentration}, this ensures that with probability no more than $\bP[\cB_{\lambda_k} | \cE_k]$, there exists some posterior $\bP_W^S$ such that
\begin{equation}
    \label{eq:pac_bayes_cgf_with_k}
    \bE^S \poprisk{W} > \bE^S \emprisk{W}{S} +\psi_{*}^{-1} \bigg( \frac{k + \log \frac{1}{\beta}}{n} \bigg),
\end{equation}
where $\psi_*$ is the convex conjugate of $\psi$ and where $\psi_*^{-1}$ is a non-decreasing concave function.\footnote{The infimum is not always attained with a particular value $\lambda_k$ of $\lambda$. The details are given in \Cref{subapp:the_optimization_of_lambda}.} Given $\cE_k$, since $k < \relent(\bP_W^S \Vert \bQ_W) + 1$, with probability no larger than $\bP[\cB_{\lambda_k} | \cE_k]$, there exists some posterior $\bP_W^S$ such that
\begin{equation*}
    \bE^S \poprisk{W} > \bE^S \emprisk{W}{S} +\psi_{*}^{-1} \bigg( \frac{\relent(\bP_W^S \Vert \bQ_W) + 1 + \log \frac{1}{\beta}}{n} \bigg).
\end{equation*}
Now, define $\cB'$ as the event stating that there exists some posterior $\bP_W^S$ such that
\begin{equation*}
    \bE^S \poprisk{W} > \bI_{\cE} \cdot \Bigg[ \bE^S \emprisk{W}{S} + \psi_{*}^{-1} \bigg( \frac{\relent(\bP_{W}^{S} \Vert \bQ_W) + \log \frac{e}{\beta}}{n} \bigg) \Bigg] + \bI_{\cE^c} \cdot \esssup \bE^S \poprisk{W}
\end{equation*}
where $\bP[\cB' | \cE_k] \bP[\cE_k] \leq \bP[\cB_{\lambda_k} | \cE_k] \bP[\cE_k] \leq \bP[\cB_{\lambda_k}] \leq \beta$ for all $k \in \cK$ and where $\bP[\cB' \cap \cE^c] = 0$ by the definition of the essential supremum. Therefore, the probability of $\cB'$ is bounded as
\begin{equation*}
    \bP[\cB'] = \sum_{k \in \cK} \bP[\cB' | \cE_k] \bP[\cE_k] + \bP[\cB' \cap \cE^c] < n\beta.
\end{equation*}
Finally, the substitution $\beta \leftarrow \nicefrac{\beta}{n}$ completes the proof.
\end{proof}

For subgaussian losses (and therefore for bounded ones), this recovers \citet{mcallester2003pac}'s  and \citet{hellstrom2021corrections}'s bound rates. For loss functions with heavier tails like \emph{subgamma} and \emph{subexponential}, the rates become a mixture of slow $\sqrt{\mathfrak{C}_{n, \nicefrac{\beta}{en}, S}}$ and fast $\mathfrak{C}_{n, \nicefrac{\beta}{en}, S}$ rates%
.\footnote{Subgamma random variables are nearly subgaussian, but not quite~\citep[Section 2.4]{boucheron2013concentration}.} Please, see~\Cref{cor:pac_bayes_chernoff_analogue_particular} in~\Cref{subapp:pac_bayes_different_tail_behaviors} for an explicit formulation and derivation.

We can also employ this technique to obtain a parameter-free PAC-Bayes bound for losses with a bounded second moment optimizing the parameter in the results from \citet[Theorem 2.4]{wang2015pac} or \citet[Theorem 2.1]{haddouche2023pacbayes}. The resulting bound is similar the one of \citet[Corollary 1]{kuzborskij2019efron}, where the average sum of second moments plays the role of the subgaussian parameter in~\Cref{th:pac_bayes_chernoff_analogue}. Moreover, this bound essentially extends and improves upon the result from~\citet[Corollary 1]{alquier2018simpler} given the relationship between the relative entropy and the $\chi^2$ divergence, that is, that $\relent(\bP \Vert \bQ) \leq \log \big(1 + \chi^2(\bP \Vert \bQ)\big) \leq \chi^2(\bP \Vert \bQ)$~\citep[Section 7.6]{polyanskiy2022lecture}. The proof and further details, including bounds for martingale sequences and non-i.i.d. data, are in \Cref{subapp:closed_form_parameter_free_wang}. 

\begin{theorem}[{\bf Parameter-free bound for bounded second moment}]
    \label{th:parameter_free_anytime_valid_bounded_2nd_moment}
    Let $\bQ_W$ be any prior independent of $S$ and define $\xi'(n) \coloneqq 2 e n (n+1)^2 \log(en)$ and the events $\cE_n \coloneqq \{\sigma^2_n \ \relent(\bP_W^S \Vert \bQ_W) \leq n \}$, where $\sigma_n^2 \coloneqq \frac{1}{n} \sum_{i=1}^n \bE^S [ \ell(W,Z_i)^2 + 2 \ell(W,Z')^2 + 1]$ for all $n \in \bN$. Then, for every $\beta \in (0,1)$, with probability no smaller than $1-\beta$
    \begin{align}
        \bE^S &\poprisk{W} \leq \nonumber \\ &\bI_{\cE_n} \cdot \left[ \bE^S \emprisk{W}{S}] + \frac{2}{\sqrt{6}} \cdot \sqrt{  \sigma^2_n \Bigg( \frac{\relent(\bP_W^{S} \Vert \bQ_W) + \log \frac{\xi'(n)}{\beta} }{n}\Bigg)} \right] + \bI_{\cE_n^c} \cdot \esssup  \bE^S \poprisk{W}.
        \label{eq:parameter_free_anytime_valid_bounded_2nd_moment}
    \end{align}
    holds \emph{simultaneously} for every posterior $\bP_W^S$.
\end{theorem}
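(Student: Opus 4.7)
The plan is to mirror the proof of \Cref{th:pac_bayes_chernoff_analogue} and apply the same discretization technique to a parameterized PAC-Bayes bound for losses with a bounded second moment. The natural starting point is \citet[Theorem 2.4]{wang2015pac} (or equivalently \citet[Theorem 2.1]{haddouche2023pacbayes}), which holds \emph{simultaneously} for every posterior but for a \emph{fixed} parameter $\lambda > 0$ chosen \emph{before} the draw of the data. Schematically this bound reads
\[
    \bE^S\poprisk{W} \leq \bE^S\emprisk{W}{S} + c_1 \lambda \sigma_n^2 + \frac{\relent(\bP_W^S \Vert \bQ_W) + \log \frac{1}{\beta}}{c_2 \lambda n},
\]
and its optimal $\lambda$ depends on both $\sigma^2_n$ and the relative entropy. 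The explicit derivation of the starting bound and the closed-form optimization of $\lambda$ are deferred to \Cref{subapp:closed_form_parameter_free_wang}; I would use them as a black box and focus on the discretization that makes the data-dependent choice of $\lambda$ legitimate.

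The key step is to quantize the event space so that the optimization of $\lambda$ can be carried out \emph{after} the draw of the data without paying an unbounded union-bound price. Following \Cref{th:pac_bayes_chernoff_analogue}, I would partition the event $\cE_n$ into sub-events indexed by the integer ceiling $\lceil \relent(\bP_W^S \Vert \bQ_W) \rceil$ and by the exponent in a geometric grid for $\sigma^2_n$. Because the constant summand in the definition of $\sigma^2_n$ forces $\sigma^2_n \geq 1$, the constraint $\sigma^2_n\,\relent \leq n$ confines both quantities to $[1,n]$ whenever $\relent \geq 1$, so roughly $(n+1)^2 \log(en)$ combined sub-events suffice to cover $\cE_n$ (including a boundary sub-event for $\relent < 1$). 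On each sub-event the optimal $\lambda$ is determined by the two grid indices and is therefore data-independent; applying Wang's bound with that $\lambda$ and re-expressing the answer in terms of the true $\sigma^2_n$ and $\relent$ introduces only a bounded multiplicative slack. A union bound then produces the logarithmic penalty $\log\xi'(n)$, with the factors $(n+1)^2$ and $\log(en)$ in $\xi'(n) = 2en(n+1)^2 \log(en)$ arising from the two discretizations and the boundary corrections.

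The complementary event $\cE_n^c$ is handled trivially via $\bE^S \poprisk{W} \leq \esssup \bE^S \poprisk{W}$, exactly as in \Cref{th:pac_bayes_chernoff_analogue}. Assembling the bound on $\cE_n$ and the trivial bound on $\cE_n^c$ through the indicator decomposition yields the stated inequality, with the constant $\frac{2}{\sqrt{6}} = \sqrt{\nicefrac{2}{3}}$ appearing from the AM--GM minimization of $c_1 \lambda \sigma^2_n + (\cdot)/(c_2 \lambda n)$ for the specific constants $c_1, c_2$ arising in Wang's bound.

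The main obstacle will be the careful bookkeeping of multiplicative constants: the geometric grid for $\sigma^2_n$ must have a base close enough to $1$ to avoid inflating $\frac{2}{\sqrt{6}}$ but coarse enough to contain only $O(\log n)$ points, and the ceiling discretization of $\relent$ shifts the dependence-confidence term by $+1$, which must be absorbed into $\log\xi'(n)$. The conceptual structure is otherwise identical to \Cref{th:pac_bayes_chernoff_analogue} — the work lies entirely in calibrating the grids to produce the precise form of $\xi'(n)$.
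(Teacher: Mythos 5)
Your overall architecture matches the paper's: start from \citet[Theorem 2.4]{wang2015pac} (whose shape is as you describe), partition the event space \emph{jointly} along the relative entropy and the second-moment term so that the optimal $\lambda$ is constant on each cell and hence data-independent, apply a union bound over the cells, and dispatch $\cE_n^c$ via the essential supremum. Two concrete points in your sketch do not survive scrutiny, however. First, the paper discretizes \emph{both} coordinates with unit additive grids --- integer ceilings $\lceil\relent(\bP_W^{S}\Vert\bQ_W)\rceil = k$ and $\lceil\bE^{S}\big[[M]_n(W)+2\langle M\rangle_n(W)\big]\rceil = l$, restricted to $kl\le n^2$ --- not a geometric grid for the second-moment coordinate. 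This matters for the stated constants: rounding $\sigma_n^2$ up to a geometric grid of ratio $r>1$ inflates the square-root term by a factor $\sqrt{r}$, and driving $r\to 1$ requires unboundedly many grid points, so the tension you yourself flag is not resolvable within your scheme and the exact factor $\nicefrac{2}{\sqrt{6}}$ cannot be recovered. The additive grid costs only an additive $+1$ per coordinate, and that $+1$ is precisely what is already baked into the definition of $\sigma_n^2$ in the theorem statement (the $\ell(W,Z_i)^2+2\ell(W,Z')^2+1$); your sketch treats the constant summand only as a device to force $\sigma_n^2\ge 1$ and misses that it is the rounding slack itself.

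Second, your accounting of $\xi'(n)$ is off: the factor $(n+1)^2$ is not produced by the discretization but is inherited from Wang's bound, which is already anytime-valid over $n$ at cost $\log\nicefrac{2(n+1)^2}{\beta}$. The discretization of the event space contributes only the $n\log(en)$ factor (the union bound over the admissible cells $(k,l)$), and multiplying the two gives $\xi'(n)=2en(n+1)^2\log(en)$. Both issues are repairable --- replace the geometric grid by the ceiling grid and trace the $(n+1)^2$ back to \Cref{th:wang_main_th} --- but as written your proof would yield a bound of the right form with a strictly worse constant in front of the square root and would not establish the theorem with its stated $\sigma_n^2$, $\nicefrac{2}{\sqrt{6}}$, and $\xi'(n)$.
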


Note that in the bound from \Cref{th:parameter_free_anytime_valid_bounded_2nd_moment}, the terms $\bE^S \ell(W,Z_i)^2$ are fully empirical and the term $\bE^S \ell(W,Z')^2$ accounts for the assumption that the second moment of the loss is bounded. \Cref{th:parameter_free_anytime_valid_bounded_2nd_moment} is more general than \Cref{th:pac_bayes_chernoff_analogue}, as only the knowledge of one moment is required instead of the knowledge of a function dominating the CGF, which signifies information of all the moments. However, the resulting bound is not always better. For instance, for $\sigma^2$-subgaussian, $(\sigma^2, c)$-subgamma, or $(\sigma^2, c)$-subexponential losses, the parameter $\sigma^2$ that appears in~\Cref{cor:pac_bayes_chernoff_analogue_particular} to~\Cref{th:pac_bayes_chernoff_analogue} (cf. \Cref{subapp:pac_bayes_different_tail_behaviors}) is a proxy for the variance $\bV^S \ell(W,Z') = \bE^S \big( \ell(W,Z') - \bE^S \poprisk{W} \big)^2$ or the \emph{central} second moment of the loss, while $\bE^S \ell(W,Z')^2$ is its \emph{raw} second moment, which can be much larger than the variance because $\bE^S \ell(W,Z')^2 = \bV^S \ell(W,Z') + \big( \bE^S \poprisk{W} \big)^2$.

\subsubsection{Smaller union bound cost}
\label{subsubsec:smaller_union_bound_cost}

Similarly to \citet{langford2001bounds} and \citet{catoni2003pac}, we can pay a multiplicative cost of $e$ to the relative entropy to reduce the union bound cost to $\log  \nicefrac{ (2 + \log n)}{\beta}$. 
For example, for \Cref{th:pac_bayes_chernoff_analogue} the idea is to follow its proof with the events $\cE_k \coloneqq \{ \relent(\bP_W^S \Vert \bQ_W ) \in (e^{k-1}, e^k] \}$ and note that $e^k < e \relent(\bP_W^S \Vert \bQ_W )$ given $\cE_k$. As mentioned by \citet{maurer2004note}, however, these bounds are only useful when the dependency measure $\relent(\bP_W^S \Vert \bQ_W )$ grows slower than logarithmically. This procedure is detailed in \Cref{subapp:pac_bayes_smaller_union_bound_cost}.

\subsubsection{Different or absence of uninteresting events}
\label{subsubsec:event_cutoff}

\Cref{th:pac_bayes_chernoff_analogue,th:parameter_free_anytime_valid_bounded_2nd_moment} consider the events $\{\relent(\bP_W^S \Vert \bQ_W) \leq n\}$ and $\{\sigma^2_n \relent(\bP_W^S \Vert \bQ_W) \leq n\}$ since the complementary events are uninteresting as the bounds become $\Omega(1)$. However, if one is interested in a different event such as $\{\relent(\bP_W^S \Vert \bQ_W) \leq k_{\textnormal{max}}\} $ or $\{ \sigma^2_ n \relent(\bP_W^S \Vert \bQ_W) \leq k_{\textnormal{max}}\}$, then the proofs may be replicated. The resulting bounds are equal to~\eqref{eq:pac_bayes_chernoff_analogue} and~\eqref{eq:parameter_free_anytime_valid_bounded_2nd_moment}, where the factor inside the logarithm will be $\nicefrac{e k_\textnormal{max}}{\beta}$ and $\nicefrac{\xi'(k_\textnormal{max})}{\beta}$ respectively. Some examples would be to choose $k_{\textnormal{max}} = \lceil \log (d n) \rceil$ for parametric models such as the sparse single-index  \citep{alquier2013sparse} and sparse additive \citep{guedj2013pac} models, where $d$ is the dimension of the input data, or to choose $k_{\textnormal{max}} = \lceil \log (d p n) \rceil$ for the noisy $d \times p$ matrix completion problem \citep{mai2015bayesian}. Other examples are given in \Cref{subapp:pac_bayes_different_tail_behaviors,subapp:recovering_randomized_subsample_setting_pac_bayes_bound}.

Imagine that one is interested in a bound like those presented in \Cref{th:pac_bayes_chernoff_analogue,th:parameter_free_anytime_valid_bounded_2nd_moment} and does not consider any event to be uninteresting. This could happen in some regression application where even if $\relent(\bP_W^S \Vert \bQ_W) \geq n$ and the bound is in $\Omega(1)$ the particular value of the bound is necessary. In this case, working in the events' space is still beneficial. The idea is almost the same as before: separate the events' space into a countable set of events where the optimization can be performed and pay the union bound price. The main difference is that each of these events $\cE_k$ will be defined with a different value of $\beta_k$ so that price of the union bound is finite $\sum_{k=1}^\infty \beta_k < \infty$. For instance, applying this approach to \Cref{lemma:extension_banerjee} results in the following theorem, whose proof is in \Cref{subapp:proof_no_cutoff}.

\begin{theorem}
\label{th:pac_bayes_chernoff_analogue_no_cutoff}
Consider a loss function $\ell$ with a bounded CGF in the sense of \Cref{ass:bounded_cgf}. Let $\bQ_W$ be any prior independent of $S$. Then, for every $\beta \in (0,1)$, with probability no smaller than $1-\beta$
\begin{equation*}
    \bE^S \poprisk{W} \leq \bE^S \emprisk{W}{S} + \psi_{*}^{-1} \Bigg( \frac{\relent(\bP_{W}^{S} \Vert \bQ_W) + \log \frac{e\pi^2\big(\relent(\bP_W^S \Vert \bQ_W) + 1\big)^2}{6\beta}}{n} \Bigg)
\end{equation*}
holds \emph{simultaneously} for every posterior $\bP_W^S$.
\end{theorem}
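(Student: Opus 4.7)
The plan is to replicate the covering-plus-optimization strategy from the proof of \Cref{th:pac_bayes_chernoff_analogue}, but \emph{without} discarding the uninteresting event $\{\relent(\bP_W^S \Vert \bQ_W) > n\}$. Rather than partitioning into a finite family at a cost of $\log n$, I would partition the entire sample space into the countable family $\cE_1 \coloneqq \{\relent(\bP_W^S \Vert \bQ_W) \leq 1\}$ and $\cE_k \coloneqq \{\lceil \relent(\bP_W^S \Vert \bQ_W) \rceil = k\}$ for $k \geq 2$, and assign each event its own confidence level $\beta_k$, chosen so that $\sum_{k=1}^{\infty} \beta_k \leq \beta$. The natural summable choice is $\beta_k \coloneqq 6\beta / (\pi^2 k^2)$, which saturates the Basel sum $\sum_{k \geq 1} 1/k^2 = \pi^2/6$.

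On each $\cE_k$, I would apply \Cref{lemma:extension_banerjee} with confidence $\beta_k$ and minimize over $\lambda \in (0,b)$ \emph{independently of $S$}, using the $S$-independent bound $\relent(\bP_W^S \Vert \bQ_W) \leq k$ on $\cE_k$ to carry out the Cram\'er--Chernoff optimization exactly as in equations~\eqref{eq:pac_bayes_cgf_with_lambda_and_k}--\eqref{eq:pac_bayes_cgf_with_k}. This would yield that, on $\cE_k$, with probability at most $\beta_k$ some posterior violates
\[
\bE^S \poprisk{W} \leq \bE^S \emprisk{W}{S} + \psi_{*}^{-1}\!\left(\frac{k + \log(\pi^2 k^2/(6\beta))}{n}\right).
\]

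I would then translate this $k$-dependent right-hand side back into one depending on $\relent(\bP_W^S \Vert \bQ_W)$ by using $k \leq \relent(\bP_W^S \Vert \bQ_W) + 1$ (which holds on every $\cE_k$) together with the monotonicity of $\psi_{*}^{-1}$. The $+1$ shift contributes the extra factor of $e$ inside the logarithm, while the replacement $2\log k \leq 2\log(\relent + 1)$ contributes the $(\relent + 1)^2$ factor, matching the stated expression. A union bound over $k \in \bN$ then upper-bounds the total failure probability by $\sum_{k=1}^{\infty} \beta_k = \beta$.

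The main subtlety I foresee is confirming the chain $k + \log\frac{\pi^2 k^2}{6\beta} \leq \relent(\bP_W^S \Vert \bQ_W) + \log\frac{e\pi^2(\relent(\bP_W^S \Vert \bQ_W) + 1)^2}{6\beta}$ uniformly across all events, including the corner case $\relent(\bP_W^S \Vert \bQ_W) = 0$ on $\cE_1$ (where $k=1$ and the $2\log k$ term vanishes while the target retains a $\log e = 1$ slack); this reduces to the elementary inequality $\relent(\bP_W^S \Vert \bQ_W) + 2\log(\relent(\bP_W^S \Vert \bQ_W) + 1) \geq 0$ for $\relent(\bP_W^S \Vert \bQ_W) \geq 0$, which is immediate. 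The remaining steps are bookkeeping analogous to the proof of \Cref{th:pac_bayes_chernoff_analogue}, with the Basel sum replacing the finite union bound over $k \in \{1, \ldots, n\}$.
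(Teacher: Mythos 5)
Your proposal is correct and follows essentially the same route as the paper's own proof in \Cref{subapp:proof_no_cutoff}: the same countable covering $\cE_1, \cE_2, \ldots$ by ceiling values of the relative entropy, per-event confidence levels summing via the Basel series, data-independent optimization of $\lambda$ on each cell via \Cref{lemma:boucheron}, and the rounding $k \leq \relent(\bP_W^S \Vert \bQ_W) + 1$ to recover the stated bound. The only cosmetic difference is that you set $\beta_k = \nicefrac{6\beta}{\pi^2 k^2}$ directly, whereas the paper uses $\beta_k = \nicefrac{\beta}{k^2}$ and rescales $\beta$ at the end.
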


Since $x + \log \nicefrac{e \pi^2 (x+1)^2}{6 \beta}$ is a non-decreasing, concave, continuous function for all $x > 0$, it can be upper bounded by its envelope. That is, $x + \log \nicefrac{e \pi^2 (x+1)^2}{6 \beta} \leq \inf_{a > 0} \big(\frac{a+3}{a+1}\big) x + \log \nicefrac{e\pi^2 (a+1)^2}{6 \beta} - \frac{2a}{a+1}$. Taking $a=19$ leads to the following corollary.

\begin{corollary}
\label{cor:pac_bayes_chernoff_analogue_no_cutoff_linearized}
Consider a loss function $\ell$ with a bounded CGF in the sense of \Cref{ass:bounded_cgf}. Let $\bQ_W$ be any prior independent of $S$. Then, for every $\beta \in (0,1)$, with probability no smaller than $1-\beta$
\begin{equation*}
    \bE^S \poprisk{W} \leq \bE^S \emprisk{W}{S} + \psi_{*}^{-1} \Bigg( \frac{1.1 \relent(\bP_{W}^{S} \Vert \bQ_W) + \log \frac{10 e\pi^2}{\beta}}{n} \Bigg). 
\end{equation*}
holds \emph{simultaneously} for every posterior $\bP_W^S$.
\end{corollary}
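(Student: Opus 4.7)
The plan is to start from the non-cutoff bound of \Cref{th:pac_bayes_chernoff_analogue_no_cutoff} and simply tangent-linearize the argument of $\psi_*^{-1}$. Writing $x = \relent(\bP_W^S \Vert \bQ_W) \geq 0$, we need to bound from above the function
\begin{equation*}
    f(x) \coloneqq x + \log \frac{e\pi^2 (x+1)^2}{6\beta} = x + 2 \log(x+1) + \log \frac{e\pi^2}{6\beta}.
\end{equation*}
Since $\psi_*^{-1}$ is non-decreasing and concave (\Cref{subapp:convex_conjugates_and_inverses}), any pointwise upper bound on $f(x)$ translates directly into an upper bound on the right-hand side of the bound in \Cref{th:pac_bayes_chernoff_analogue_no_cutoff}.

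The key observation is that $f$ is concave on $[0,\infty)$, which is immediate from $f''(x) = -2/(x+1)^2 < 0$. Concavity means that $f$ lies below every one of its tangent lines; at an arbitrary anchor point $a \geq 0$, the tangent line is $f(a) + f'(a)(x - a)$ with $f'(a) = 1 + 2/(a+1) = (a+3)/(a+1)$. A short calculation gives
\begin{equation*}
    f(x) \;\leq\; \inf_{a > 0} \left\{ \frac{a+3}{a+1}\, x + \log \frac{e\pi^2 (a+1)^2}{6\beta} - \frac{2a}{a+1} \right\},
\end{equation*}
which matches the envelope formula stated just before the corollary. In particular, any specific $a$ yields a valid linear-in-$x$ upper bound.

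The last step is to substitute $a = 19$. This gives the coefficient $(a+3)/(a+1) = 22/20 = 1.1$, the squared factor $(a+1)^2 = 400$, and the constant $-2a/(a+1) = -38/20 = -1.9$. Plugging into the envelope, the constant part becomes
\begin{equation*}
    \log \frac{400 \, e\pi^2}{6\beta} - 1.9 \;=\; \log \frac{400 \, e\pi^2}{6 \beta e^{1.9}} \;\leq\; \log \frac{10 \, e \pi^2}{\beta},
\end{equation*}
where the final inequality is the numerical check $400 / (6 e^{1.9}) \leq 10$, equivalent to $e^{1.9} \geq 20/3 \approx 6.667$; this holds since $e^{1.9} \approx 6.686$. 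Applying $\psi_*^{-1}$ to both sides (using monotonicity) and dividing by $n$ then yields the stated bound.

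The only real obstacle is this last numerical verification---the choice $a = 19$ is almost optimal in the sense that $400 / (6 e^{1.9})$ is extremely close to $10$, so any less careful substitution would either produce a slightly larger leading constant or a non-integer coefficient on $\relent(\bP_W^S \Vert \bQ_W)$. Apart from that, the proof is purely mechanical: concavity of $f$, the tangent-line bound, and monotonicity of $\psi_*^{-1}$ are all standard.
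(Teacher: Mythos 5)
Your proposal is correct and follows essentially the same route as the paper, which derives the corollary from \Cref{th:pac_bayes_chernoff_analogue_no_cutoff} via the tangent-line (envelope) upper bound on $x + \log\nicefrac{e\pi^2(x+1)^2}{6\beta}$ and the choice $a=19$. Your write-up merely makes explicit the tangent computation and the numerical check $e^{1.9} \geq \nicefrac{20}{3}$ that the paper leaves implicit (and only monotonicity, not concavity, of $\psi_*^{-1}$ is actually needed).
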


\subsubsection{Related work}
\label{subsubsec:related_work_optimization}

A related, but different technique to deal with these optimization problems is given by \citet{langford2001not} and \citet{catoni2003pac} to solve the bounded losses analogue of~\eqref{eq:pac_bayes_cgf_with_k}. They consider the optimization of $\lambda$ over a geometric grid $\cA = \{e^k : k \in \bN\} \cap [1,n]$ at the smaller union bound cost of $\log \nicefrac{(1+ \log n)} \beta $ at the price of a multiplicative constant of $e$. Using
rounding arguments similar to those in the proofs of \Cref{th:pac_bayes_chernoff_analogue,th:parameter_free_anytime_valid_bounded_2nd_moment}, this translates into being able to optimize the parameter $\lambda$ in the region $[1,n]$. This technique generalizes to other countable families $\cA$ with a union bound cost of $\log |\cA|$~\citep[Section 2.1.4]{alquier2021user}. The downfall of this approach compared to the one presented here is that the optimal parameter $\lambda^\star$ is still dependent on the data drawn $S$, the probability parameter $\beta$, and the tail behavior captured either by $\psi_*^{-1}$ or $\sigma_n^2$. It is hence uncertain if the optimal parameter will lie within the set $\cA$ in general, making a parameter-free expression for the bound impossible.

An extension of this technique is given by~\citet{seldin2012pac}. The idea is to construct a countably infinite grid $\cA$ over the parameters' space and then choose a parameter $\lambda$ from that grid. Then, they can give a closed-form solution by studying how far is the bound resulting from plugging the selected parameter from the grid and the optimal parameter. Their technique has been used for a bounded range and bounded variance setting in~\citep{seldin2012pac} and for a bounded empirical variance in~\citep{tolstikhin2013pac}. 

The main difference between these approaches and ours is that they design a grid $\cA$ over the parameters' space and optimize the parameter $\lambda$ in that grid. Then, the tightness of the resulting bound depends on how well that grid was crafted. This grid $\cA$ needs to be designed in a case-to-case basis and it can be cumbersome, see for example \citep[Appendix A]{tolstikhin2013pac}. Moreover, to design the said grid one requires an explicit expression for the optimal parameter. This may not be available in cases such as in \Cref{th:pac_bayes_chernoff_analogue}, where we only know that~\eqref{eq:pac_bayes_cgf_with_k} is the result of the optimization in~\eqref{eq:pac_bayes_cgf_with_lambda_and_k}. On the other hand, we consider a grid over the events' space and find the \emph{best} parameter for each cell (sub-event) in that grid. This gives three main advantages with respect to the previous techniques. First, the grid is the same for any situation, making the technique easier to employ (see \Cref{th:pac_bayes_chernoff_analogue,th:parameter_free_anytime_valid_bounded_2nd_moment,th:pac_bayes_chernoff_analogue_no_cutoff}, \Cref{th:paramter_free_anytime_valid_martingales,th:pac_bayes_chernoff_analogue_loglog}, and \Cref{cor:pac_bayes_chernoff_analogue_particular}). For instance, it would be trivial to recover a result similar to the PAC-Bayes Bernstein analog of \citet[Theorem 8]{seldin2012pac} optimizing the parameter in \citep[Theorem 7]{seldin2012pac} with our approach. Second, to apply the technique we do not need to know the explicit form of the optimal parameter, which may not exist like in \Cref{th:pac_bayes_chernoff_analogue}, we only need that the optimization is possible. Third, if the grid is made with respect to a random variable $X$, the resulting bound will be tight except from a logarithmic term and an offset changing $X$ by $X + 1$. Therefore, discretizing the events' space is essentially equivalent to crafting a subset $\cA'$ of the parameters' space (not necessarily with a grid structure) with the \emph{optimal} parameters for each region without the need to design this subset $\cA'$ in a case-to-case basis. 

Another possibility to deal with $\lambda$ is to integrate it with respect to an analytically integrable probability density with mass concentrated in its maximum. This is the method employed by~\citet{kuzborskij2019efron} and is known as \emph{the method of mixtures}~\citep[Section 2.3]{de2007pseudo}. Unfortunately, this method requires the existence of a canonical pair: two random variables $X$ and $Y$ satisfying that $\bE \exp \big( \lambda X + \nicefrac{\lambda^2 Y^2}{2} \big) \leq 1$ for all $\lambda$ in the domain of optimization~\cite[Equation (2.2)]{de2007pseudo}. This requirement may not necessarily hold in general settings like \Cref{th:pac_bayes_chernoff_analogue}. Moreover, often this method results in the introduction of a new parameter associated with the density used for integration, for example, the variance of a Gaussian as in \citep{kuzborskij2019efron}. Therefore, our proposed approach is still more general while resulting in essentially the same bound when restricted to the case where the method of mixtures can be employed.\footnote{The final bounds are not directly comparable due to differences in the logarithmic terms, but both are of the same order.}

Finally, \citet[Corollary 8]{kakade2008complexity} employed a similar technique to ours to prove a PAC-Bayes bound for bounded losses similar to \citet{mcallester1998some,mcallester1999pac,mcallester2003pac}'s \Cref{eq:mcallester_with_germain_and_mauer_pac_bayes}. However, they did not employ the technique to optimize a parameter. Instead, they found a bound in terms of a threshold $a$ that held for every posterior $\bP_{W}^S$ such that $\relent(\bP_W^S \Vert \bQ) \leq a$. Then, they discretized the set of all posteriors into the sub-classes $\cP_{k} \coloneqq \{ \bP_W^S : 2^{k+1} < \relent(\bP_W^S \Vert \bQ) \leq 2^{k+2} \}$ and applied the union bound to find a uniform result. This technique is usually known as the \emph{peeling device}, \emph{stratification}, or \emph{slicing} in the probability theory and bandits communities~\citep[Section 13.7]{boucheron2013concentration}~\citep[Section 9.1]{lattimore2020bandit}. The similarity with our proof of~\Cref{th:pac_bayes_chernoff_analogue,th:pac_bayes_chernoff_analogue_no_cutoff,th:pac_bayes_chernoff_analogue_loglog} is clear by looking at our design of the events' discretization and their posterior's sub-classes. However, the nature of the two approaches is different: they have a natural constraint, and they discretize the posterior class space and apply the union bound to circumvent that; while we have a parameter which optimal value is data-dependent, we discretize the events' space to find the optimal parameter in a data-independent way, and then we apply the union bound. %
Moreover, this technique is more general, as one can design the sub-events to include basically any random object that depends on the data as showcased in~\Cref{th:parameter_free_anytime_valid_bounded_2nd_moment,th:paramter_free_anytime_valid_martingales}. Nonetheless, one could consider our technique to be essentially equivalent to the peeling device, since both techniques have the same idea and intention behind them.

\subsubsection{Implications to the design of posterior distributions}
\label{subsubsec:implications_learning_posteriors}

We will focus the discussion about which are the implications of having a parameter-free bound with more general assumptions with respect to the design of posterior distributions to \Cref{th:pac_bayes_chernoff_analogue}. The discussion extends to \Cref{th:parameter_free_anytime_valid_bounded_2nd_moment,th:pac_bayes_chernoff_analogue_no_cutoff} and other situations analogously.

The first consideration is that the parameter-free bound in~\eqref{eq:pac_bayes_chernoff_analogue} can always be transformed back into a parametric bound that holds \emph{simultaneously} for all parameters. In the case of \Cref{th:pac_bayes_chernoff_analogue}, employing \Cref{lemma:boucheron} we have that with probability no smaller than $1 - \beta$
\begin{equation*}
    \bE^S \poprisk{W} \leq \bI_{\cE} \cdot \Bigg[ \bE^S \emprisk{W}{S} + \inf_{\lambda \in (0,b)} \bigg \{ \frac{\relent(\bP_{W}^{S} \Vert \bQ_W) + \log \frac{en}{\beta}}{\lambda n} + \frac{\psi(\lambda)} {\lambda} \bigg \} \Bigg] + \bI_{\cE^c} \cdot \esssup \bE^S \poprisk{W}
\end{equation*}
holds \emph{simultaneously} for every posterior $\bP_W^S$.
This relaxation to a familiar structure tells us that the optimal posterior is the Gibbs distribution $\bP_W^S(w) \propto \bQ_W(w) \cdot e^{\lambda n \emprisk{w}{S}}$, where the value of $\lambda$ can now be chosen \emph{adaptively} for each dataset realization $s$.

The second consideration is if we are using some numerical estimation of the posterior using neural networks as with the PAC-Bayes with backprop~\citep{rivasplata2019pac,perez2021tighter} or other similar frameworks~\citep{dziugaite2017computing,lotfi2022pac}. Then, the posterior can be readily estimatd as long as the inverse of the convex conjugate $\psi_*^{-1}$ is a differentiable function.

\section{Anytime-valid PAC-Bayes bounds}
\label{sec:anytime_valid_pac_bayes_bounds}

Recently, some works \citep{chugg2023unified, jang2023tighter, haddouche2023pacbayes} have focused on \emph{anytime valid} (or \emph{time-uniform}) PAC-Bayes bounds, that is, bounds that hold \emph{simultaneously} for all number of samples $n$. Often, their goal is to provide guarantees at every step for online algorithms that are sequential in nature. These bounds are usually rooted in the usage of supermartingales and \citet{ville1939etude}'s extension of Markov's inequality.

Every standard PAC-Bayes bound can be extended to an anytime-valid bound at a union bound cost, even if it does not have a suitable supermartingale structure. For high-probability PAC-Bayes bounds like \Cref{th:pac_bayes_chernoff_analogue,th:parameter_free_anytime_valid_bounded_2nd_moment}, this extension comes at the small cost of adding $2 \log \nicefrac{\pi n}{\sqrt{6}}$ to $\log \nicefrac{e n}{\beta}$ or to $\log \nicefrac{\xi'(n)}{\beta}$ respectively. This ``folklore'' result is formalized below for general probabilistic bounds. Similar uses of the union bound in other settings appear in~\citep{darling1967iterated, robbins1968iterated, kaufmann2016complexity, howard2021time}.

\begin{theorem}[{\bf From standard to anytime-valid bounds}]
\label{th:standard_to_anytime_valid}
    Consider the probability space $(\Omega, \ccF, \bP)$ and let $(\cE_n)_{n=1}^\infty$ be a sequence of event functions such that $\cE_n: (0,1) \to \ccF$. If $\bP[ \cE_n(\beta) ] \geq 1 - \beta$ for all $\beta \in (0,1)$ and all $n \geq 1$, then $\bP[ \cap_{n=1}^\infty \cE_n(\nicefrac{6 \beta}{\pi^2 n^2}) ] \geq 1 - \beta$ for all $\beta \in (0,1)$.
\end{theorem}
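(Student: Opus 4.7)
The plan is a direct application of the countable union bound, where the choice of the sequence $\beta_n = 6\beta / (\pi^2 n^2)$ is dictated by the Basel identity $\sum_{n=1}^\infty 1/n^2 = \pi^2/6$, which ensures the total failure probability sums to exactly $\beta$.

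Concretely, I would proceed as follows. First, for each $n \geq 1$, define the complementary (``bad'') event $\cB_n \coloneqq \cE_n(\beta_n)^c$ where $\beta_n \coloneqq 6\beta / (\pi^2 n^2)$. Since $\beta \in (0,1)$ implies $\beta_n \in (0,1)$ for all $n \geq 1$, the hypothesis $\bP[\cE_n(\beta_n)] \geq 1 - \beta_n$ applies, giving $\bP[\cB_n] \leq \beta_n$. Second, apply countable subadditivity of $\bP$ to obtain
\begin{equation*}
    \bP \Bigg[ \bigcup_{n=1}^\infty \cB_n \Bigg] \leq \sum_{n=1}^\infty \bP[\cB_n] \leq \sum_{n=1}^\infty \frac{6\beta}{\pi^2 n^2} = \frac{6\beta}{\pi^2} \cdot \frac{\pi^2}{6} = \beta.
\end{equation*}
Finally, take complements and use De Morgan's law to conclude that
\begin{equation*}
    \bP \Bigg[ \bigcap_{n=1}^\infty \cE_n(\beta_n) \Bigg] = 1 - \bP \Bigg[ \bigcup_{n=1}^\infty \cB_n \Bigg] \geq 1 - \beta.
\end{equation*}

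There is no real obstacle here; this is a ``folklore'' argument as the authors themselves remark. The only non-trivial ingredient is the choice of the weighting sequence $\{\beta_n\}$, and the selection $\beta_n \propto 1/n^2$ is canonical because $\sum 1/n^2$ converges to the explicit constant $\pi^2/6$, which produces a clean normalization. Other summable sequences (e.g.\ $\beta_n \propto 1/(n(n+1))$ or $\beta_n \propto 1/(n \log^2(n+1))$) would also work and lead to slightly different constants, but the $\pi^2/6$ choice is standard and yields the promised cost of adding $2\log(\pi n / \sqrt{6})$ to the logarithmic term in the bounds cited in the surrounding discussion.
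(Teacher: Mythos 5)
Your proposal is correct and follows essentially the same route as the paper: a union bound over the complementary events with the weights $\beta_n = \nicefrac{6\beta}{\pi^2 n^2}$, summed via the Basel identity. The only cosmetic difference is that the paper first works with $\beta_n = \nicefrac{\beta}{n^2}$ and performs the substitution $\beta \leftarrow \nicefrac{6\beta}{\pi^2}$ at the end, whereas you build the normalization into the weights from the start.
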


\begin{proof}
    We prove the equivalent statement: ``for every $\beta \in (0,1)$, if $\bP[ \cE_n^c (\beta) ] < \beta$ for all $n \geq 1$, then $\bP[ \cup_{n=1}^\infty \cE_n^c (\nicefrac{6\beta}{\pi^2 n^2})] < \beta$''. By the union bound, it follows that $\bP[ \cup_{n=1}^\infty \cE_n^c (\beta_n) ] < \sum_{n=1}^\infty \beta_n$. Let $\beta_n = \nicefrac{\beta}{n^2}$, then $\bP[ \cup_{n=1}^\infty \cE_n^c (\nicefrac{\beta}{n^2}) ] < \nicefrac{\pi^2 \beta}{6}$. The substitution $\beta \leftarrow \nicefrac{6 \beta}{\pi^2}$ completes the proof.
\end{proof}

There are better choices of $\beta_n$ such as $\beta_n = \nicefrac{\beta}{n \log^2(6n)}$~\citep{kaufmann2016complexity}, but all result in essentially the same cost $\cO(\log n)$ for high-probability PAC-Bayes bounds. The main takeaway from this result is that the anytime-valid bounds obtained via supermartingales and \citet{ville1939etude}'s inequality only contribute in shaving-off a log factor for PAC-Bayes high-probability bounds. Hence, their main advantage is in describing online learning situations where the subsequent samples are dependent to each other, which is not inherently captured by statements like \Cref{th:standard_to_anytime_valid}.

\begin{remark}
\label{rem:specialized_bounds_anytime_valid}
\Cref{th:catoni_pac_bayes_uniform,th:fast_rate_bound_strong,th:mixed_rate_bound} and \Cref{cor:fast_rate_bound} follow verbatim as an anytime-valid bound substituting $\log \nicefrac{\xi(n)}{\beta}$ by $\log \nicefrac{\sqrt{\pi(n+1)}}{\beta}$ without needing \Cref{th:standard_to_anytime_valid}. The reason is that these results are derived from the Seeger--Langford bound~\citep{langford2001bounds, seeger2002pac, maurer2004note}, which is extended to an anytime-valid bound at this cost in \citep{jang2023tighter}.
\end{remark}

\section{Conclusion}
\label{sec:Conclusion}

In this paper, we present new high-probability PAC-Bayes bounds. 
For bounded losses, the strengthened version of Catoni's bound (\Cref{th:catoni_pac_bayes_uniform}) provides tighter fast and mixed-rate bounds (\Cref{th:fast_rate_bound_strong,th:mixed_rate_bound} and \Cref{cor:fast_rate_bound}). Moreover, the fast-rate bound is equivalent to the Seeger--Langford bound~\citep{langford2001bounds,seeger2002pac}, helping us to better understand the behavior of this bound and its optimal posterior. Namely, this reveals that the bound is completely characterized by a linear combination of the empirical risk and the dependence-confidence term, and that 
the optimal posterior of the fast-rate bound is a Gibbs distribution with a data-dependent ``temperature''. For more general losses, we introduce two parameter-free bounds using a new technique to optimize parameters in probabilistic bounds: one for losses with bounded CGF and another one for losses with bounded second moment (\Cref{th:pac_bayes_chernoff_analogue,th:parameter_free_anytime_valid_bounded_2nd_moment}).  We also extend all our results to anytime-valid bounds with a technique that can be applied to any existing bound (\Cref{th:standard_to_anytime_valid}).

PAC-Bayes bounds have been proven useful to provide both numerical population risk certificates as well as to understand the sufficient conditions for a problem to be learned~\citep{ambroladze2006tighter, ralaivola2010chromatic, higgs2010pac, seldin2010pac, alquier2013sparse, guedj2013pac, mai2015bayesian, appert2021new, nozawa2019pac, nozawa2020pac, cherief2022pac}. The interpretability of the bounds in \Cref{sec:specialized_pac_bayes_bounds_bounded_losses} and the wider applicability of those in \Cref{sec:pac_bayes_beyond_bounded_losses}, along with their potential extension from \Cref{sec:anytime_valid_pac_bayes_bounds}, can contribute to extend this understanding. However, it is known that there are situations where an algorithm generalizes but the dependency measure (relative entropy) in the PAC-Bayes bounds is large, yielding them vacuous~\citep{bassily2018learners, livni2020limitation, haghifam2022limitations}. Nonetheless, some approaches recently managed to use PAC-Bayes bounds to obtain non-vacuous bounds for neural networks~\citep{dziugaite2017computing, dziugaite2018data, rivasplata2019pac,  perez2021tighter, zhou2019non, lotfi2022pac}. The bounds from \Cref{sec:specialized_pac_bayes_bounds_bounded_losses} can contribute in this front via methods like PAC-Bayes with backprop~\citep{rivasplata2019pac, perez2021tighter} as they are differentiable and tighter than previous bounds of this kind.

Technically, the procedure employed in \Cref{sec:pac_bayes_beyond_bounded_losses}, focusing on discretizing the event space instead of the parameter space, can be of independent interest and useful for developing theory elsewhere. Similarly, \Cref{th:standard_to_anytime_valid} is presented in a general way so it can be seemingly used in other contexts beyond PAC-Bayesian theory.

\acks{First of all, we would like to thank the reviewers and the editor for their comments and suggestions, which helped us improve the paper. 

We are also grateful to Gergely Neu and our fruitful discussions that lead to a cleaner exposition of \Cref{th:pac_bayes_chernoff_analogue} using indicator functions; to Pierre Alquier for the suggestion of improving \Cref{subsubsec:event_cutoff} considering a cut-off appropriate for PAC-Bayes bounds for parametric problems and for pointing us to further PAC-Bayes literature; and to Omar Rivasplata and María Pérez-Ortiz for their help dealing with PAC-Bayes with backprop.

This work was funded, in part, by the Swedish research council under contracts 2019-03606 (Borja Rodríguez-Gálvez and Mikael Skoglund) and 2021-05266 (Ragnar Thobaben).}

\appendix

\section{{Details of \Cref{sec:specialized_pac_bayes_bounds_bounded_losses}}}
\label{sec:details_specialized_pac_bayes_bounds_bounded_losses}   

This section of the appendix is devoted to providing alternative proofs and supplementary context and examples to the results from~\Cref{sec:specialized_pac_bayes_bounds_bounded_losses}.

First of all, to aid with the proofs of \Cref{th:catoni_pac_bayes_uniform} and \Cref{th:fast_rate_bound_strong}, we state the Donsker--Varadhan~\citep{donsker1975asymptotic} and $f$-divergence-based variational representations of the relative entropy~\citep[Example 7.5]{polyanskiy2022lecture}. For the next two lemmata, $\cX$ denotes a measurable space.

\begin{lemma}[Donsker--Varadhan variational representation of the relative entropy]
\label[lemma]{lemma:dv-var-rep}
    Let $\bP$ and $\bQ$ be two probability measures on $\cX$ and $X \sim \bP$ and $Y \sim \bQ$ be two random variables. Further let $\cG$ be the set of functions $g : \cX \to \bR$ such that $\bE\big[e^{g(Y)}] < \infty$. Then,
    \begin{equation*}
    \label{eq:dv-var-rep}
        \relent(\bP \Vert \bQ) = \sup_{g \in \cG} \bigg \{ \bE \big[g(X) \big] - \log \bE \Big[ e^{g(Y)} \Big] \bigg\}.
    \end{equation*}
\end{lemma}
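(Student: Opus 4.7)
The plan is to prove the two inequalities separately, using a change-of-measure (tilting) argument for the upper bound and the log-likelihood ratio as a certificate for the lower bound.

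For the direction $\relent(\bP \Vert \bQ) \geq \sup_{g \in \cG} \{\bE[g(X)] - \log \bE[e^{g(Y)}]\}$, the idea is to fix any $g \in \cG$ and introduce the tilted probability measure $\bQ_g$ defined by $d\bQ_g/d\bQ = e^{g}/Z_g$, where $Z_g = \bE[e^{g(Y)}] \in (0,\infty)$. A direct manipulation of the Radon--Nikodym derivative then gives the identity
\begin{equation*}
    \relent(\bP \Vert \bQ) = \relent(\bP \Vert \bQ_g) + \bE[g(X)] - \log Z_g,
\end{equation*}
which should be derived assuming $\bP \ll \bQ$ (if not, both sides are handled separately, see below). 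Since $\relent(\bP \Vert \bQ_g) \geq 0$ by Gibbs' inequality, rearranging yields $\bE[g(X)] - \log \bE[e^{g(Y)}] \leq \relent(\bP \Vert \bQ)$, and taking the supremum over $g \in \cG$ closes this direction.

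For the reverse inequality, assume first $\bP \ll \bQ$ with $\relent(\bP\Vert\bQ) < \infty$, and set $g^\star = \log(d\bP/d\bQ)$. Then $\bE[g^\star(X)] = \relent(\bP\Vert\bQ)$ and $\bE[e^{g^\star(Y)}] = \int (d\bP/d\bQ)\, d\bQ = 1$, so $g^\star$ achieves equality, provided it belongs to $\cG$. The main obstacle here is the admissibility of $g^\star$: the Radon--Nikodym derivative may take the value $0$, making $g^\star$ take $-\infty$. To circumvent this I would use the truncation $g_M^\star = \max\{\log(d\bP/d\bQ), -M\}$, verify that $g_M^\star \in \cG$, and then apply monotone/dominated convergence as $M \to \infty$ to recover the value $\relent(\bP\Vert\bQ)$ in the limit.

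It remains to handle the degenerate cases. If $\bP \not\ll \bQ$ so that $\relent(\bP\Vert\bQ) = \infty$, there exists a measurable $A$ with $\bP(A) > 0$ and $\bQ(A) = 0$; the family $g_c = c \cdot \bI_A$ lies in $\cG$ (since $\bE[e^{g_c(Y)}] = 1$) and yields $\bE[g_c(X)] - \log \bE[e^{g_c(Y)}] = c\,\bP(A) \to \infty$, so the supremum is $\infty$ as required. If $\bP \ll \bQ$ but $\relent(\bP\Vert\bQ) = \infty$, the same truncation argument applied to $g_M^\star$ makes $\bE[g_M^\star(X)] - \log \bE[e^{g_M^\star(Y)}]$ diverge. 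The main technical obstacle is ensuring the truncation argument is uniformly justified in this unbounded-entropy case, which I would address by splitting into the positive and negative parts of $\log(d\bP/d\bQ)$ and applying monotone convergence separately.
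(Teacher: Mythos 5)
The paper does not actually prove this lemma: it is stated as a classical result and attributed to \citet{donsker1975asymptotic} and to Example 7.5 of \citet{polyanskiy2022lecture}, so there is no in-paper argument to compare against. Your proof is the standard one and is correct: the tilting identity $\relent(\bP \Vert \bQ) = \relent(\bP \Vert \bQ_g) + \bE[g(X)] - \log Z_g$ plus nonnegativity of relative entropy gives one direction, and the truncated log-likelihood ratio $g_M^\star$ (admissible since $\bE\big[e^{g_M^\star(Y)}\big] \leq 1 + e^{-M} < \infty$) recovers $\relent(\bP \Vert \bQ)$ in the limit $M \to \infty$, with the non-absolutely-continuous case correctly dispatched via $g_c = c\,\bI_A$. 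The only point deserving an explicit sentence in a full write-up is the well-definedness of $\bE[g(X)]$ in the tilting identity when $\relent(\bP \Vert \bQ) < \infty$ (one first checks that $\bE[\log(d\bP/d\bQ_g)(X)]$ is well-defined in $(-\infty,\infty]$ because the negative part of $t \log t$ is bounded, and then rearranges), but this is routine bookkeeping rather than a gap.
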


\begin{lemma}[$f$-divergence-based variational representation of the relative entropy]
\label[lemma]{lemma:fdiv-var-rep}
    Let $\bP$ and $\bQ$ be two probability measures on $\cX$ and $X \sim \bP$ and $Y \sim \bQ$ be two random variables. Further let $\cG$ be the set of functions $g : \cX \to \bR$ such that $\bE\big[e^{g(Y)}] < \infty$. Then,
    \begin{equation*}
    \label{eq:fdiv-var-rep}
        \relent(\bP \Vert \bQ) = \sup_{g \in \cG} \bigg \{ \bE \big[g(X) \big] - \bE \Big[ e^{g(Y)} \Big] + 1 \bigg\}.
    \end{equation*}
\end{lemma}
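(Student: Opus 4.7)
The plan is to establish the identity by the standard two-step argument: first prove the inequality $\bE[g(X)] - \bE[e^{g(Y)}] + 1 \leq \relent(\bP \Vert \bQ)$ for every admissible $g$, then exhibit a function (or a sequence of functions) in $\cG$ that saturates it.

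For the upper bound, I would reduce to the case $\bP \ll \bQ$. If $\bP \not\ll \bQ$, then $\relent(\bP \Vert \bQ) = +\infty$, and I would match this by producing a sequence $g_n$ making the variational functional diverge: by Radon-Nikodym there exists a measurable set $A$ with $\bP(A) > 0$ and $\bQ(A) = 0$; taking $g_n = n \bI_A$ gives $\bE[g_n(X)] = n \bP(A) \to \infty$ while $\bE[e^{g_n(Y)}] = 1$, so the supremum is also $+\infty$. Assuming $\bP \ll \bQ$ and writing $\phi = \tfrac{\mathrm d \bP}{\mathrm d \bQ}$, the claim rearranges to
\begin{equation*}
\bE_{\bQ}\!\left[\phi \log\!\tfrac{e^{g}}{\phi}\right] \leq \bE_{\bQ}[e^g] - 1,
\end{equation*}
which follows pointwise ($\bQ$-a.e.) from the elementary inequality $t \log(s/t) \leq s - t$ for $s \geq 0$, $t > 0$ (equivalently $\log u \leq u - 1$ with $u = s/t$), applied with $t = \phi$ and $s = e^{g}$; integrating against $\bQ$ then gives the bound. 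On the set $\{\phi = 0\}$ (which is $\bP$-null), the contribution is handled by the convention $0 \cdot \log 0 = 0$.

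For the lower bound, I would plug in $g^\star = \log \phi$. Then $\bE[g^\star(X)] = \bE_{\bP}[\log \phi] = \relent(\bP \Vert \bQ)$ and $\bE[e^{g^\star(Y)}] = \bE_{\bQ}[\phi] = 1$, so the functional equals $\relent(\bP \Vert \bQ)$ exactly. If $g^\star$ is not strictly in $\cG$ (for instance because $\phi$ vanishes on a $\bQ$-positive set, sending $g^\star$ to $-\infty$, or because $\relent(\bP \Vert \bQ) = \infty$), I would approximate by truncations such as $g^\star_n = \max\{-n, \log(\phi \vee 1/n)\}$. Each truncation lies in $\cG$ because $e^{g^\star_n} \leq \phi + 1$, so $\bE_{\bQ}[e^{g^\star_n}] \leq 2 < \infty$, and monotone/dominated convergence yields the desired limit.

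The proof contains no real obstacle beyond the standard care with integrability of the maximizer; the only analytical ingredient is the scalar inequality $\log u \leq u - 1$. Everything else is bookkeeping to handle the degenerate cases $\bP \not\ll \bQ$ and $\{\phi = 0\}$, and the truncation argument used to push the equality from the ideal $g^\star = \log \phi$ back into $\cG$.
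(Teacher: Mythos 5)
Your proof is correct. Note that the paper itself does not prove this lemma; it only cites \citet[Example 7.5]{polyanskiy2022lecture}, so your argument fills in a proof rather than competing with one. The core of your argument is sound: the pointwise inequality $t\log(s/t)\leq s-t$ (i.e.\ $\log u\leq u-1$) integrated against $\bQ$ gives the upper bound, the choice $g^\star=\log\phi$ with $\phi=\tfrac{\mathrm{d}\bP}{\mathrm{d}\bQ}$ attains it, and your truncations $\log(\phi\vee\nicefrac{1}{n})$ do lie in $\cG$ and converge correctly (monotone convergence on the negative part, domination by $\phi+1$ for the exponential moment). Two cosmetic remarks: the existence of a set $A$ with $\bP(A)>0$ and $\bQ(A)=0$ when $\bP\not\ll\bQ$ is just the negation of absolute continuity (via the Lebesgue decomposition), not an application of Radon--Nikodym; and in the upper bound it is cleaner to integrate the pointwise inequality $\phi g\leq e^{g}-\phi+\phi\log\phi$ directly rather than ``rearranging'' the expectations, which sidesteps any worry about $\bE[g(X)]$ being ill-defined. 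Finally, since the paper already states the Donsker--Varadhan representation (\Cref{lemma:dv-var-rep}), there is a two-line alternative you could have used: $\log u\leq u-1$ shows the $f$-divergence functional is dominated by the Donsker--Varadhan one, and replacing $g$ by $g-\log\bE[e^{g(Y)}]$ shows the two suprema coincide. Your direct route is self-contained and buys independence from \Cref{lemma:dv-var-rep}; the shortcut buys brevity.
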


\subsection{{Alternative proof of \Cref{th:fast_rate_bound_strong}}}
\label{app:alternative_proof_fast_rate}

\sloppy As mentioned in \Cref{subsec:seeger_langford_to_catoni_and_new_fast_and_mixed_rate_bounds}, \Cref{th:fast_rate_bound_strong} can be recovered directly from the Seeger--Langford~\citep{langford2001bounds,seeger2002pac} bound from \Cref{th:seeger_langford_pac_bayes} using the variational representation of the relative entropy based on $f$-divergences~\citep[Example 7.5]{polyanskiy2022lecture}.

\noindent{\bf Alternative proof of \Cref{th:fast_rate_bound_strong}\ }
    Similarly to the proof of \Cref{th:catoni_pac_bayes_uniform}, the proof starts considering the Seeger--Langford bound from \Cref{th:seeger_langford_pac_bayes}. Now, applying the variational represenation of the relative entropy based on $f$-divergences from~\Cref{lemma:fdiv-var-rep} to the right hand side of~\eqref{eq:seeger_langford_pac_bayes} results in
    \begin{align*}
        \relentber(\bE^S \emprisk{W}{S} \Vert \poprisk{W} ) 
        = \sup_{g_0, g_1 \in (-\infty, \infty)} \bigg \{ \bE^S &\emprisk{W}{S} g_1 + (1 - \bE^S \emprisk{W}{S}) g_0 \\ &-  \bE^S \poprisk{W} e^{g_1} + (1 - \bE^S \poprisk{W})e^{g_0} + 1  \bigg \},
    \end{align*}
    where we defined $g_0 \coloneqq g(0)$ and $g_1 \coloneqq g(1)$.
    Re-arranging the terms and plugging them into~\eqref{eq:seeger_langford_pac_bayes} states that with probability no smaller than $1 - \beta$
    \begin{align*}
        \sup_{g_0, g_1 \in (-\infty, \infty)}  \bigg \{1+ g_0 + \bE^S \emprisk{W}{S} (g_1 - g_0) - e^{g_0} - \bE^S \poprisk{W} (e^{g_1} - e^{g_0}) 
        \bigg \} \leq \mathfrak{C}_{n, \nicefrac{\beta}{\xi(n)}, S}.
    \end{align*}
    where we recall that $\mathfrak{C}_{n, \beta, S} \coloneqq \frac{1}{n} \big( \relent(\bP_W^S \Vert \bQ_W) + \log \nicefrac{1}{\beta}\big)$. 
    Again, similarly to \citet{thiemann2017strongly}'s result from \Cref{th:thiemann_pac_bayes}, the bound holds uniformly over all realizations of the training set, and thus the parameters $g_0$ and $g_1$ can be chosen adaptively.         For this inequality to be relevant to us, we require that $g_0 \geq g_1$, as otherwise we would obtain a lower bound instead of an upper bound. To simplify the equations, let $\gamma \coloneqq \nicefrac{e^{g_0} }{(e^{g_0} - e^{g_1})} \geq 1 $, which implies that $g_0 - g_1 = \log \big(\gamma / (\gamma - 1)\big)$ and therefore with probability no smaller than $1 - \beta$
        \begin{align*}
        \label{eq:relentber_f_div_after_change_variable}
            1+ g_0  - e^{g_0} - \log \Big( \frac{\gamma}{\gamma - 1} \Big) \bE^S \emprisk{W}{S}  + \gamma^{-1} e^{g_0} \bE^S \poprisk{W} 
        \end{align*}
        \emph{simultaneously} for every posterior $\bP_W^S$ and all $g_0$ and $g_1$ in $\bR$ such that $g_0 \geq g_1$.

        To finalize the proof, note that the optimal value of the parameter $g_0$ is $\log \big( \nicefrac{\gamma}{(\gamma - \bE^S \poprisk{W})} \big)$ and therefore since $\gamma > 1$ and $\bE^S \poprisk{W} \in [0,1]$, then $g_0 \geq 0$. Finally, letting $c \coloneqq e^{-g_0} \in (0,1]$ and re-arranging the terms recovers the bound in the theorem. %
\hfill\BlackBox\\[2mm]

\subsection{Comparison between the fast-rate and mixed-rate bounds}
\label{subapp:comparison_fast_rate_bounds}

Just by inspecting their equations, it is apparent that the proposed mixed-rate bound of  \Cref{th:mixed_rate_bound} is tighter than those from~\citet{tolstikhin2013pac} and \citet{rivasplata2019pac}. However, it is not directly obvious that the presented fast-rate bound of \Cref{th:fast_rate_bound_strong} is tighter than \citet{thiemann2017strongly}'s \Cref{th:thiemann_pac_bayes}. In fact, even \Cref{cor:fast_rate_bound} is tighter than this result.

To show this, we will show the stronger statement that $f_{\textnormal{fr}}(r,c) \leq f_{\textnormal{th}}(r,c)$ for all $r, c \geq 0$, where 
\begin{align*}
    f_{\textnormal{th}}(r,c) = \inf_{\lambda \in  (0,2)} \Bigg \{ \frac{r}{1 - \frac{\lambda}{2}} + \frac{c}{\lambda \big(1 - \frac{\lambda}{2} \big)} \Bigg \} \textnormal{ and } f_{\textnormal{fr}}(r,c) = \inf_{\gamma > 2} \Bigg \{ \gamma \log \Big(\frac{\gamma}{\gamma - 1} \Big) r +  \gamma \ c\Bigg \}.
\end{align*}
If this holds, then \Cref{cor:fast_rate_bound} is tighter than \Cref{th:thiemann_pac_bayes} as enlarging the optimization set in $f_{\textnormal{fr}}(r,c)$ from $\{\gamma>2\}$ to $\{\gamma>1\}$ will only improve the bound.

Note that with the change of variable $\gamma = \big( \lambda (1 - \nicefrac{\lambda}{2}) \big)^{-1}$, if $\lambda \in (0,2)$, then $\gamma > 2$. This way, we may re-write $f_{\textnormal{fr}}$ in terms of a minimization over $\lambda \in (0,2)$
\begin{align*}
    f_{\textnormal{fr}}(r,c) = \inf_{\lambda \in (0,2)} \Bigg \{ \frac{r}{\lambda \big(1 - \frac{\lambda}{2} \big)} \log \frac{2}{(\lambda - 2) \lambda + 2} + \frac{c}{\lambda \big(1 - \frac{\lambda}{2} \big)} \Bigg \}.
\end{align*}

Finally, noting that 
\begin{equation*}
    \frac{1}{\lambda} \log \frac{2}{(\lambda - 2) \lambda + 2} \leq 1
\end{equation*}
for all $\lambda \in (0,2)$ completes the proof.

Similarly, it can also be shown that the mixed-rate bound from \Cref{th:mixed_rate_bound}, which is itself a relaxation of the fast-rate bound of \Cref{cor:fast_rate_bound}, is also tighter than the fast-rate bound from \citep{thiemann2017strongly}. In this case, we will show the stronger statement that $f_{\textnormal{mr}}(r,c) \leq f_{\textnormal{th}}(r,c)$ for all $r, c \geq 0$, where
\begin{equation}
\label{eq:mixed_rate_relaxed}
        f_{\textnormal{mr}}(r,c) = \inf_{\gamma > 2} \Bigg \{ \frac{1}{2} \cdot \frac{2 \gamma - 1}{\gamma-1} r +  \gamma \ c\Bigg \}.
\end{equation}
As above, as \Cref{th:mixed_rate_bound} is the closed-form expression obtained optimizing the equivalent of~\eqref{eq:mixed_rate_relaxed} on the larger set $\{ \gamma > 1 \}$ (that is, optimizing~\eqref{eq:fast_rate_bound_relaxed}),  showing this statement suffices.

Again, letting $\gamma = \big( \lambda (1 - \nicefrac{\lambda}{2}) \big)^{-1}$, if $\lambda \in (0,2)$ allows us to write $f_{\textnormal{mr}}$ in terms of a minimization over $\lambda \in (0,2)$
\begin{align*}
    f_{\textnormal{mr}}(r,c) = \inf_{\lambda \in (0,2)} \Bigg \{ \frac{1}{2} \cdot \frac{\lambda^2 - 2 \lambda + 4}{\lambda^2 - 2 \lambda + 2} r + \frac{c}{\lambda \big(1 - \frac{\lambda}{2} \big)} \Bigg \}.
\end{align*}

Finally, noting that
\begin{equation*}
    \frac{1}{2} \cdot \frac{\lambda^2 - 2 \lambda + 4}{\lambda^2 - 2 \lambda + 2} \leq \frac{1}{1 - \frac{\lambda}{2}}
\end{equation*}
for all $\lambda \in (0,2)$ completes the proof.

\subsection{Example: PAC-Bayes with backprop using the fast and mixed-rate bounds}
\label{subapp:pbb}

In \Cref{sec:specialized_pac_bayes_bounds_bounded_losses}, we mentioned that methods that use PAC-Bayes bounds to optimize the posterior, such as PAC-Bayes with backprop~\citep{rivasplata2019pac,perez2021tighter}, could benefit from using the bounds from \Cref{th:fast_rate_bound_strong,th:mixed_rate_bound}. In this subsection of the appendix, we provide an example showcasing that this is the case. 

The PAC-Bayes with backprop method~\citep{rivasplata2019pac, perez2021tighter} considers a model $\mathfrak{n}_w$ parameterized by $w \in \bR^d$ and a prior distribution $\bQ_W$ over the parameters, for instance $\bQ_W = w_0 + \sigma_0 \cN(0,I_d)$. Then, the parameters are updated using stochastic gradient descent on the objective
\begin{equation*}
    \emprisk{w}{s} + f_{\textnormal{bound}}(w;\bQ_W),
\end{equation*}
where $\emprisk{w}{s}$ is the empirical risk on the training data realization and $f_{\textnormal{bound}}(w;\bQ_W)$ is extracted from a PAC-Bayes bound evaluated on the parameters $w \in \bR^d$ with prior $\bQ_W$. With an appropriate choice of the posterior $\bP_W^S$, the bound function  $f_{\textnormal{bound}}$ is calculable and the said posterior can be constructed, for instance  $\bP_{W}^S(w) = w + \sigma_0 \cN(0, I_d)$. After the iterative procedure is completed, the empirical risk $\bE^{S=s} \emprisk(W,s)$ is bounded using the Seeger--Langford bound~\citep{langford2001bounds,seeger2002pac} with a Monte Carlo estimate of the posterior parameters of $m$ samples with confidence $1-\beta'$, and the population risk is bounded also using the Seeger--Langford bound~\citep{langford2001bounds,seeger2002pac} with the number of training samples $n$ and a confidence $1- \beta$, amounting for a total confidence of $1 - (\beta' + \beta)$. For more details, please check~\citep{rivasplata2019pac, perez2021tighter}.

Using \citet{thiemann2017strongly}'s \Cref{th:thiemann_pac_bayes}, \citet{rivasplata2019pac}'s \Cref{th:rivasplata_pac_bayes}, or the classical \citet{mcallester2003pac}' bound~\eqref{eq:mcallester_with_germain_and_mauer_pac_bayes} as an objective can be harmful since they penalize too harshly the dependence-confidence term dominated by the normalized dependency $\nicefrac{\relent(\bP_W^S \Vert \bQ_W)}{n}$. Hence, SGD steers the parameters towards places too close to the prior, potentially avoiding other posteriors that achieve lower empirical error and have an overall better population risk. In this sense, it makes sense that bounds such as the proposed fast- and mixed-rate from \Cref{th:fast_rate_bound_strong,th:mixed_rate_bound} or the Seeger--Langford bound~\citep{langford2001bounds,seeger2002pac}, with \citet{reeb2018learning}'s gradients, would lead to said posteriors. This is verified in \Cref{table:results} for a convolutional network and the MNIST dataset. For the fast-rate bound from \Cref{th:fast_rate_bound_strong,cor:fast_rate_bound}, at each iteration the approximately optimal $\gamma$ given after the theorem is employed, thus updating the posterior and the parameter alternately. We saw that the approximation of $\gamma$ is good both by comparing the results of the final posterior in \Cref{table:results} and the coefficients of the empirical risk and the dependence-confidence term in \Cref{fig:comparison_coefficients} with those obtained from the Seeger--Langforfd bound~\citep{langford2001bounds,seeger2002pac} with \citet[Appendix A]{reeb2018learning}'s gradients. After a few iterations, once the empirical risk is small and \Cref{cor:fast_rate_bound} is a good approximation of \Cref{th:fast_rate_bound_strong}, the gradients are close to each other.

\begin{remark}
    \citet{lotfi2022pac} obtain even tighter population risk certificates for networks on the MNIST dataset (11.6 \%) considering a compression approach to the PAC Bayes bound from~\citet{catoni2007pac}. Therefore, their results could be tightened further using our strengthened version from~\Cref{th:catoni_pac_bayes_uniform}. Nonetheless, the goal of this example is not to propose a method that obtains state-of-the-art certificates, but to showcase that the tightness of the tractable bounds in \Cref{sec:specialized_pac_bayes_bounds_bounded_losses} can improve methods that employ PAC-Bayes bounds to find a suitable posterior.
\end{remark}

\begin{table}[ht]
  \caption{Population risk certificate, empirical risk, and normalized dependency of the posterior obtained with PAC-Bayes with Backprop~\citep{rivasplata2019pac,perez2021tighter} using Gaussian priors and different objectives. The best risk certificates are highlighted in bold face, and the second best is highlighted in italics. $^{*}$This refers to the normalized dependency $\nicefrac{\relent(\bP_W^S \Vert \bQ_W)}{n}$. $^{**}$The gradients for the Seeger--Langford~\citep{langford2001bounds,seeger2002pac} bound are not calculated from the bound but hard-coded following \citet[Appendix A]{reeb2018learning}.}
  \label{table:results}
  \centering
  \begin{tabular}{lccc}
    \toprule
    Objective & Risk certificate & Empirical risk & Dependency$^*$ \\
    \midrule
    \Cref{th:rivasplata_pac_bayes}~\citep{rivasplata2019pac} & 0.20870 & 0.11372 & 0.03117 \\
    \Cref{th:thiemann_pac_bayes}~\citep{thiemann2017strongly} & 0.21159 &  0.11053 &  0.03526 \\
    \Cref{eq:mcallester_with_germain_and_mauer_pac_bayes}~\citep{mcallester2003pac} & 0.23658 & 0.23658 & 0.02715 \\
    \Cref{cor:fast_rate_bound} [ours] & \textbf{0.17501} & 0.07054 & 0.04649 \\
    \Cref{th:mixed_rate_bound} [ours] & \textit{0.19763} & 0.09214 & 0.04159 \\
    \midrule 
    \Cref{th:seeger_langford_pac_bayes}~(Seeger--Langford bound)$^{**}$ & \textbf{0.16922} & 0.06701 & 0.04594 \\
    \bottomrule
  \end{tabular}
\end{table}

\begin{figure}
    \centering
    \includegraphics[width=0.6\textwidth]{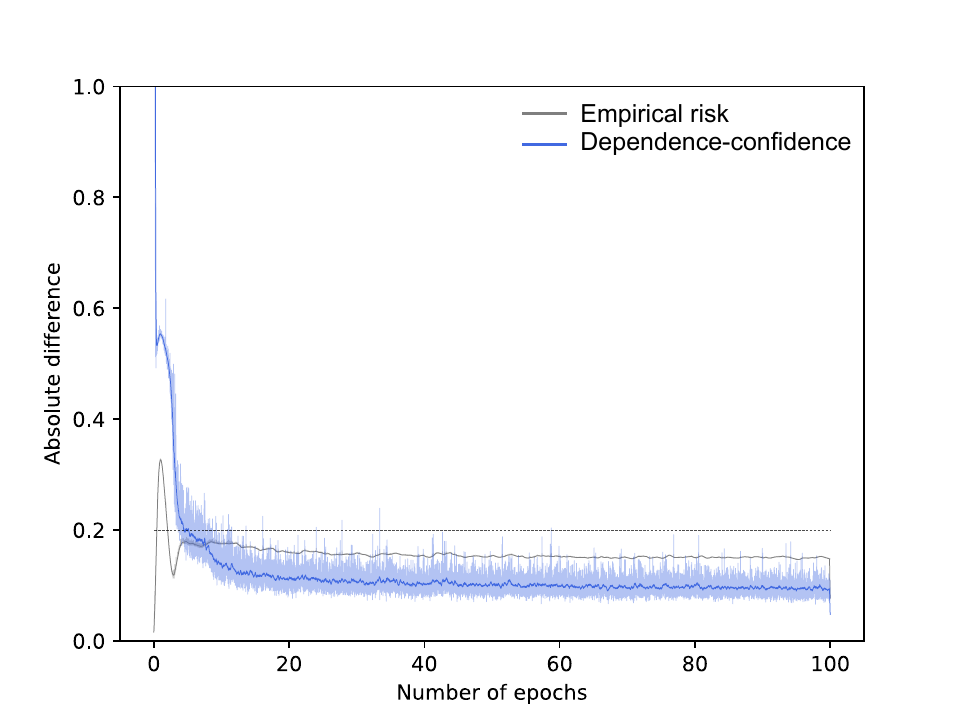}
    \caption{Absolute difference between the coefficients of the empirical risk (gray) and the dependence-confidence term (blue) of the gradients of the Seeger--Langford bound~\citep{langford2001bounds,seeger2002pac} from \citet{reeb2018learning} and the fast-rate bound (\Cref{cor:fast_rate_bound}) using the approximate optimal $\gamma$.}
    \label{fig:comparison_coefficients}
\end{figure}

\subsubsection{Experimental details}

All calculations were performed using the original code from PAC-Bayes with backprop: \url{https://github.com/mperezortiz/PBB}. The file modified to include our bounds and the hard-coded gradients from \citet{reeb2018learning} is \texttt{bounds.py}. The convolutional network architecture consists of two convolutional layers with 32 and 64 filters respectively and a kernel size of 3. The last convolutional layer is followed by a max pooling layer with a kernel size of 2 and two linear layers with 128 and 10 nodes respectively. Between all layers there is a ReLU activation function.

For all experiments, the standard deviation of the prior was $\sigma_0 = 0.1$. The learning rate was 0.01 for all experiments except for \citet{rivasplata2019pac}'s \Cref{th:rivasplata_pac_bayes} objective which was 0.005. The momentum was 0.99 for all objectives except for \citet{thiemann2017strongly}'s \Cref{th:thiemann_pac_bayes} which was 0.95. The number of Monte Carlo samples was $m=150,000$, the minimum probability $p_{\textnormal{min}}$ (see \citep{perez2021tighter} for the details) was $10^{-5}$, and the confidence parameters were $\beta' = 0.01$ and $\beta = 0.025$ respectively. The networks were trained for 100 epochs and a batch size of 250 to mimic the setting in \citep{perez2021tighter}.

To find the hyper-parameters, we used the same grid search as \citet{perez2021tighter}. That is, the standard deviation of the prior was selected over $\{ 0.005, 0.01, 0.02, 0.03, 0.04, 0.05, 0.1 \}$, the learning rate over $\{0.001, 0.005, 0.01 \}$, and the momentum over $\{0.95, 0.99 \}$. Therefore, the confidence parameters were updated to $\beta' \leftarrow \nicefrac{\beta'}{42}$ and $\beta \leftarrow \nicefrac{\beta}{42}$ respectively to comply with the union bound and maintain the guarantees.

All experiments were done on a TESLA V100 with 32GB of memory. Each full run takes approximately 110 hours with most of the time taken on the Monte Carlo sampling for the risk certificates calculation. For  $42 \cdot 5$ runs this amounts to approximately 23,100 hours which is around 32 months.  Since the time was prohibitive for us, the hyper-parameter search was done without the Monte Carlo sampling, where each run took around $25$ minutes amounting to a total of 87.5 hours or less than 4 days. Then, the final certificates were calculated using the full Monte Carlo sampling adding an extra 550 hours or around 23 days. In summary, the total amount of computing was approximately 27 days.

\section{Details of \Cref{sec:pac_bayes_beyond_bounded_losses}}
\label{app:details_pac_bayes_beyond_bounded_losses}

This section of the appendix is devoted to providing alternative proofs and supplementary context and examples to the results from~\Cref{sec:pac_bayes_beyond_bounded_losses}.

\subsection{Convex conjugates and their inverse}
\label{subapp:convex_conjugates_and_inverses}

The convex conjugate is an important concept in convex analysis and in optimization theory. It is also particularly important to derive concentration inequalities through the Cramér--Chernoff method \citep[Section 2.2]{boucheron2013concentration} as shown in \Cref{sec:pac_bayes_beyond_bounded_losses}.

\begin{definition}
\label{def:convex_conjugate}
The \emph{convex conjugate} (or just \emph{conjugate} or \emph{Fenchel-Legendre's dual}) of a function $\psi: \bR \to \bR$ is defined as 
\begin{equation*}
    \psi_*(x) \coloneqq \sup_{\lambda \in \textnormal{dom}(\psi)} \big \{ \lambda x - \psi(\lambda) \big \}.
\end{equation*}
\end{definition}

Specifically, when the function $\psi$ is convex, the convex conjugate is also known as the \emph{Legendre's transform}, and when $\psi$ represents or dominates a CGF as in \Cref{ass:bounded_cgf}, it is known as the \emph{Cramér's transform}. We fall under both of these situations so the particular results for these transforms apply. A particularly important result is the following, which states an expression of the inverse of the convex conjugate of a smooth convex function. This result is used both to obtain the classical Chernoff's inequality~\citep[Section 2.2]{boucheron2013concentration} and its PAC-Bayes analogue from \Cref{th:pac_bayes_chernoff_analogue}.

\begin{lemma}[{\bf \citet[Lemma 2.4]{boucheron2013concentration}}]
\label[lemma]{lemma:boucheron}
Let $\psi$ be a convex and continously differentiable function defined on $[0,b)$ where $0 < b \leq \infty$. Assume that $\psi(0) = \psi'(0) = 0$. Then, the convex conjugate $\psi_*$ is a non-negative convex and non-decreasing function on $[0, \infty)$. Moreover, for every $y \geq 0$, the set $\{ x \geq 0 : \psi_*(x) > y \}$ is non-empty and the generalized inverse of $\psi_*$, defined as $\psi_*^{-1}(y) \coloneqq \inf \{ x \geq 0 : \psi_*(x) > y \}$ can also be written as
\begin{equation*}
    \psi_*^{-1}(y) = \inf_{\lambda \in (0,b)} \bigg \{ \frac{y + \psi(\lambda)}{\lambda} \bigg \}.
\end{equation*}
\end{lemma}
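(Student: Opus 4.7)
The plan is to establish the four claims in order: non-negativity, convexity, monotonicity, non-emptiness of the level sets, and finally the variational formula for the generalized inverse. The first three are essentially structural consequences of the definition $\psi_*(x) = \sup_{\lambda \in [0,b)} \{\lambda x - \psi(\lambda)\}$, while the fourth relies on a coercivity argument and the fifth on a change-of-variables inside the defining infimum.

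First, I would observe that $\psi \geq 0$ on $[0,b)$: convexity together with $\psi(0)=\psi'(0)=0$ gives $\psi(\lambda) \geq \psi(0) + \psi'(0)\lambda = 0$. Evaluating the sup at $\lambda=0$ yields $\psi_*(x) \geq 0$, so $\psi_*$ is non-negative. Convexity of $\psi_*$ is immediate because $x \mapsto \lambda x - \psi(\lambda)$ is affine for each fixed $\lambda$, and the pointwise supremum of a family of affine functions is convex. Monotonicity on $[0,\infty)$ is equally direct: if $0 \leq x_1 \leq x_2$, then for every $\lambda \geq 0$ we have $\lambda x_1 - \psi(\lambda) \leq \lambda x_2 - \psi(\lambda)$, and taking suprema preserves the inequality.

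Next, to show $\{x \geq 0 : \psi_*(x) > y\}$ is non-empty for every $y \geq 0$, I would fix any $\lambda_0 \in (0,b)$ (which exists since $b>0$) and note that $\psi_*(x) \geq \lambda_0 x - \psi(\lambda_0)$ for all $x \geq 0$. The right-hand side is affine in $x$ with positive slope $\lambda_0$, so it tends to $+\infty$ as $x \to \infty$; in particular it exceeds $y$ for $x$ large enough. This simultaneously shows that $\psi_*$ is unbounded above and hence that $\psi_*^{-1}(y)$ is well-defined as a finite infimum.

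The final and only substantive step is the variational formula. The key identity is the rewriting
\begin{equation*}
\{x \geq 0 : \psi_*(x) > y\} = \bigcup_{\lambda \in (0,b)} \{x \geq 0 : \lambda x - \psi(\lambda) > y\},
\end{equation*}
where the restriction to $\lambda > 0$ is legitimate because the $\lambda = 0$ term contributes $\{x \geq 0 : 0 > y\} = \emptyset$ (using $y \geq 0$). For each fixed $\lambda \in (0,b)$, dividing through by $\lambda > 0$ shows $\{x \geq 0 : \lambda x - \psi(\lambda) > y\} = \{x \geq 0 : x > (y+\psi(\lambda))/\lambda\}$, whose infimum is exactly $(y+\psi(\lambda))/\lambda$ because $y+\psi(\lambda) \geq 0$. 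Taking the infimum of a union equals the infimum over $\lambda$ of the infima of the individual sets, which gives the claimed formula. The only potential subtlety, and the part I would treat most carefully, is justifying $\inf \bigcup_\lambda A_\lambda = \inf_\lambda \inf A_\lambda$ when some $A_\lambda$ might be empty; but this is harmless since empty sets contribute $+\infty$ and the supremum argument above guarantees at least one $A_\lambda$ is non-empty.
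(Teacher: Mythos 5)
Your proof is correct; the paper itself imports this lemma from \citet[Lemma 2.4]{boucheron2013concentration} without reproving it, and your argument is essentially the standard one from that reference: the structural properties follow from $\psi_*$ being a supremum of affine functions with nonnegative slopes, and the variational formula follows from the equivalence $\psi_*(x)>y \iff \exists\,\lambda\in(0,b):\ x>(y+\psi(\lambda))/\lambda$, with the $\lambda=0$ term correctly discarded and the nonnegativity of $y+\psi(\lambda)$ correctly invoked to identify the infimum of each set. No gaps.
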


\subsection{Proof of \Cref{lemma:extension_banerjee}}
\label{subapp:proof_extension_banerjee}

Similarly to what we did in~\Cref{sec:details_specialized_pac_bayes_bounds_bounded_losses}, we first introduced \citet[Theorem III on Chapter XI]{gibbs1902elementary}' variational principle, which is a dual formulation to the~\citet{donsker1975asymptotic}~\Cref{lemma:dv-var-rep}. 

\begin{lemma}[Gibbs' variational principle]
    \label[lemma]{lemma:gvp}
    Let $\cX$ be a measurable space, $\bQ$ be a probability measure on $\cX$, and $Y$ be a random variable distributed according to $\bQ$. Further let $g$ be a measurable function on $\cX$ such that $\bE \big[ e^{g(Y)} \big] < \infty$ and $\cP_{\bQ}(\cX)$ be the set of all probability measures $\bP$ on $\cX$ such that $\bP \ll \bQ$. Then, 
	\begin{equation*}
		\log \bE \Big[e^{g(Y)} \Big] = \sup_{\bP \in \cP_\bQ(\cX)} \Big\{ \bE \big[g(X)\big]  - \relent(\bP \Vert \bQ) \Big\}.
	\end{equation*}
\end{lemma}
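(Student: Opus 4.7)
The plan is to prove the supremum is attained by the \emph{Gibbs tilted measure}, which I will call $\bP^\star$, defined via the Radon--Nikodym derivative $\nicefrac{d\bP^\star}{d\bQ}(y) = \nicefrac{e^{g(y)}}{Z}$, where $Z \coloneqq \bE\big[e^{g(Y)}\big]$. Since $Z \in (0,\infty)$ by hypothesis, $\bP^\star$ is a well-defined probability measure on $\cX$ and $\bP^\star \in \cP_\bQ(\cX)$.

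The key algebraic identity I would establish is the following. For an arbitrary $\bP \in \cP_\bQ(\cX)$ with density $p \coloneqq \nicefrac{d\bP}{d\bQ}$, one has $\nicefrac{d\bP}{d\bP^\star} = p \cdot Z \cdot e^{-g}$ (on the support of $\bP^\star$, which contains that of $\bP$). A direct computation then yields
\begin{equation*}
    \relent(\bP \Vert \bP^\star) = \bE_{\bP}\big[\log p\big] + \log Z - \bE_{\bP}\big[g(X)\big] = \relent(\bP \Vert \bQ) + \log Z - \bE_{\bP}\big[g(X)\big].
\end{equation*}
Rearranging gives the central identity
\begin{equation*}
    \bE_{\bP}\big[g(X)\big] - \relent(\bP \Vert \bQ) = \log Z - \relent(\bP \Vert \bP^\star).
\end{equation*}

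From this identity the result is essentially immediate: since $\relent(\bP \Vert \bP^\star) \geq 0$ by Gibbs' inequality (the non-negativity of relative entropy), the right-hand side is bounded above by $\log Z$, with equality if and only if $\bP = \bP^\star$. Taking the supremum over $\bP \in \cP_\bQ(\cX)$ and observing that $\bP^\star$ lies in this set gives $\sup_{\bP \in \cP_\bQ(\cX)}\{\bE_\bP[g(X)] - \relent(\bP \Vert \bQ)\} = \log Z = \log \bE[e^{g(Y)}]$, as claimed.

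The main obstacle I anticipate is purely technical rather than conceptual: ensuring all the integrals above are well-defined and that the identity holds without ambiguous $\infty - \infty$ forms. In particular, one needs to verify that $\bE_\bP[g(X)]$ is meaningful and that $\relent(\bP \Vert \bQ) < \infty$ is the only interesting case (if $\relent(\bP \Vert \bQ) = \infty$, the bound holds trivially since we can separately bound $\bE_\bP[g(X)]$ using the finiteness of $Z$ via a Donsker--Varadhan-type argument, or simply exclude this case since it cannot attain the supremum). Handling the support condition (so that $\log(\nicefrac{d\bP}{d\bP^\star})$ is defined $\bP$-a.s.) is routine because $\bP^\star$ has the same null sets as $\bQ$ on $\{g > -\infty\}$ and the absolute continuity $\bP \ll \bQ$ transfers to $\bP \ll \bP^\star$.
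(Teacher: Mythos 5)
Your proof is correct, but the paper does not actually prove this lemma: it is stated as a classical result, attributed to \citet[Theorem III on Chapter XI]{gibbs1902elementary} and presented as the dual formulation of the Donsker--Varadhan representation (\Cref{lemma:dv-var-rep}), which is itself also quoted without proof. Your argument --- exponential tilting to the Gibbs measure $\bP^\star$ with $\nicefrac{d\bP^\star}{d\bQ} = \nicefrac{e^{g}}{Z}$, the compensation identity $\bE_\bP[g(X)] - \relent(\bP \Vert \bQ) = \log Z - \relent(\bP \Vert \bP^\star)$, and non-negativity of relative entropy --- is the standard self-contained derivation, and it buys something the bare citation does not: it exhibits the maximizer explicitly and shows the supremum is attained at $\bP^\star$, which indeed lies in $\cP_\bQ(\cX)$ because $Z \in (0,\infty)$ and $e^{g} > 0$ everywhere, so $\bP^\star$ and $\bQ$ are mutually absolutely continuous and $\bP \ll \bQ$ implies $\bP \ll \bP^\star$. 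The technical caveats you flag are real but routine: when $\relent(\bP \Vert \bQ) = \infty$ the objective is $-\infty$ by the usual convention and such $\bP$ cannot attain the supremum since $\log Z > -\infty$; and when $\relent(\bP \Vert \bQ) < \infty$ one obtains $\bE_\bP[g(X)^+] < \infty$ directly from the elementary inequality $uv \leq u \log u - u + e^{v}$ applied with $u = \nicefrac{d\bP}{d\bQ}$ and $v = g^+$ (using $\bE[e^{g(Y)^+}] \leq 1 + Z < \infty$), so $\bE_\bP[g(X)]$ is well defined in $[-\infty, \infty)$ and no $\infty - \infty$ form arises in the identity. With those points made explicit, your proof is a complete and correct substitute for the citation.
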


\noindent{\bf Alternative proof of \Cref{lemma:extension_banerjee}\ } \citet{gibbs1902elementary}' variational principle from~\Cref{lemma:gvp} states that for all measurable functions $g$ such that $e^g$ is $\bQ_W$ integrable
\begin{equation}
    \label{eq:gvp}
    \log \bE^{S} \Big[ e^{g(W')} \Big] = \sup_{\bP_W^{S} \in \cP_\bQ(\cW)} \big\{ \bE^{S} g(W) - \relent(\bP_W^{S} \Vert \bQ_W)  \big\} \textnormal{ a.s.,}
\end{equation}
where $W'$ is distributed according to $\bQ_W$ and where $\cP_\bQ(\bW)$ is the set of all measures on $\cW$ such that $\bP_W^{S} \ll \bQ$ a.s..
In this case, let $g(w;s) \coloneqq \lambda \big( \poprisk{w} - \emprisk{w}{s} \big)$ for some $\lambda \in (0, b/n)$. The first term in the right hand side of~\eqref{eq:gvp} is directly $\lambda \bE^{S} [ \poprisk{W} - \emprisk{W}{S}]$. For the term in the left hand side, one may employ Markov's inequality and Fubini's theorem to see that with probability no smaller than $1-\beta$
\begin{equation*}
    \bE^S \Big[ e^{g(W';S)} \Big] \leq \frac{1}{\beta} \cdot \bE \bigg[ \bE^{W'} \Big[ e^{g(W';S)} \Big] \bigg].
\end{equation*}
Then, since $\Lambda_{-\ell(w,Z)}(\lambda) \leq \psi(\lambda)$ for all $w \in \cW$ it holds that $\Lambda_{\poprisk{w}-\emprisk{w}{S}} \leq n \psi(\nicefrac{\lambda}{n})$. Indeed,
\begin{equation*}
    \log \bE \bigg[ e^{\lambda \big(\poprisk{w} - \emprisk{w}{S} \big)}\bigg] = \log \bE \bigg[ e^{\frac{\lambda}{n} \sum_{i=1}^n \big( \bE \ell(w,Z) - \ell(w,Z_i) \big)}\bigg]  \leq n \psi \Big( \frac{\lambda}{n} \Big).
\end{equation*}
Therefore, with probability no smaller than $1-\beta$
\begin{equation*}
    \log \bE^S \bigg[ e^{\lambda \big(\poprisk{W'} - \emprisk{W'}{S} \big)}\bigg] \leq n \psi \Big( \frac{\lambda}{n} \Big) + \log \frac{1}{\beta}.
\end{equation*}
Combining the results for both terms together, for all $\lambda \in (0, \nicefrac{b}{n})$, with probability larger or equal than $1 - \beta$
\begin{equation*}
     \bE^S \poprisk{W} \leq \bE^S \emprisk{W}{S} + \frac{1}{\lambda} \bigg[ \relent(\bP_W^S \Vert \bQ_W) + \log \frac{1}{\beta} + n \psi \Big( \frac{\lambda}{n}\Big) \bigg]
\end{equation*}
holds \emph{simultaneously} for all $\bP_W^S \in \cP_{\bQ_W}(\cW)$, where the fact that it holds uniformly for all the Markov kernels $\bP_W^S$ comes from the supremum. Since the result holds for all $\lambda \in (0, \nicefrac{b}{n})$, performing the substitution $\lambda \leftarrow n \lambda$ completes the proof.
\hfill\BlackBox\\[2mm]

\subsection{{The optimization of $\lambda$ in \eqref{eq:pac_bayes_cgf_with_lambda_and_k}}}
\label{subapp:the_optimization_of_lambda}

Recall that in the proof of \Cref{th:pac_bayes_chernoff_analogue} we considered the event $\cB_\lambda$ to be the complement of the event in~\eqref{eq:pac_bayes_cgf_with_lambda} such that $\bP[\cB_\lambda] \leq \beta$ for all $\lambda \in (0,b)$. This event is parameterized with $\lambda$ and, given the event $\cE_k = \{ \lceil \relent(\bP_W^S \Vert \bQ_W) \rceil = k \}$, with probability no more than $\bP[\cB_\lambda | \cE_k]$, there exists some posterior $\bP_W^S$ such that
\begin{equation}
    \bE^S \poprisk{W} > \bE^S \emprisk{W}{S} + \frac{1}{\lambda} \bigg[ \frac{k + \log \frac{1}{\beta}}{n} + \psi(\lambda) \bigg]
    \tag{\ref{eq:pac_bayes_cgf_with_lambda_and_k}}
\end{equation}
for all $\lambda \in (0,b)$. Then, after optimizing the parameter $\lambda$ on the right hand side of~\eqref{eq:pac_bayes_cgf_with_lambda_and_k} using \Cref{lemma:boucheron}, %
we considered the event $\cB_{\lambda_k}$ resulting of that optimization such that $\bP[\cB_{\lambda_k}] \leq \beta$.
This notation is imprecise, the infimum in \eqref{eq:pac_bayes_cgf_with_lambda_and_k} is attained either by a $\lambda_k \in (0,b)$ or by letting $\lambda \to b$. It will never be attained when $\lambda \to 0$ as $\psi(0) = 0$ and the term inside the infimum goes to $\infty$ when $\lambda \to 0$. In the case where the infimum is attained by letting $\lambda \to b$, by continuity, the desired inequality~\eqref{eq:pac_bayes_cgf_with_k} still holds and the event described by $\lim_{\lambda \to b} \cB_\lambda$ is still such that $\bP[\lim_{\lambda \to b} \cB_\lambda] \leq \beta$. We hide these details from the main text for clarity of exposition.

\subsection{PAC-Bayes bounds for different loss tail behaviors}
\label{subapp:pac_bayes_different_tail_behaviors}

\Cref{th:pac_bayes_chernoff_analogue} describes different high-probability PAC-Bayes bounds for different tail behaviors of the loss $\ell(w,Z)$. The next corollary collects some of the most common tail behaviors and their resulting PAC-Bayes bound.

\begin{corollary}
\label[corollary]{cor:pac_bayes_chernoff_analogue_particular}
Consider a training set $S$ with $n$ samples. Let $\bQ_W$ be any prior independent of $S$ and define $\mathfrak{C}_{n, \beta, S} \coloneqq \frac{1}{n} \big(\relent(\bP_W^S \Vert \bQ_W) + \log \tfrac{1}{\beta} \big)$ and the event $\cE \coloneqq \{ \relent( \bP_W^S \Vert \bQ_W) \leq n \}$. Then:
\begin{enumerate}
    \item if the loss $\ell$ has a bounded range $[a,b]$, where $-\infty < a \leq b < \infty$, then with probability no smaller than $1-\beta$
    \begin{equation*}
        \bE^S \poprisk{W} \leq  \bE^S \emprisk{W}{S} + \sqrt{(b-a)^2 \mathfrak{C}_{2n, \nicefrac{\beta}{en}, S}}
    \end{equation*}
    holds \emph{simultaneously} for every posterior $\bP_W^S$;

    \item if the loss $\ell(w,Z)$ is $\sigma^2$-subgaussian for all hypotheses $w \in \cW$, then with probability no smaller than $1-\beta$
    \begin{equation*}
        \bE^S \poprisk{W} \leq \bI_\cE \cdot \Bigg[ \bE^S \emprisk{W}{S} + \sqrt{2 \sigma^2 \mathfrak{C}_{n, \nicefrac{\beta}{en}, S}} \Bigg] + \bI_{\cE^c} \cdot \esssup \bE^S \poprisk{W}
    \end{equation*}
    holds \emph{simultaneously} for every posterior $\bP_W^S$;

    \item if the loss $\ell(w,Z)$ is $(\sigma^2, c)$-subgamma for all hypotheses $w \in \cW$, then with probability no smaller than $1-\beta$
    \begin{equation*}
        \bE^S \poprisk{W} \leq \bI_\cE \cdot \Bigg[ \bE^S \emprisk{W}{S} + \sqrt{2 \sigma^2 \mathfrak{C}_{n, \nicefrac{\beta}{en}, S}} + c \ \mathfrak{C}_{n, \nicefrac{\beta}{en}, S} \Bigg] + \bI_{\cE^c} \cdot \esssup \bE^S \poprisk{W}
    \end{equation*}
    holds \emph{simultaneously} for every posterior $\bP_W^S$;

    \item and if $\ell(w,Z)$ is $(\sigma^2, c)$-subexponential\footnote{Here, we are considering the subexponential characterization of random variables from~\citet[Theorem 2.13]{wainwright2019high}, and not the one given by~\citet[Excercise 2.22]{boucheron2013concentration}.} for all hypotheses $w \in \cW$, then with probability no smaller than $1-\beta$
    \begin{align*}
        \bE^S \poprisk{W} \leq &\bI_\cE \cdot \Bigg[ \bE^S \emprisk{W}{S} + \bI_\cF \cdot \sqrt{2 \sigma^2 \mathfrak{C}_{n, \nicefrac{\beta}{en}, S}} + \bI_{\cF^c} \cdot (c+1) \mathfrak{C}_{n, \nicefrac{\beta}{en}, S} \Bigg] \\ 
        &+ \bI_{\cE^c} \cdot \esssup \bE^S \poprisk{W}
    \end{align*}
    holds \emph{simultaneously} for every posterior $\bP_W^S$,
    where $\cF$ is the event $\cF \coloneqq \{ \relent(\bP_W^S \Vert \bQ_W) \leq \frac{n \sigma^2}{2c} - \log \frac{e}{\beta} \}$.
\end{enumerate}

\end{corollary}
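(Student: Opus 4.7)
The plan is to specialize \Cref{th:pac_bayes_chernoff_analogue} to each of the four tail behaviors by exhibiting an appropriate dominating CGF $\psi$ (via Hoeffding's lemma and the defining subgaussian, subgamma, and subexponential inequalities), computing the generalized inverse of its convex conjugate via \Cref{lemma:boucheron}, and then substituting $y = \mathfrak{C}_{n, \nicefrac{\beta}{en}, S}$ into~\eqref{eq:pac_bayes_chernoff_analogue}. No new probabilistic reasoning is needed beyond the machinery already developed in the paper.

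For the bounded range case, Hoeffding's lemma gives $\psi(\lambda) = \lambda^2 (b-a)^2/8$ as a valid upper bound for $\Lambda_{-\ell(w,Z)}(\lambda)$, whence \Cref{lemma:boucheron} yields $\psi_*^{-1}(y) = (b-a)\sqrt{y/2}$, and the identity $\mathfrak{C}_{2n, \nicefrac{\beta}{en}, S} = \mathfrak{C}_{n, \nicefrac{\beta}{en}, S}/2$ produces the announced form. The indicator on $\cE$ can be dispensed with because in Hoeffding's case the optimal $\lambda$ in \Cref{lemma:extension_banerjee} is data-independent, so the McAllester-style argument goes through without the event-space discretization underlying \Cref{th:pac_bayes_chernoff_analogue}. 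The subgaussian case is immediate: the definition gives $\psi(\lambda) = \lambda^2 \sigma^2 / 2$ on the whole half-line, and \Cref{lemma:boucheron} produces $\psi_*^{-1}(y) = \sqrt{2 \sigma^2 y}$, which is plugged in verbatim.

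The subgamma case relies on the standard choice $\psi(\lambda) = \lambda^2 \sigma^2 / \bigl(2(1 - c\lambda)\bigr)$ on $[0, 1/c)$, whose Cramér transform satisfies $\psi_*^{-1}(y) = \sqrt{2 \sigma^2 y} + c y$; this is a classical computation obtained by solving a quadratic (cf.~\citet[Section 2.4]{boucheron2013concentration}) and decomposes naturally into the slow $\sqrt{2 \sigma^2 y}$ and fast $c y$ rates visible in the statement.

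The subexponential case is the most delicate. Here $\psi(\lambda) = \lambda^2 \sigma^2 / 2$ is restricted to $\lambda \in (0, 1/c]$; the unconstrained minimiser of $\lambda \mapsto (y + \psi(\lambda))/\lambda$ in \Cref{lemma:boucheron} is $\lambda^\star = \sqrt{2 y / \sigma^2}$, which is admissible iff $y$ is small enough, and otherwise the minimum is attained at the boundary $\lambda = 1/c$. This yields a piecewise $\psi_*^{-1}$: the subgaussian-type rate $\sqrt{2 \sigma^2 y}$ in the small-$y$ regime and a linear rate in the large-$y$ regime, which can be further upper bounded by $(c+1) y$ by comparing $\sigma^2/(2c)$ to $y$ there. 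Translating the threshold on $y = \mathfrak{C}_{n, \nicefrac{\beta}{en}, S}$ back into a condition on $\relent(\bP_W^S \Vert \bQ_W)$ then defines the event $\cF$. I expect the main obstacle to be the careful bookkeeping of the interval on which $\psi$ is defined and the associated regime switch, together with verifying that the linearisation in the large-$y$ regime produces exactly the $(c+1)$ coefficient claimed; everything else reduces to routine convex-conjugate calculations.
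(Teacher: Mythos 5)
Your overall route is the paper's: Points 2--4 are obtained exactly as in the paper by specializing \Cref{th:pac_bayes_chernoff_analogue} with the standard dominating CGFs for each tail class, computing $\psi_*^{-1}$ via \Cref{lemma:boucheron}, and, for the subexponential case, splitting on whether the unconstrained minimizer $\sqrt{2y/\sigma^2}$ lies inside $(0,\nicefrac{1}{c}]$, which is precisely what generates the event $\cF$ and the $(c+1)$ coefficient. Those three points are fine.

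The gap is in your justification for dropping the indicator $\bI_\cE$ in Point 1. You assert that ``in Hoeffding's case the optimal $\lambda$ in \Cref{lemma:extension_banerjee} is data-independent,'' but it is not: with $\psi(\lambda)=\nicefrac{\lambda^2(b-a)^2}{8}$ the minimizer of $\frac{1}{\lambda}\big[\mathfrak{C}_{n,\beta,S}+\psi(\lambda)\big]$ is $\lambda^\star\propto\sqrt{\relent(\bP_W^S\Vert\bQ_W)+\log\nicefrac{1}{\beta}}$, which depends on $S$ through the posterior. This is exactly the obstruction of \citet[Remark 14]{banerjee2021information} that \Cref{th:pac_bayes_chernoff_analogue} exists to circumvent, so you cannot skip the event-space discretization on that basis. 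The paper removes the indicator by a different, elementary observation: it applies the theorem with the more lenient event $\cE'=\{\relent(\bP_W^S\Vert\bQ_W)\le 2n\}$ (matching the $\mathfrak{C}_{2n,\cdot,\cdot}$ normalization) and notes that on $\cE'^c$ one has $\mathfrak{C}_{2n,\nicefrac{\beta}{en},S}>1$, hence $\sqrt{(b-a)^2\,\mathfrak{C}_{2n,\nicefrac{\beta}{en},S}}\ge b-a\ge \bE^S \poprisk{W}-\bE^S\emprisk{W}{S}$, so the stated inequality holds deterministically there and the indicator is superfluous. (A correct alternative shortcut would be to invoke \eqref{eq:mcallester_with_germain_and_mauer_pac_bayes} directly after rescaling the loss to $[0,1]$ and using $\xi(n)\le en$, but that is a different derivation from the one you describe.) With Point 1 repaired along these lines, the rest of your argument goes through.
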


\begin{proof}
    We may prove each point individually:
    \begin{itemize}
        \item Point 2 follows by noting that for $\sigma^2$-subgaussian random variables $\psi(\lambda) = \nicefrac{\lambda^2 \sigma^2}{2}$ and therefore $\psi_*^{-1}(y) = \sqrt{2 \sigma^2 y}$.
        \item Point 1 follows by noting that if a random variable is bounded in $[a, b]$, then it is $(b-a)^2/2$-subgaussian. Then, as hinted in~\Cref{subsubsec:event_cutoff}, we may consider the more lenient event $\cE' \coloneqq \{ \relent(\bP_W^S \Vert \bQ_W) \leq 2n \}$ and use that $\esssup \bE^S \poprisk{W} \leq b-a \leq \sqrt{(b-a)^2 \mathfrak{C}_{2n,\nicefrac{\beta}{en},S}}$.
        \item Point 3 follows by noting that for $(\sigma^2, c)$-subgamma random variables $\psi(\lambda) = \nicefrac{\lambda^2 \sigma^2}{2(1-c \lambda)}$ for all $\lambda \in (0, \nicefrac{1}{c})$ and therefore $\psi_*^{-1}(y) = \sqrt{2 \sigma^2 y} + c y$ \citep[Section 2.4]{boucheron2013concentration}.
        \item Finally, Point 4 follows by noting that for $(\sigma^2, c)$-subexponential ranom objects $\psi(\lambda) = \nicefrac{\lambda^2 \sigma^2}{2}$ for all $\lambda \in (0, \nicefrac{1}{c})$ and therefore 
        \begin{equation*}
            \psi_*^{-1}(y) = \begin{cases}
                \sqrt{2 \sigma^2 y} & \textnormal{if } \lambda = \sqrt{\nicefrac{2 y}{\sigma^2}} \leq \nicefrac{1}{c} \\
                c y + \nicefrac{\sigma^2}{2 c^2} & \textnormal{otherwise}
            \end{cases}.
        \end{equation*}
        The condition for the first case may be rewritten as $y \leq \nicefrac{\sigma^2}{2 c^2}$ and similarly the condition for the second case as $y > \nicefrac{\sigma^2}{2 c^2}$. Hence, we have the inequality
        \begin{equation*}
            \psi_*^{-1}(y) \leq \begin{cases}
                \sqrt{2 \sigma^2 y} & \textnormal{if } y \leq \nicefrac{\sigma^2}{2 c^2} \\
                (c + 1) y & \textnormal{otherwise}
            \end{cases}.
        \end{equation*}
        
    \end{itemize}

\end{proof}

\subsection{A parameter-free version of \citet{wang2015pac} and \citet[Theorem 2.1]{haddouche2023pacbayes}'s  PAC-Bayes bound on martingales}
\label{subapp:closed_form_parameter_free_wang}

\citet{wang2015pac} and \citet{haddouche2023pacbayes} investigate the setting where the dataset $S$ is considered to be a sequence $S^* \coloneqq (Z_i)_{i \geq 1}$ such that $Z_i \in \cZ$, but where there is no restriction in the distribution of the samples $Z_i$, that is, every sample $Z_i$ can depend on all the previous ones. For every $n$, they let $S_n \coloneqq (Z_1, \ldots, Z_n)$ be the restriction of $S^*$ to its first $n$ points. Then, they consider the sequence of $\sigma$-algebras $(\ccF_i)_{i \geq 1}$ to be a filtration adapted to $S^*$, for instance $\ccF_i = \sigma(Z_1, \ldots, Z_i)$. Finally, they consider a martingale difference sequence $(X_i(S_i, w))_{i \geq 1}$ indexed by a hypothesis $w \in \cW$ so that $\bE^{\ccF_{i-1}} X_i(S_i,w) = 0$ for all $w \in \cW$. For instance, let $Y_0 = \sum_{i=1}^n \bE \ell(w,Z_i) $ and $Y_i(S_i, w) = \sum_{i=1}^n \bE^{\ccF_i} \ell(w, Z_i)$ for all $i \geq 1$, then $X_i(S_i, w) = Y_i - Y_{i-1}$. Finally, for all $w \in \cW$, they define the martingale $M_n(w) \coloneqq \sum_{i=1}^n X_i(S_i,w)$ and follow \citet{bercu2008exponential} to also define
\begin{equation*}
    [M]_n(w) \coloneqq \sum_{i=1}^n X_i(S_i,w)^2 \textnormal{ and } \langle M \rangle_n(w) \coloneqq \bE^{\ccF_{i-1}} X_i(S_i,w)^2,
\end{equation*}
where $[M]_n(w)$ acts as an empirical variance term and $\langle M \rangle_n(w)$ as its theoretical counterpart~\citep{haddouche2023pacbayes}.

Then, their main anytime-valid bound for martingales is the following.

\begin{theorem}[{\bf \citet[Theorem 2.4]{wang2015pac}}]
    \label{th:wang_main_th}
    Let $\bQ_W$ be any prior independent of $S_n$ and $(M_n(w))_{n \geq 1}$ be any collection of martingales indexed by $w \in \cW$. Then, for all $\lambda > 0$, all $\beta \in (0,1)$, and \emph{simultaneously} for all $n \geq 1$, with probability no smaller than $1-\beta$
    \begin{equation}
        \label{eq:wang_main_th}
        | \bE^{S_n} M_n(W) | \leq \frac{\relent(\bP_W^{S_n} \Vert \bQ_W) + \log \frac{2(n+1)^2}{\beta}}{\lambda} + \frac{\lambda}{6} \cdot \bE^{S_n} \big[ [M]_n(W) + 2 \langle M \rangle_n(W) \big]
    \end{equation}
    holds \emph{simultaneously} for every posterior $\bP_W^{S_n}$.
\end{theorem}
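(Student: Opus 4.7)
The plan is to follow the standard PAC-Bayes recipe: build an exponential supermartingale indexed by $w \in \cW$, integrate it against the prior, apply Markov's inequality, and invoke the Donsker--Varadhan variational representation (\Cref{lemma:dv-var-rep}) to convert the resulting tail bound into a PAC-Bayes inequality. Anytime validity and the absolute value on the left-hand side are then obtained via two union bounds: one over $n \geq 1$ and one over the sign of $\lambda$.

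\textbf{Step 1 (exponential supermartingale).} For each $w \in \cW$ and each $\lambda > 0$, I would show that the process
\[ L_n^\lambda(w) \coloneqq \exp\!\left( \lambda M_n(w) - \tfrac{\lambda^2}{6}\bigl([M]_n(w) + 2\langle M\rangle_n(w)\bigr)\right), \quad L_0^\lambda(w) \coloneqq 1, \]
is a non-negative supermartingale with respect to $(\ccF_n)$. The key conditional ingredient, applied at each increment $X_i(S_i,w) = M_i(w) - M_{i-1}(w)$, is a Bercu--Touati-type inequality
\[ \bE^{\ccF_{i-1}}\!\left[ \exp\!\left( \lambda X_i - \tfrac{\lambda^2}{6} X_i^2\right)\right] \leq \exp\!\left( \tfrac{\lambda^2}{3}\, \bE^{\ccF_{i-1}}\! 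X_i^2\right), \]
which, after rearrangement, says $\bE^{\ccF_{i-1}}\!\bigl[ L_i^\lambda(w)/L_{i-1}^\lambda(w) \bigr] \leq 1$ and telescopes to the supermartingale property. The specific constants $1/6$ (multiplying $[M]_n$) and $1/3 = 2 \cdot \tfrac{1}{6}$ (multiplying $\langle M\rangle_n$) arise from the Taylor inequality underlying this lemma, combined with the martingale difference property $\bE^{\ccF_{i-1}} X_i = 0$; I would cite \citet{bercu2008exponential} rather than re-derive it. This is the main probabilistic obstacle of the argument.

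\textbf{Step 2 (PAC-Bayes at fixed $n$ and $\lambda$).} Integrating against the prior, define
\[ \bar L_n^\lambda \coloneqq \int_\cW L_n^\lambda(w)\, d\bQ_W(w). \]
By Fubini's theorem, conditional expectations commute with the prior integral, so $\bar L_n^\lambda$ is itself a non-negative supermartingale and in particular $\bE[\bar L_n^\lambda] \leq 1$. Markov's inequality then yields $\bP[\bar L_n^\lambda \geq 1/\beta_n] \leq \beta_n$ for any $\beta_n \in (0,1)$. On the complementary event, the Donsker--Varadhan variational identity applied to the function $w \mapsto \lambda M_n(w) - \tfrac{\lambda^2}{6}\bigl([M]_n(w) + 2\langle M\rangle_n(w)\bigr)$ yields, simultaneously for every posterior $\bP_W^{S_n}$,
\[ \bE^{S_n} M_n(W) \leq \frac{\relent(\bP_W^{S_n} \Vert \bQ_W) + \log \tfrac{1}{\beta_n}}{\lambda} + \frac{\lambda}{6}\, \bE^{S_n}\!\bigl[[M]_n(W) + 2 \langle M\rangle_n(W)\bigr]. \]

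\textbf{Step 3 (anytime validity and two-sided bound).} To make the bound hold simultaneously for all $n \geq 1$, I would take $\beta_n \coloneqq \beta / \bigl(2(n+1)^2\bigr)$; since $\sum_{n \geq 1}(n+1)^{-2} = \pi^2/6 - 1 < 1$, the total failure probability accumulated over $n$ is strictly smaller than $\beta/2$. Replacing $M_n(w)$ by $-M_n(w)$ (which preserves all martingale structure) and union-bounding the two resulting one-sided events absorbs the remaining factor of $2$, converting the left-hand side into $|\bE^{S_n} M_n(W)|$. Substituting $\log(1/\beta_n) = \log\bigl(2(n+1)^2/\beta\bigr)$ into the bound from Step 2 then recovers \eqref{eq:wang_main_th} verbatim.
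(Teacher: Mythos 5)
The paper does not actually prove this statement: \Cref{th:wang_main_th} is quoted verbatim from \citet[Theorem 2.4]{wang2015pac} as an external ingredient, so there is no in-paper proof to compare against. Your reconstruction is correct and is, to the best of my knowledge, essentially the argument behind Wang et al.'s result. The three components all check out: (i) the elementary inequality $e^{x - x^2/6} \leq 1 + x + x^2/3$ from \citet{bercu2008exponential}, combined with $\bE^{\ccF_{i-1}} X_i = 0$, gives exactly $\bE^{\ccF_{i-1}}[e^{\lambda X_i - \lambda^2 X_i^2/6}] \leq e^{(\lambda^2/3)\bE^{\ccF_{i-1}} X_i^2}$, and since the factor $e^{-(\lambda^2/3)\bE^{\ccF_{i-1}} X_i^2}$ is $\ccF_{i-1}$-measurable the ratio $L_i^\lambda/L_{i-1}^\lambda$ has conditional expectation at most one, which telescopes to $\bE[L_n^\lambda(w)] \leq 1$ and explains the constants $\nicefrac{1}{6}$ on $[M]_n$ and $\nicefrac{1}{3}$ on $\langle M\rangle_n$; (ii) Tonelli plus Markov plus Donsker--Varadhan is the standard change-of-measure step (note that for a fixed $n$ you only need $\bE[\bar L_n^\lambda] \leq 1$, not the supermartingale property of $\bar L_n^\lambda$, since you take a union bound over $n$ rather than applying Ville's inequality); and (iii) the choice $\beta_n = \beta/(2(n+1)^2)$ with $\sum_{n\geq 1}(n+1)^{-2} = \pi^2/6 - 1 < 1$, doubled for the two signs of $M_n$, accounts exactly for the $\log\frac{2(n+1)^2}{\beta}$ term. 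Your observation in (ii) also explains the contrast with \Cref{th:haddouche_main_th}: \citet{haddouche2023pacbayes} exploit the supermartingale structure of the mixture via Ville's inequality to remove the $(n+1)^2$ factor, at the price of the weaker constants $\nicefrac{\lambda}{2}$ and $[M]_n + \langle M\rangle_n$.
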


\begin{theorem}[{\bf \citet[Theorem 2.1]{haddouche2023pacbayes}}]
    \label{th:haddouche_main_th}
    Let $\bQ_W$ be any prior independent of $S_n$ and $(M_n(w))_{n \geq 1}$ be any collection of martingales indexed by $w \in \cW$. Then, for all $\lambda > 0$, all $\beta \in (0,1)$, and \emph{simultaneously} for all $n \geq 1$, with probability no smaller than $1-\beta$
    \begin{equation*}
        \label{eq:haddouche_main_th}
        | \bE^{S_n} M_n(W) | \leq \frac{\relent(\bP_W^{S_n} \Vert \bQ_W) + \log \frac{2}{\beta}}{\lambda} + \frac{\lambda}{2} \cdot \bE^{S_n} \big[ [M]_n(W) + \langle M \rangle_n(W) \big]
    \end{equation*}
    holds \emph{simultaneously} for every posterior $\bP_W^{S_n}$.
\end{theorem}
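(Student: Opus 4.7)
}
The plan is to follow the classical ``supermartingale $+$ Ville $+$ Donsker--Varadhan'' recipe, which is the standard template behind \citet{haddouche2023pacbayes} and \citet{wang2015pac}. Fix any $\lambda>0$. For each $w\in\cW$, I would first construct the process
\begin{equation*}
    L_n(w) \;\coloneqq\; \exp\!\Big(\lambda M_n(w) \;-\; \tfrac{\lambda^2}{2}\big([M]_n(w) + \langle M\rangle_n(w)\big)\Big),\qquad L_0(w)=1,
\end{equation*}
and show that $(L_n(w))_{n\geq 0}$ is an $(\ccF_n)$-supermartingale. The heart of this step is a pointwise exponential inequality of the form $e^{\lambda x-\frac{\lambda^2}{2}x^2}\leq 1+\lambda x+\tfrac{\lambda^2}{2} x^2$ (or an equivalent estimate tailored to martingale differences), which, after taking conditional expectation given $\ccF_{i-1}$ and using $\bE^{\ccF_{i-1}} X_i(S_i,w)=0$, yields $\bE^{\ccF_{i-1}}[L_i(w)/L_{i-1}(w)]\leq 1$. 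This is the main technical step and the place where the factor $\tfrac12$ (instead of Wang's $\tfrac16$) arises; I would import it from the cited martingale-concentration literature rather than rederive it.

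Next I would integrate against the data-independent prior: since $\bQ_W$ does not depend on $S^*$, Fubini gives that $\bar L_n\coloneqq\bE_{W'\sim\bQ_W}[L_n(W')]$ is itself a nonnegative $(\ccF_n)$-supermartingale with $\bar L_0 = 1$. By \citet{ville1939etude}'s maximal inequality,
\begin{equation*}
    \bP\!\left[\sup_{n\geq 1}\bar L_n \;\geq\; \tfrac{2}{\beta}\right]\;\leq\;\tfrac{\beta}{2}.
\end{equation*}
Repeating the construction with $-M_n(w)$ in place of $M_n(w)$ (which gives another supermartingale by the same argument, since $-M_n$ is still a martingale and $[M]_n,\langle M\rangle_n$ are unchanged) and applying Ville's inequality a second time, a union bound yields that, with probability at least $1-\beta$, simultaneously for all $n\geq 1$,
\begin{equation*}
    \log\bE_{W'\sim\bQ_W}\!\left[\exp\!\Big(\pm\lambda M_n(W') - \tfrac{\lambda^2}{2}([M]_n(W')+\langle M\rangle_n(W'))\Big)\right]\;\leq\;\log\tfrac{2}{\beta}.
\end{equation*}

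Finally, on this good event I would apply the Donsker--Varadhan / Gibbs variational principle (\Cref{lemma:gvp}) with the test function $g(w)=\pm\lambda M_n(w)-\tfrac{\lambda^2}{2}([M]_n(w)+\langle M\rangle_n(w))$, which is legitimate since the prior is $S_n$-independent and $\bP_W^{S_n}\ll\bQ_W$. This produces, simultaneously for every posterior $\bP_W^{S_n}$ and every $n$,
\begin{equation*}
    \pm\lambda\,\bE^{S_n} M_n(W) \;-\; \tfrac{\lambda^2}{2}\,\bE^{S_n}\!\big[[M]_n(W)+\langle M\rangle_n(W)\big] \;-\; \relent(\bP_W^{S_n}\Vert\bQ_W)\;\leq\;\log\tfrac{2}{\beta}.
\end{equation*}
Dividing by $\lambda>0$, rearranging, and combining the two signs into a single absolute value delivers \eqref{eq:haddouche_main_th}. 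The main obstacle is purely the pointwise supermartingale inequality in the first step; the change-of-measure and anytime-validity pieces are routine once that is in hand.
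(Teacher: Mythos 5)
The paper does not prove \Cref{th:haddouche_main_th}; it imports it verbatim from \citet{haddouche2023pacbayes}, so there is no in-paper proof to compare against. Your reconstruction is nonetheless the correct and standard argument behind that result: the pointwise bound $e^{y-y^2/2}\leq 1+y+y^2/2$ (Bercu--Touati) makes $L_n(w)$ a supermartingale, Fubini with the data-free prior preserves the supermartingale property for $\bar L_n$, Ville's inequality applied to $\pm M_n$ with threshold $\nicefrac{2}{\beta}$ gives the time-uniform good event, and Donsker--Varadhan converts it into a statement uniform over posteriors. The only cosmetic caveat is that your argument uses $\langle M\rangle_n(w)=\sum_{i=1}^n \bE^{\ccF_{i-1}}X_i(S_i,w)^2$, i.e., the summed predictable variation, which is the intended meaning despite the paper's displayed definition omitting the sum.
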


In what follows, we will focus on the result from \citet{wang2015pac} as it has the smaller constants. Taking a closer look at \Cref{th:wang_main_th}, we realize it has a similar shape to \Cref{lemma:extension_banerjee} for the particular case when the loss is subgaussian, where the role of the subgaussian parameter is taken by the sum of the ``variance'' terms $[M]_n(W) + 2 \langle M \rangle_n(W)$. Therefore, it appears we may directly employ the technique to derive the Chernoff analogue from the proof of \Cref{th:pac_bayes_chernoff_analogue}. However, one needs to take into account the fact that the ``optimal'' parameter $\lambda$ now depends on this ``variance'' terms, which are also dependent on the training set $S_n$ and on the number of samples $n$.

To optimize the bound from \Cref{th:wang_main_th} we will then proceed in two steps. The first step is to optimize the parameter $\lambda$ for a \emph{fixed} number of samples $n$ in a similar fashion to \Cref{th:pac_bayes_chernoff_analogue}, which results in \Cref{th:paramter_free_anytime_valid_martingales}. Then, the second step is to extend this result to an anytime valid bound using \Cref{th:standard_to_anytime_valid} at a cost in the dependence-confidence term of $\cO(\nicefrac{\log n}{n})$.

For the first step,  define the event $\cB_{n,\lambda}$ as the complement of the event in~\eqref{eq:wang_main_th} for a \emph{fixed} number of samples $n$. Then, we can proceed similarly to the proof of \Cref{th:pac_bayes_chernoff_analogue} noticing that, for each number of samples $n$, the complement of the event 
\begin{equation*}
    \cE_n \coloneqq \Big \{ \bE^{S_n} \big[ [M]_n(W) + 2 \langle M \rangle_n(W) \big] \relent(\bP_W^S \Vert \bW_W) \leq n^2 \Big\}
\end{equation*}
is uninteresting as the bound is non-vanishing given $\cE_n^c$. This produces the following PAC-Bayes bound for a fixed number of samples $n$.%

\begin{theorem}[{\bf Parameter-free bound on martingales}]
    \label{th:paramter_free_anytime_valid_martingales}
    Let $\bQ_W$ be any prior independent of $S_n$ and $(M_n(w))_{n \geq 1}$ be any collection of martingales indexed by $w \in \cW$. Further, define $\xi'(n) \coloneqq 2en(n+1)^2 \log(en) \leq 2e(n+1)^3$. Then, for every $\beta \in (0,1)$, with probability no smaller than $1-\beta$
    \begin{align*}
        | \bE^{S_n} M_n(W) | \leq &\bI_{\cE_n} \cdot \frac{2}{\sqrt{6}} \cdot \sqrt{ \bE^{S_n} \big[ [M]_n(W) + 2 \langle M \rangle_n(W) + 1 \big]\Big( \relent(\bP_W^{S_n} \Vert \bQ_W) + \log \frac{\xi'(n)}{\beta_n} \Big)} \\
        &+ \bI_{\cE_n^c} \esssup | \bE^{S_n} M_n(W) |
    \end{align*}
    holds \emph{simultaneously} for all posteriors $\bP_W^{S_n}$,
    where $\cE_n$ is the event $\cE_n \coloneqq \Big\{ \bE^{S_n} \big[ [M]_n(W) + 2 \langle M \rangle_n(W) \big] \relent(\bP_W^S \Vert \bW_W) \leq n^2 \Big\}$.
\end{theorem}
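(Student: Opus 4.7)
The plan is to replay the event-space discretization trick from the proof of \Cref{th:pac_bayes_chernoff_analogue}, now applied to \citet{wang2015pac}'s parameterized bound of \Cref{th:wang_main_th}. For a fixed $n$ and deterministic $\lambda > 0$ that theorem states, with probability at least $1 - \beta$ and uniformly over all posteriors,
\begin{equation*}
    | \bE^{S_n} M_n(W) | \leq \frac{R + c_n(\beta)}{\lambda} + \frac{\lambda V_n}{6},
\end{equation*}
where $R := \relent(\bP_W^{S_n} \Vert \bQ_W)$, $V_n := \bE^{S_n}[[M]_n(W) + 2\langle M\rangle_n(W)]$, and $c_n(\beta) := \log(2(n+1)^2/\beta)$. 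Minimizing in $\lambda$ would give the Cram\'er--Chernoff expression $\tfrac{2}{\sqrt{6}} \sqrt{V_n(R + c_n(\beta))}$, but the optimizer $\lambda^{\star} = \sqrt{6(R + c_n(\beta))/V_n}$ depends on both data-dependent quantities $R$ and $V_n$, so it must be discovered via a union bound over a grid of deterministic parameters.

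Since $\lambda^\star$ depends on \emph{two} quantities (as opposed to just $R$ in \Cref{th:pac_bayes_chernoff_analogue}), the discretization must cover both. Inside $\cE_n = \{V_n R \leq n^2\}$ I would partition the sample space into cells $\cE_{j,k} := \{\lceil R \rceil = j\} \cap \{\lceil V_n \rceil = k\}$ for indices $(j,k) \in \cK_n$, treating the strips $\{R \leq 1\}$ and $\{V_n \leq 1\}$ as catch-all cells exactly as $\cE_1$ is treated in the proof of \Cref{th:pac_bayes_chernoff_analogue}. On each cell I would instantiate Wang's bound with the deterministic parameter $\lambda_{j,k} := \sqrt{6(j + c_n(\beta))/k}$, which is the minimizer after the conservative substitutions $R \leq j$ and $V_n \leq k$, obtaining
\begin{equation*}
    | \bE^{S_n} M_n(W) | \leq \frac{j + c_n(\beta)}{\lambda_{j,k}} + \frac{\lambda_{j,k}\, k}{6} = \tfrac{2}{\sqrt{6}} \sqrt{k(j + c_n(\beta))}.
\end{equation*}
The ceiling identities $j \leq R + 1$ and $k \leq V_n + 1$ then upgrade this to $\tfrac{2}{\sqrt{6}} \sqrt{(V_n + 1)(R + 1 + c_n(\beta))}$, and the stray $+1$ next to $R$ is absorbed into the logarithm via $R + 1 + c_n(\beta) = R + \log(2e(n+1)^2/\beta)$. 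A union bound over the cells meeting $\cE_n$, followed by the substitution $\beta \leftarrow \beta/|\cK_n|$, then replaces this inner logarithm by $\log(\xi'(n)/\beta)$ with $|\cK_n|$ absorbed into $\xi'(n)$; on $\cE_n^c$ the bound is enforced trivially by dominating $|\bE^{S_n} M_n(W)|$ by its essential supremum, giving the indicator-function form of the statement.

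The hard part will be controlling the cardinality $|\cK_n|$ so that the resulting $\xi'(n)$ matches $2en(n+1)^2 \log(en)$: the discretization penalty must be of order $en \log(en)$, which rules out an integer-by-integer grid whose cardinality over $\{(j-1)(k-1) \leq n^2\}$ scales as $n^2 \log n$. The shape $n \log(en)$ suggests a hybrid scheme using an integer grid on one coordinate ($\sim n$ levels) and a geometric grid of ratio $e$ on the other ($\sim \log(en)$ levels), the latter costing only an extra multiplicative $e$ in the effective $V_n + 1$, which can be absorbed as an additive $\log e = 1$ inside the final logarithm. A second subtlety is that $\cE_n$ alone bounds neither $V_n$ nor $R$ individually, so the catch-all cells with $j = 1$ or $k = 1$ must be combined with an auxiliary cutoff such as $V_n \leq n^2$ and $R \leq n^2$, the excluded tail being absorbed into the essential-supremum term, in the same spirit as the proof of \Cref{th:parameter_free_anytime_valid_bounded_2nd_moment} sketched in the main text.
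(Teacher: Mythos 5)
Your proposal follows essentially the same route as the paper's proof: instantiate \Cref{th:wang_main_th} on a two-dimensional discretization of the event space by the ceilings of $R=\relent(\bP_W^{S_n}\Vert\bQ_W)$ and $V_n=\bE^{S_n}[[M]_n(W)+2\langle M\rangle_n(W)]$ (with catch-all cells for $R\le 1$ and $V_n\le 1$), pick the per-cell deterministic minimizer $\lambda_{k,l}=\sqrt{6(k+c_n(\beta))/l}$, upgrade via $k\le R+1$, $l\le V_n+1$, and union-bound over the cells; the paper does exactly this. The two difficulties you flag at the end are genuine, and it is worth noting that the paper's own proof does not fully resolve them either. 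First, the paper keeps the integer-by-integer grid $\{(k,l):kl\le n^2\}$ and asserts $|\cK|< n(1+\log n)$, but as you observe the number of lattice points under that hyperbola is $\Theta(n^2\log n)$; this is an undercount. The damage is mild because the cardinality only enters inside the logarithm (one can either enlarge $\xi'(n)$ by a factor of order $n$, or adopt a hybrid integer/geometric grid of the kind you sketch at the cost of a multiplicative $e$ on one coordinate, as in \Cref{subsubsec:smaller_union_bound_cost}), but the stated $\xi'(n)$ is not justified by the grid as written. Second, your covering concern is also real: $\cE_n=\{V_nR\le n^2\}$ constrains only the product, so a sample point with, say, $R\le 1$ and $V_n>n^2$ lies in $\cE_n$ yet in no cell of the grid; the paper simply asserts that the cells cover $\cE_n$. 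A clean repair is to replace the defining inequality by $\max\{R,1\}\cdot\max\{V_n,1\}\le n^2$ (or to add the individual cutoffs you propose and push the excess into the essential-supremum term), which changes the event in the theorem statement but makes the covering and the count legitimate. So your plan is the right one, and the parts you identify as ``the hard part'' are precisely where the published argument is loose rather than where your approach would fail.
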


\begin{proof}
    Consider a fixed number of samples $n$. Let $\cB_{n,\lambda}$ be the complement of the event in~\eqref{eq:wang_main_th} such that $\bP[\cB_{n,\lambda}] < \beta$ and consider the sub-events 
    \begin{align*}
        \cE_{n,1,l} &\coloneqq \left \{ \relent(\bP_W^{S_n} \Vert \bQ_W) \leq 1 \textnormal{ and } \left \lceil \bE^{S_n} \big[ [M]_n(W) + 2 \langle M \rangle_n(W) \big] \right \rceil = l \right \}, \\
        \cE_{n,k, 1} &\coloneqq \left \{ \left \lceil \relent(\bP_W^{S_n} \Vert \bQ_W) \right \rceil = k \textnormal{ and }  \bE^{S_n} \big[ [M]_n(W) + 2 \langle M \rangle_n(W) \big] \leq 1 \right \}, \textnormal{ and} \\
        \cE_{n,k,l} &\coloneqq \left \{ \left \lceil \relent(\bP_W^{S_n} \Vert \bQ_W) \right \rceil = k  \textnormal{ and } \left \lceil \bE^{S_n} \big[ [M]_n(W) + 2 \langle M \rangle_n(W) \big] \right \rceil = l \right \},
    \end{align*}
    for all $k, l = 2, \cdots, n^2$ such that $k l \leq n^2$, which form a covering of $\cE_n$. Furthermore, define $\cK \coloneqq \{ (k,l) : 1 \leq k l \leq n^2 \textnormal{ and } \bP[\cE_{n,k,l}] > 0 \}$. For all $(k,l) \in \cK$, given the event $\cE_{n,k,l}$, with probability no more than $\bP[\cB_{n,\lambda} | \cE_{n,k,l}]$, there exists some posterior $\bP_W^{S_n}$ such that
    \begin{equation}
        \label{eq:pac_bayes_martingale_with_lambda}
        | \bE^{S_n} M_n(W) | > \frac{k + \log \frac{2(n+1)^2}{\beta}}{\lambda} + \frac{\lambda}{6} \cdot l.
    \end{equation}
    for all $\lambda \in (0, b)$. The parameter that optimizes the right hand side of~\eqref{eq:pac_bayes_martingale_with_lambda} is
    \begin{equation*}
        \lambda = \lambda_{k,l} = \sqrt{\frac{6}{l} \Big(k + \log \frac{2(n+1)^2}{\beta} \Big)}.
    \end{equation*}
    Substituting the optimal $\lambda_{k,l}$ and using that $k \leq \relent(\bP_W^{S_n} \Vert \bQ_W) + 1$ and $l \leq \bE^{S_n} \big[ [M]_n(W) + 2 \langle M \rangle_n(W) + 1\big]$ yields that, given the event $\cE_{n,k,l}$, with probability smaller or equal than $\bP[\cB_{n,\lambda_{k,l}} | \cE_{n,k,l}]$, there exists some posterior $\bP_W^{S_n}$ such that
    \begin{equation*}
        | \bE^{S_n} M_n(W) | > \frac{2}{\sqrt{6}} \cdot \sqrt{ \bE^{S_n} \big[ [M]_n(W) + \langle M \rangle_n(W) + 1 \big]\Big( \relent(\bP_W^{S_n} \Vert \bQ_W) + \log \frac{2e(n+1)^2}{\beta} \Big)}.
    \end{equation*}
    Now, define $\cB'_n$ as the event stating that there exists some posterior $\bP_W^{S_n}$ such that
    \begin{align*}
        | \bE^{S_n} M_n(W) | > &\bI_\cE \cdot \frac{2}{\sqrt{6}} \cdot \sqrt{ \bE^{S_n} \big[ [M]_n(W) + 2 \langle M \rangle_n(W) + 1 \big]\Big( \relent(\bP_W^{S_n} \Vert \bQ_W) + \log \frac{2e (n+1)^2}{\beta} \Big)} \\
        &+ \bI_{\cE} \cdot \esssup | \bE^{S_n} M_n(W) |,
    \end{align*}
    where $\bP[\cB'_n | \cE_{n,k,l}] \bP[\cE_{n,k,l}] \leq \bP[\cB_{n,\lambda_{k,l}} | \cE_{n,k,l}] \bP[\cE_{n,k,l}] \leq \bP[\cB_{n,\lambda_{k,l}}] < \beta$ for all $(k,l) \in \cK$, and where $\bP[\cB'_n \cap \cE_n^c] = 0$ by the definition of the essential supremum. Therefore, the probability of $\cB'_n$ is bounded as
    \begin{equation*}
        \bP[\cB'_n] = \sum_{(k,l) \in \cK} \bP[\cB'_n | \cE_{n,k,l}] \bP[\cE_{n,k,l}] + \bP[\cB'_n \cap \cE_n^c] < n(1 + \log n) \beta = n \log (en) \beta.
    \end{equation*}
    Finally, let $\beta_n = n \log(en) \beta$ so that, with probability no larger than $\beta_n$, there exists some posterior $\bP_W^{S_n}$ such that
    \begin{align*}
        | \bE^{S_n} &M_n(W) | > \\
        &\bI_\cE \cdot \frac{2}{\sqrt{6}} \cdot \sqrt{ \bE^{S_n} \big[ [M]_n(W) + 2 \langle M \rangle_n(W) + 1 \big]\Big( \relent(\bP_W^{S_n} \Vert \bQ_W) + \log \frac{2en (n+1)^2 \log(en)}{\beta_n} \Big)}  \\
        &+ \bI_{\cE} \cdot \esssup | \bE^{S_n} M_n(W) |.
    \end{align*}
    Finally, the substitution $\beta_n \leftarrow \beta$ completes the proof.
\end{proof}

This technique can be extended to the corollary bound of \citet{haddouche2023pacbayes} for batch learning with i.i.d. data yielding \Cref{th:parameter_free_anytime_valid_bounded_2nd_moment}, where we write $S_n = S$ to simplify the reading in the main text. Note again that we are using the particularization of \citet{haddouche2023pacbayes} with the constants from \citet{wang2015pac}.

Finally, for the second step, \Cref{th:paramter_free_anytime_valid_martingales,th:parameter_free_anytime_valid_bounded_2nd_moment} can be converted back to anytime-valid bounds using \Cref{th:standard_to_anytime_valid}. The resulting bound is exactly the same substituting $\log \xi'(n)$ for $\log \xi''(n)$, where $\xi''(n) \coloneqq \nicefrac{e \pi^2 (n+1)^2 n^3 \log (en)}{3}$.

In case that one desires to have a bound without a $\log n$ term, one may consider employing the technique outlined for~\Cref{th:pac_bayes_chernoff_analogue_no_cutoff} with~\citet{haddouche2023pacbayes}'s~\Cref{th:haddouche_main_th} instead of~\citet{wang2015pac}'s~\Cref{th:wang_main_th}.

\subsection{PAC-Bayes bounds with a smaller union bound cost}
\label{subapp:pac_bayes_smaller_union_bound_cost}

As discussed in \Cref{sec:pac_bayes_beyond_bounded_losses}, the union bound cost of the PAC-Bayes bounds developed above can be improved at the cost of a multiplicative factor of $e$ to the relative entropy. Below, we present the parallel of \Cref{th:pac_bayes_chernoff_analogue} with this improved union bound cost, but extending \Cref{cor:pac_bayes_chernoff_analogue_particular} and \Cref{th:parameter_free_anytime_valid_bounded_2nd_moment}, \ref{th:paramter_free_anytime_valid_martingales}, and \ref{th:pac_alt_rs} follows analogously almost verbatim.

\begin{theorem}
    \label{th:pac_bayes_chernoff_analogue_loglog}
    Consider a loss function $\ell$ with a bounded CGF in the sense of \Cref{ass:bounded_cgf}. Let $\bQ_W$ be any prior independent of $S$ and define the event $\cE = \{ \relent(\bP_W^S \lVert \bQ_W) \leq n \}$. Then, for every $\beta \in (0,1)$ with probability no smaller than $1-\beta$
    \begin{align*}
        \bE^S \poprisk{W} \leq &\bI_{\cE} \cdot \Bigg[ \bE^S \emprisk{W}{S} + \psi_*^{-1} \bigg( \frac{e \max \{\relent(\bP_{W}^S \lVert \bQ_W),1\} + \log \frac{2+\log n}{\beta}} {n} \bigg) \Bigg] \\
        &+ \bI_{\cE^c} \cdot \esssup \bE^S \poprisk{W}
    \end{align*}
    holds \emph{simultaneously} for every posterior $\bP_W^S$.
\end{theorem}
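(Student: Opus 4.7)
The plan is to mirror the proof of \Cref{th:pac_bayes_chernoff_analogue} line-by-line, replacing the unit-spaced discretization of the event space with a geometric one so that only $O(\log n)$ cells are needed to cover $\cE$. I would start from the parameterized bound in \Cref{lemma:extension_banerjee}: for each $\lambda \in (0,b)$, the failure event $\cB_\lambda$ of \eqref{eq:pac_bayes_cgf_with_lambda} has $\bP[\cB_\lambda] \leq \beta$.

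Next, I would define the covering $\cE_0 \coloneqq \{\relent(\bP_W^S \Vert \bQ_W) \leq 1\}$ and $\cE_k \coloneqq \{\relent(\bP_W^S \Vert \bQ_W) \in (e^{k-1}, e^k]\}$ for $k = 1, \ldots, \lceil \log n \rceil$, so that $\cE = \bigcup_k \cE_k$ is covered by at most $1 + \lceil \log n \rceil \leq 2 + \log n$ cells. On each $\cE_k$ the relative entropy is dominated by the constant $e^k$, so the conditional failure event implies
\begin{equation*}
\bE^S \poprisk{W} > \bE^S \emprisk{W}{S} + \frac{1}{\lambda}\left[\frac{e^k + \log \frac{1}{\beta}}{n} + \psi(\lambda)\right]
\end{equation*}
for every $\lambda \in (0,b)$. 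Since $e^k$ is a constant given the cell, the right-hand side can be minimized over $\lambda$ \emph{independently of $S$}, and by \Cref{lemma:boucheron} the minimum is $\psi_*^{-1}\!\left(\frac{e^k + \log(1/\beta)}{n}\right)$. The key observation (as suggested in \Cref{subsubsec:smaller_union_bound_cost}) is that on $\cE_k$ with $k \geq 1$ one has $e^k < e\relent(\bP_W^S \Vert \bQ_W)$, and on $\cE_0$ trivially $e^k = 1 \leq e$; together these give $e^k \leq e \max\{\relent(\bP_W^S \Vert \bQ_W), 1\}$ on every cell. Monotonicity of $\psi_*^{-1}$ then upgrades the cell-wise bound to the claimed expression.

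Finally, I would apply the union bound over the $\leq 2 + \log n$ cells, handle $\cE^c$ with the essential supremum exactly as in the proof of \Cref{th:pac_bayes_chernoff_analogue}, to obtain a total failure probability strictly less than $(2+\log n)\beta$. The substitution $\beta \leftarrow \beta/(2+\log n)$ yields the statement.

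I do not expect any genuine obstacle here; the argument is a direct adaptation of \Cref{th:pac_bayes_chernoff_analogue}. The only minor bookkeeping point is justifying the use of $\max\{\relent, 1\}$ rather than $\relent$ alone: the multiplicative inequality $e^k \leq e \relent$ only holds for $k \geq 1$, while the cell $\cE_0$ may contain arbitrarily small values of $\relent$ for which the fixed penalty $e^k = 1$ would not be absorbed without the $\max$. This detail is what forces the appearance of $\max\{\relent, 1\}$ instead of the cleaner $e \relent$ in the final statement, and is also the reason this bound is only tighter than \Cref{th:pac_bayes_chernoff_analogue} when $\relent(\bP_W^S \Vert \bQ_W)$ grows sub-logarithmically in $n$, matching the observation of \citet{maurer2004note}.
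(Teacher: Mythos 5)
Your proposal matches the paper's proof of this theorem essentially line for line: the same geometric covering $\cE_0 = \{\relent(\bP_W^S \Vert \bQ_W) \leq 1\}$, $\cE_k = \{\relent(\bP_W^S \Vert \bQ_W) \in (e^{k-1}, e^k]\}$, the same cell-wise optimization of $\lambda$ independently of $S$ via \Cref{lemma:boucheron}, the same inequality $e^k \leq e\max\{\relent(\bP_W^S \Vert \bQ_W),1\}$ to handle the first cell, and the same union bound over at most $2+\log n$ cells followed by the substitution $\beta \leftarrow \beta/(2+\log n)$. One wording slip worth fixing: on $\cE_k$ the relative entropy is dominated by $e^k$, so it is the event stated with $e^k$ that \emph{implies} the failure event $\cB_\lambda$ (not the converse), which is precisely what lets you bound its conditional probability by $\bP[\cB_\lambda \mid \cE_k]$; the conclusion you draw is the correct one.
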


\begin{proof}
    Let $\cB_\lambda$ be the complement of the event in~\eqref{eq:pac_bayes_cgf_with_lambda} such that $\bP[\cB_\lambda] < \beta$ and consider the sub-events $\cE_0 \coloneqq \{\relent(\bP_{W}^S \lVert \bQ_W) \in [0,1] \}$, $\cE_1 \coloneqq \{ \relent(\bP_{W}^S \lVert \bQ_W) \in (1,e] \}$, and $\cE_k \coloneqq \{ \relent(\bP_W^S \lVert \bQ_W) \in (e^{k-1}, e^k] \}$ for all $k = 2, \ldots, n$, which form a covering of the event $\cE \coloneqq \{ \relent(\bP_W^S \lVert \bQ_W) \leq n \}$. Furthermore, define $\cK \coloneqq \{ k \in \bN \cup \{0\}: 0 \leq k \leq n \textnormal{ and } \bP[\cE_k] > 0 \}.$  For all $k \in \cK \setminus \{ 0 \}$, given the event $\cE_k$, with probability no more than $\bP[\cB_\lambda | E_k]$, there exists some posterior $\bP_W^S$ such that
    \begin{equation}
        \label{eq:pac_bayes_cgf_with_lambda_and_k_loglog}
        \bE^S \poprisk{W} > \bE^S \emprisk{W}{S} + \frac{1}{\lambda} \bigg[ \frac{e^k + \log \frac{1}{\beta}}{n} + \psi(\lambda) \bigg],
    \end{equation}
    for all $\lambda \in (0, b)$. The right hand side of \eqref{eq:pac_bayes_cgf_with_lambda_and_k_loglog} can be minimized with respect to $\lambda$ \emph{independently of the training set $S$}. Let $\cB_{\lambda_k}$ be the event resulting from this minimization and note that $\bP[\cB_{\lambda_k}] < \beta$. According to  \citep[Lemma 2.4]{boucheron2013concentration}, this ensures that, with probability no more than $\bP[\cB_{\lambda_k} | \cE_k]$, there exists some posterior $\bP_W^S$ such that
    \begin{equation*}
        \bE^S \poprisk{W} > \bE^S \emprisk{W}{S} +\psi_{*}^{-1} \bigg( \frac{e^k + \log \frac{1}{\beta}}{n} \bigg),
    \end{equation*}
    where $\psi_*$ is the convex conjugate of $\psi$ and where $\psi_*^{-1}$ is a non-decreasing concave function. Given $\cE_k$, since $e^k < e\relent(\bP_W^S \Vert \bQ_W)$, with probability no larger than $\bP[\cB_{\lambda_k} | \cE_k]$, there exists some posterior $\bP_W^S$ such that
    \begin{equation*}
        \bE^S \poprisk{W} > \bE^S \emprisk{W}{S} +\psi_{*}^{-1} \bigg( \frac{e\relent(\bP_W^S \Vert \bQ_W) + \log \frac{1}{\beta}}{n} \bigg).
    \end{equation*}
    Now, define $\cB'$ as the event stating that there exists some posterior $\bP_W^S$ such that
    \begin{equation*}
        \bE^S \poprisk{W} > \bI_{\cE} \cdot \Bigg[ \bE^S \emprisk{W}{S} + \psi_{*}^{-1} \bigg( \frac{e\max \{ \relent(\bP_{W}^{S} \Vert \bQ_W), 1\} + \log \frac{e}{\beta}}{n} \bigg) \Bigg] + \bI_{\cE^c} \cdot \esssup \bE^S \poprisk{W}
    \end{equation*}
    where $\bP[\cB' | \cE_k] \bP[\cE_k] \leq \bP[\cB_{\lambda_k} | \cE_k] \bP[\cE_k] \leq \bP[\cB_{\lambda_k}] \leq \beta$ for all $k \in \cK$, and where $\bP[\cB' \cap \cE^c] = 0$ by the definition of the essential supremum. Note that, if $\{ 0 \} \in \cK$, the case for $k=0$ is handled by the addition of the maximum $\max \{  \relent(\bP_{W}^{S} \Vert \bQ_W), 1\}$ to the equation defining the event $\cB'$. Therefore, the probability of $\cB'$ is bounded as
    \begin{equation*}
        \bP[\cB'] = \sum_{k \in \cK} \bP[\cB' | \cE_k] \bP[\cE_k] + \bP[\cB' \cap \cE^c] < (2 + \log n) \beta.
    \end{equation*}
    Finally, the substitution $\beta \leftarrow \beta/ (2 + \log n)$ completes the proof.
\end{proof}

\subsection{PAC-Bayes bounds without an uninteresting event}
\label{subapp:proof_no_cutoff}

As discussed in \Cref{sec:pac_bayes_beyond_bounded_losses}, the presented approach to find parameter-free PAC-Bayes bounds can be extended to the case where no event is considered uninteresting. Below, we present the parallel of~\Cref{th:pac_bayes_chernoff_analogue} with this consideration. However, extending \Cref{cor:pac_bayes_chernoff_analogue_particular} and \Cref{th:parameter_free_anytime_valid_bounded_2nd_moment,th:paramter_free_anytime_valid_martingales,th:pac_alt_rs} and \ref{th:pac_bayes_chernoff_analogue_loglog} follows analogously almost verbatim. The main important considerations are that to extend \Cref{th:parameter_free_anytime_valid_bounded_2nd_moment,th:paramter_free_anytime_valid_martingales} one needs to consider a double sum $\sum_{k=1}^\infty \sum_{l=1}^\infty \beta_{k,l}$ and that to extend the results with the $\log \log$ cost from \Cref{subapp:pac_bayes_smaller_union_bound_cost} one needs to separate the space with a geometric grid. \\

\noindent{\bf Proof of \Cref{th:pac_bayes_chernoff_analogue_no_cutoff}\ }
Consider the sub-events $\cE_1 \coloneqq \{ \relent(\bP_W^S \Vert \bQ) \leq 1 \}$ and $\cE_k \coloneqq \{ \lceil \relent(\bP_W^S \Vert \bQ) \rceil = k \}$ for all $k \geq 2 \in \bN$, which form a covering of the events' space. Furthermore, define $\cK \coloneqq \{k \in \bN : 1 \leq k \textnormal{ and } \bP[\cE_k] > 0 \}$. For all $k \in \cK$, consider the event $\cB_{\lambda_k}$ to be the complement of the event in~\eqref{eq:pac_bayes_cgf_with_lambda} for a given parameter $\beta_k$ such that $\bP[\cB_{\lambda_k}] < \beta_k$. Then, given the event $\cE_k$, with probability no more than $\bP[\cB_{\lambda_k} | \cE_k]$, there exists some posterior $\bP_{W}^S$ such that
\begin{equation}
    \label{eq:pac_bayes_cgf_with_lambda_and_k_no_cutoff}
    \bE^S \poprisk{W} > \bE^S \emprisk{W}{S} + \frac{1}{\lambda_k} \bigg[ \frac{k + \log \frac{1}{\beta_k}}{n} + \psi(\lambda_k) \bigg],
\end{equation}
for all $\lambda_k \in (0, b)$. The right hand side of \eqref{eq:pac_bayes_cgf_with_lambda_and_k_no_cutoff} can be minimized with respect to $\lambda$ \emph{independently of the training set $S$}. Let $\cB_{\lambda_k}$ be the event resulting from this minimization and note that $\bP[\cB_{\lambda_k}] \leq \beta_k$. According to  \citep[Lemma 2.4]{boucheron2013concentration}, this ensures that with probability no more than $\bP[\cB_{\lambda_k} | \cE_k]$, there exists some posterior $\bP_W^S$ such that
\begin{equation*}
    \bE^S \poprisk{W} > \bE^S \emprisk{W}{S} +\psi_{*}^{-1} \bigg( \frac{k + \log \frac{1}{\beta_k}}{n} \bigg),
\end{equation*}
where $\psi_*$ is the convex conjugate of $\psi$ and where $\psi_*^{-1}$ is a non-decreasing concave function. Now, let $\beta_k = \frac{\beta}{k^2}$. Given $\cE_k$, since $k < \relent(\bP_W^S \Vert \bQ_W) + 1$, with probability no larger than $\bP[\cB_{\lambda_k} | \cE_k]$, there exists some posterior $\bP_W^S$ such that
\begin{equation}
    \label{eq:event_to_bound_no_cutoff}
    \bE^S \poprisk{W} > \bE^S \emprisk{W}{S} +\psi_{*}^{-1} \Bigg( \frac{\relent(\bP_W^S \Vert \bQ_W) + 1 + \log \frac{\big( \relent(\bP_W^S \Vert \bQ_W) + 1  \big)^2}{\beta}}{n} \Bigg).
\end{equation}
Now, define $\cB'$ as the event described in~\eqref{eq:event_to_bound_no_cutoff}, where  $\bP[\cB' | \cE_k] \bP[\cE_k] \leq \bP[\cB_{\lambda_k} | \cE_k] \bP[\cE_k] \leq \bP[\cB_{\lambda_k}] \leq \beta_k = \frac{\beta}{k^2}$ for all $k \in \cK$. Therefore, the probability of $\cB'$ is bounded as
\begin{equation*}
    \bP[\cB'] = \sum_{k \in \cK} \bP[\cB' | \cE_k] \bP[\cE_k] < \sum_{k=1}^\infty \frac{\beta}{k^2} = \frac{\pi^2}{6} \cdot \beta.
\end{equation*}
Finally, the substitution $\beta \leftarrow \nicefrac{6\beta}{\pi^2}$ completes the proof.
\hfill\BlackBox\\[2mm]

\subsection{Example: Recovering a PAC-Bayes bound on the randomized subsample setting}
\label{subapp:recovering_randomized_subsample_setting_pac_bayes_bound}

In this section of the Appendix, as a further application of the technique devised to prove~\Cref{th:pac_bayes_chernoff_analogue} to obtain bounds ``in probability'', we recover~\citet{hellstrom2020generalization}'s PAC-Bayes bound in the \emph{randomized subsample setting}. This setting was introduced by \citet{steinke2020reasoning} motivated by the fact that the dependency measure $\relent(\bP_{W|S} \lVert \bQ_W)$ can be large, or even infinite, in situtations where algorithms generalize~\citep{bassily2018learners, livni2020limitation}.

In the randomized subsample setting, it is considered that the training dataset $S$ is obtained through the following mechanism. First, a super sample
\begin{equation*}
    \tilde{S} \coloneqq 
    \begin{pmatrix}
        \tilde{Z}_{1,1} & \tilde{Z}_{2,1} & \cdots & \tilde{Z}_{n,1} \\
        \tilde{Z}_{1,2} & \tilde{Z}_{2,2} & \cdots & \tilde{Z}_{n,2}
    \end{pmatrix}^\intercal
\end{equation*}
of $2n$ i.i.d. instances is obtained by sampling from the distribution $\bP_Z$. Unused samples, or \emph{ghost samples} $S_\textnormal{ghost} = \tilde{S} \setminus S$, are virtual and only exist for the purpose of the analysis. Then, an independent sequence of indices $U \coloneqq (U_1, \ldots, U_n)$ distributed as $\bP_{U_i}[1] = \bP_{U_i}[2] = \nicefrac{1}{2}$ is generated. Finally, the training set is sub-sampled from the superset so that $Z_i = \tilde{Z}_{i,U_i}$. Under this setting, \citet{steinke2020reasoning} and \citet{grunwald2021pac} derive PAC-Bayes and MAC-Bayes bounds, where the dependence measure is now the relative entropy $\relent(\bP_{W}^S \lVert \bQ_{W}^{\tilde{S}})$ %
of the posterior $\bP_{W}^{S}$ with respect to a \emph{data-dependent} prior $\bQ_{W}^{\tilde{S}}$ that has access to the supersample $\tilde{S}$ but not to the indices $U$.

Bounds depending on this ``conditional'' measure are promising. \citet{steinke2020reasoning} and \citet{grunwald2021pac} showed that for finite VC dimension and compression schemes both MAC- and PAC-Bayes bounds can be obtained; \citet{haghifam2021towards} showed that a highly related measure provides a sharp characterization of the population risk in the realizable setting for 0--1 losses; and \citet{hellstrom2022new} made a similar remark for both PAC- and MAC-Bayes bounds for classes with finite Natarajan dimension~\citep{natarajan1989learning}. However, recently~\citet{haghifam2022limitations} showed that there are still simple algorithms that generalize but where this conditional measure is high.

In this setting, the canonical PAC-Bayes bound is given by \citet{hellstrom2021corrections}.

\begin{theorem}[{\bf \citet[Theorem 3 and Corollary 6]{hellstrom2020generalization,hellstrom2021corrections}}] 
    \label{th:pac_bayes_randomized_subsample}
    Consider a bounded loss function $\ell: \cW \times \cZ \to [a, b]$. Let $\bQ_{W}^{\tilde{S}}$ be any Markov kernel on $\cW$ with access to the supersample $\tilde{S}$ but not to the indices $U$. Then, for all $\beta \in (0,1)$, with probability no less than $1-\beta$
    \begin{align*}
        \bE^S \poprisk{W} \leq \bE^S \emprisk{W}{S} + \sqrt{\frac{2(b-a)^2 \big(\relent(\bP_{W}^{S} \lVert \bQ_{W}^{\tilde{S}}) + \log \frac{2\sqrt{n}}{\beta}  \big) }{n-1} } + \sqrt{ \frac{(b-a)^2 \log \frac{4}{\beta}  }{2 n} }
    \end{align*}
    holds \emph{simultaneously} for every posterior $\bP_W^S$,
    where the conditioning is written on $S$ and not on $\tilde{S}$ and $U$ since $\bP_{W}^{\tilde{S},U} = \bP_{W}^{S}$ a.s..
\end{theorem}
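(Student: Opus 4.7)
}
The plan is to use the standard ``ghost sample'' decomposition that has become idiomatic in the randomized subsample framework and then apply the event-space discretization technique of \Cref{th:pac_bayes_chernoff_analogue} to the PAC-Bayes piece. Write $S' \coloneqq S_{\textnormal{ghost}}$ and decompose
\begin{equation*}
    \bE^S \poprisk{W} - \bE^S \emprisk{W}{S} = \underbrace{\bE^S\!\big[\poprisk{W} - \hat{\ccR}(W,S')\big]}_{\text{(A) ghost vs.\ population}} + \underbrace{\bE^S\!\big[\hat{\ccR}(W,S') - \hat{\ccR}(W,S)\big]}_{\text{(B) ghost vs.\ training}}.
\end{equation*}
Each term contributes one of the two square roots in the target bound, and a union bound with $\beta_1=\beta_2=\beta/2$ combines them.

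\paragraph{Bounding (B).}
The main step will be to exploit the crucial property that $\bQ_{W}^{\tilde{S}}$ depends on $\tilde{S}$ but not on $U$. Fix $\tilde{S}$, let $W'\sim\bQ_{W}^{\tilde{S}}$ be independent of $U$, and write
\begin{equation*}
    g(w) \coloneqq \hat{\ccR}(w,S')-\hat{\ccR}(w,S) = \frac{1}{n}\sum_{i=1}^{n}(-1)^{U_i-1}\bigl[\ell(w,\tilde Z_{i,2})-\ell(w,\tilde Z_{i,1})\bigr].
\end{equation*}
Conditional on $\tilde{S}$ and $W'$, the summands are zero-mean Rademacher-weighted variables bounded in $[-(b-a),b-a]$, so Hoeffding's MGF bound yields $\bE\bigl[e^{\lambda g(W')}\mid \tilde{S},W'\bigr]\le e^{\lambda^2(b-a)^2/(2n)}$. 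I would then apply \citet{donsker1975asymptotic}'s variational inequality from \Cref{lemma:dv-var-rep} with prior $\bQ_{W}^{\tilde{S}}$ and posterior $\bP_{W}^{S}$, followed by Markov's inequality (noting $\bP_{W}^{\tilde{S},U}=\bP_{W}^{S}$ a.s.), to obtain the parameterized statement: for every fixed $\lambda>0$, with probability at least $1-\beta_1$,
\begin{equation*}
    \bE^S g(W) \le \frac{\relent(\bP_{W}^{S}\Vert\bQ_{W}^{\tilde{S}})+\log(1/\beta_1)}{\lambda} + \frac{\lambda (b-a)^2}{2n}.
\end{equation*}
Since the optimal $\lambda$ depends on $\relent(\bP_{W}^{S}\Vert\bQ_{W}^{\tilde{S}})$, which itself depends on the data, I would then replicate the event-space discretization argument of \Cref{th:pac_bayes_chernoff_analogue} (i.e., slice the event space by the ceiling of the relative entropy, optimize $\lambda$ per slice, and union-bound across slices), which is in effect Point~2 of \Cref{cor:pac_bayes_chernoff_analogue_particular} applied to the subgaussian proxy $\psi(\lambda)=\lambda^2 (b-a)^2/(2n)$. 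This will produce a parameter-free bound of the form $\sqrt{2(b-a)^2(\relent+\log(C_n/\beta_1))/n}$. Matching the exact $(n-1)$ and $\log 2\sqrt{n}$ factors of the theorem would come from a slightly sharper MGF analysis that exploits the exact symmetric structure of the Rademacher sum (essentially a Seeger--Langford type refinement of Hoeffding, cf.\ \Cref{th:seeger_langford_pac_bayes}), rather than the coarser subgaussian envelope.

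\paragraph{Bounding (A) and combining.}
For term (A), I would use the fact that, conditional on $(S,U)$, the ghost coordinates $\tilde Z_{i,3-U_i}$ are i.i.d.\ $\bP_Z$ and independent of $S$, because the $2n$ entries of $\tilde{S}$ are i.i.d.\ and $U$ is independent of $\tilde{S}$. Thus setting $\bar f(z)\coloneqq \bE^S[\ell(W,z)-\poprisk{W}]$ gives a deterministic function of $z$ (given $S$) with $\bE_{Z\sim\bP_Z}\bar f(Z)=0$ and range of length at most $b-a$. Applying Hoeffding's inequality to the i.i.d.\ sum $\tfrac1n\sum_i\bar f(\tilde Z_{i,3-U_i})$ two-sidedly gives, with probability at least $1-\beta_2$,
\begin{equation*}
    \bE^S\!\big[\poprisk{W}-\hat{\ccR}(W,S')\big] \le \sqrt{\frac{(b-a)^2 \log(2/\beta_2)}{2n}}.
\end{equation*}
Choosing $\beta_1=\beta_2=\beta/2$ and summing the two bounds via the union bound yields the claimed inequality, with the $\log(4/\beta)$ factor arising naturally from $\log(2/\beta_2)$.

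\paragraph{Anticipated difficulties.}
I expect the main obstacle to be (i) carefully tracking the conditional independence structure (making the DV step work requires the prior to be independent of $U$ given $\tilde{S}$, and the MGF bound in turn uses that $W'$ is independent of $U$), and (ii) recovering the sharper $(n-1)$ denominator together with the $\log(2\sqrt{n}/\beta)$ factor; this likely requires optimizing $\lambda$ against a binary KL-type concentration (à la \Cref{th:catoni_pac_bayes_uniform}) instead of against the coarser subgaussian $\psi$, so that the $\log\xi(n)$ factor from \citet{maurer2004note} takes the place of the $\log(en)$ coming from the generic version of the event-space argument.
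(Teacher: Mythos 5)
Your overall architecture — split the gap into a ghost-vs-training PAC-Bayes term and a ghost-vs-population Hoeffding term, prove a $\lambda$-parameterized bound for the former via Donsker--Varadhan plus a conditional Rademacher/Hoeffding MGF bound, and then remove $\lambda$ with the event-space discretization of \Cref{th:pac_bayes_chernoff_analogue} — is exactly how this paper handles the randomized-subsample setting. Your term (B) re-derives \Cref{th:pac_bayes_randomized_subsample_with_lambda}, your discretization step is the proof of \Cref{th:pac_alt_rs}, and your term (A) is what the paper delegates to \citet[Theorem 3]{hellstrom2020generalization}. So as a proof of \emph{a} bound of this form, the proposal is sound and matches the paper's own technique.

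The gap is that this route does not prove the theorem as stated, and the paper itself does not claim it does: \Cref{th:pac_bayes_randomized_subsample} is quoted from \citet{hellstrom2020generalization,hellstrom2021corrections}, while the event-space argument yields the \emph{equivalent but differently-constanted} \Cref{th:pac_alt_rs}, with denominator $n$ and confidence factor $\log\nicefrac{en}{\beta}$ in place of $n-1$ and $\log\nicefrac{2\sqrt{n}}{\beta}$. You flag this mismatch yourself, but your proposed fix points in the wrong direction. The $(n-1)$ and $2\sqrt{n}$ do not come from a Seeger--Langford-type refinement of the Rademacher MGF, nor from swapping $\log(en)$ for $\log\xi(n)$; they come from the route \citet{hellstrom2021corrections} actually take, namely the subgaussian moment bound of \citet[Theorem 2.6]{wainwright2019high} applied to the \emph{squared} generalization gap (controlling $\bE\big[\exp\big(\gamma\, n\, g(W')^2/(2(b-a)^2)\big)\big]$ and tuning the deterministic $\gamma<1$), which sidesteps the data-dependent optimization of $\lambda$ entirely and is precisely the ``specific property of subgaussian random variables'' the paper mentions just after stating the theorem. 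So either present your argument as a proof of the variant \Cref{th:pac_alt_rs} (which it is), or replace the discretization step in (B) with the squared-MGF argument if you want the exact constants of \Cref{th:pac_bayes_randomized_subsample}.
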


To obtain the PAC-Bayes bound from~\Cref{th:pac_bayes_randomized_subsample}, \citet{hellstrom2021corrections} use a specific property of subgaussian random variables~\citep[Theorem 2.6]{wainwright2019high}. In the randomized subsample setting, where the loss is assumed to be bounded, this results in a general bound as bounded random variables are subgaussian. Still, to illustrate the technique from \Cref{th:pac_bayes_chernoff_analogue}, we show how an equivalent result is obtained based on~\citet{hellstrom2020generalization}'s original procedure. 

Using \citet[Theorem 4]{hellstrom2020generalization}'s exponential inequality, which independently re-discovered \citep[Lemma 2.1]{zhang2006information}, and following the steps of \citep[Corolary 2]{hellstrom2020generalization} yields the next result.

\begin{lemma} 
    \label[lemma]{th:pac_bayes_randomized_subsample_with_lambda}
    Consider a bounded loss function $\ell: \cW \times \cZ \to [a, b]$. Let $\bQ_{W}^{\tilde{S}}$ be any Markov kernel on $\cW$ with access to the supersample $\tilde{S}$ but not to the indices $U$. For all $\lambda \in \bR_+$ and every $\beta \in (0,1)$, with probability no less than $1-\beta$
    \begin{align}
        \label{eq:pac_bayes_randomized_subsample_with_lambda}
        \bE^{\tilde{S},U} \emprisk{W}{S_\textnormal{ghost}} \leq \bE^S \emprisk{W}{S} + \frac{1}{\lambda} \bigg[ \relent( \bP_{W}^{S} \Vert \bQ_{W}^{\tilde{S}} ) + \log \frac{1}{\beta}  \bigg] + \frac{\lambda c^2}{2n}
    \end{align}
    holds \emph{simultaneously} for every posterior $\bP_W^S$.
\end{lemma}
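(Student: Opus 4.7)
The plan is to follow the same strategy as the proof of \Cref{lemma:extension_banerjee} in \Cref{subapp:proof_extension_banerjee}, but using the randomized-subsample symmetrization in place of the ``ghost sample'' coming from the product structure of the i.i.d. training set. Specifically, I would apply the Donsker--Varadhan variational representation (\Cref{lemma:dv-var-rep}) conditionally on $\tilde S$ to the test function
\begin{equation*}
g(w;\tilde S, U) \coloneqq \lambda \big( \emprisk{w}{S_\textnormal{ghost}} - \emprisk{w}{S} \big),
\end{equation*}
with prior $\bQ_W^{\tilde S}$ on $\cW$ and posterior $\bP_W^S = \bP_W^{\tilde S, U}$. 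This yields, almost surely and simultaneously for every $\bP_W^S$,
\begin{equation*}
\lambda \, \bE^S \big[ \emprisk{W}{S_\textnormal{ghost}} - \emprisk{W}{S} \big] \leq \relent(\bP_W^S \Vert \bQ_W^{\tilde S}) + \log \bE^{W' \sim \bQ_W^{\tilde S}} \big[ e^{\lambda ( \emprisk{W'}{S_\textnormal{ghost}} - \emprisk{W'}{S} )} \big],
\end{equation*}
where the uniformity in $\bP_W^S$ comes from the supremum in the variational representation.

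The second step is to control the logarithmic moment generating function on the right-hand side in probability, via Markov's inequality combined with Fubini's theorem. The key observation is that $\bQ_W^{\tilde S}$ has no access to the subsampling indices $U$, so under the prior $W'$ is independent of $U$ given $\tilde S$. Introducing the induced Rademacher variables $\epsilon_i \coloneqq \bI\{U_i = 1\} - \bI\{U_i = 2\} \in \{ \pm 1\}$, the difference of empirical risks rewrites as $\frac{1}{n} \sum_{i=1}^n \epsilon_i \big( \ell(W', \tilde Z_{i,2}) - \ell(W', \tilde Z_{i,1}) \big)$, whose summands are centered, conditionally independent, and bounded in an interval of length at most $2c/n$ with $c \coloneqq b - a$. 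Hoeffding's lemma applied termwise and the independence of the $\epsilon_i$ then give, after iterated expectations over $W'$ and $U$ conditional on $\tilde S$,
\begin{equation*}
\bE \big[ e^{\lambda ( \emprisk{W'}{S_\textnormal{ghost}} - \emprisk{W'}{S} )} \big] \leq e^{\lambda^2 c^2 / (2n)}.
\end{equation*}

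Combining the two steps via Markov's inequality applied to the non-negative random variable $\bE^{W'}[ e^{\lambda(\emprisk{W'}{S_\textnormal{ghost}} - \emprisk{W'}{S})} ]$, with probability at least $1-\beta$ we obtain $\log \bE^{W'} [ e^{\lambda ( \emprisk{W'}{S_\textnormal{ghost}} - \emprisk{W'}{S} )} ] \leq \log (1/\beta) + \lambda^2 c^2/(2n)$. Plugging this into the Donsker--Varadhan inequality, rearranging, and dividing by $\lambda > 0$ delivers the stated bound. The main delicate step is the bookkeeping of the conditioning: one must ensure the prior is $\tilde S$-measurable but $U$-oblivious, which is precisely the standing hypothesis on $\bQ_W^{\tilde S}$ and is exactly what makes the Rademacher symmetrization valid so that the MGF bound does not pick up any extra dependence on $U$. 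As in \Cref{lemma:extension_banerjee}, the parameter $\lambda$ must be fixed before the draw of the data for the Markov step to be legitimate.
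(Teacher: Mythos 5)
Your proposal is correct and follows essentially the same route as the paper, whose "proof" of this lemma is only a pointer to Hellström and Durisi's exponential inequality (their Theorem 4, i.e., the Zhang/Donsker--Varadhan change of measure) followed by the steps of their Corollary 2 — precisely the conditional change of measure with a $U$-oblivious prior, Markov plus Fubini on the prior-expected MGF, and the Rademacher/Hoeffding bound $e^{\lambda^2 (b-a)^2/(2n)}$ that you spell out. The bookkeeping you flag (prior measurable in $\tilde S$ but not $U$, $\lambda$ fixed in advance, uniformity over posteriors inherited from the variational supremum) is exactly what makes the cited argument go through, and your constants match the statement with $c = b-a$.
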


Comparing~\eqref{eq:pac_bayes_cgf_with_lambda} and~\eqref{eq:pac_bayes_randomized_subsample_with_lambda}, it is apparent that the technique from~\Cref{th:pac_bayes_chernoff_analogue} can be readily applied. However, as the function $\psi$ is explicit here, to highlight the difference between this approach and the one quantizing  the parameter space~\citep[Section 2.1.4]{alquier2021user}, we show how to proceed below. This also showcases when one would want to choose a more stringent event as anticipated in \Cref{subsubsec:event_cutoff}.

\begin{theorem} 
    \label{th:pac_alt_rs}
    Consider a bounded loss function $\ell: \cW \times \cZ \to [a, b]$. Let $\bQ_{W}^{\tilde{S}}$ be any Markov kernel on $\cW$ with access to the supersample $\tilde{S}$ but not to the indices $U$. Then, for every $\beta \in (0,1)$, with probability no smaller than $1-\beta$
    \begin{align*}
        \bE^S \poprisk{W} \leq \bE^S \emprisk{W}{S} + \sqrt{\frac{2(b-a)^2 \big(\relent(\bP_W^S \lVert \bQ_{W|\tilde{S}}) + \log \frac{en}{\beta} \big) }{n} } + \sqrt{ \frac{(b-a)^2 \log( \frac{4}{\beta} ) }{2n} }
    \end{align*}
    holds \emph{simultaneously} for every posterior $\bP_W^S$.
\end{theorem}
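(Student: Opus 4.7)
The plan is to combine the event-space discretization technique from the proof of \Cref{th:pac_bayes_chernoff_analogue} with a Hoeffding-type concentration argument on the ghost samples. Starting from the parameterized \Cref{th:pac_bayes_randomized_subsample_with_lambda}, one first produces a parameter-free PAC-Bayes bound on the gap $\bE^{\tilde{S},U}\emprisk{W}{S_\textnormal{ghost}} - \bE^S \emprisk{W}{S}$, and then relates $\bE^S \poprisk{W}$ to $\bE^{\tilde{S},U}\emprisk{W}{S_\textnormal{ghost}}$ by Hoeffding applied to the conditionally i.i.d.\ ghost samples. A union bound splitting $\beta$ between the two steps then yields the desired inequality.

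For the first step, fix $\beta_1 \in (0,1)$. The minimizer over $\lambda$ of the right-hand side of~\eqref{eq:pac_bayes_randomized_subsample_with_lambda} is $\lambda^\star = \sqrt{2n(\relent(\bP_W^{S}\Vert\bQ_W^{\tilde{S}}) + \log(1/\beta_1))/(b-a)^2}$, which depends on the data through the relative entropy. Mimicking the proof of \Cref{th:pac_bayes_chernoff_analogue}, partition the event $\cE \coloneqq \{ \relent(\bP_W^{S}\Vert\bQ_W^{\tilde{S}}) \leq n \}$ into $\cE_1 \coloneqq \{ \relent \leq 1 \}$ and $\cE_k \coloneqq \{ \lceil \relent \rceil = k \}$ for $k = 2, \ldots, n$. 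On each $\cE_k$, select the data-independent value $\lambda_k$ obtained by replacing $\relent$ by $k$ in $\lambda^\star$, apply~\eqref{eq:pac_bayes_randomized_subsample_with_lambda} at confidence $\beta_1/n$, and take a union bound over the at most $n$ relevant values of $k$. Using $k \leq \relent + 1$ on $\cE_k$ then yields that, with probability at least $1 - \beta_1$, on $\cE$ and simultaneously for every posterior,
\begin{equation*}
\bE^{\tilde{S},U}\emprisk{W}{S_\textnormal{ghost}} - \bE^S \emprisk{W}{S} \leq \sqrt{\frac{2(b-a)^2 \big( \relent(\bP_W^{S}\Vert\bQ_W^{\tilde{S}}) + \log \frac{en}{\beta_1} \big)}{n}}.
\end{equation*}
In contrast with \Cref{th:pac_bayes_chernoff_analogue}, no indicator term is needed for $\cE^c$, since there the left-hand side is trivially at most $b - a$, which is already dominated by the right-hand side whenever $\relent \geq n$.

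For the second step, using that $\bP_W^{\tilde{S},U} = \bP_W^{S}$ almost surely and the exchangeability of the supersample pairs, one writes $\bE^{\tilde{S},U}\emprisk{W}{S_\textnormal{ghost}} = \frac{1}{n} \sum_{i=1}^n f(\tilde{Z}_{i,3-U_i})$, where $f(z) \coloneqq \bE_{W \sim \bP_W^{S}}[\ell(W,z)] \in [a,b]$. Conditionally on $S$, the ghost samples $(\tilde{Z}_{i,3-U_i})_{i=1}^n$ are i.i.d.\ from $\bP_Z$ and independent of $S$, so the average has conditional mean $\bE^S\poprisk{W}$, and Hoeffding's inequality gives that with probability at least $1-\beta_2$,
\begin{equation*}
\bE^S \poprisk{W} - \bE^{\tilde{S},U}\emprisk{W}{S_\textnormal{ghost}} \leq \sqrt{\frac{(b-a)^2 \log(2/\beta_2)}{2n}}.
\end{equation*}

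Combining both steps via a union bound with $\beta_1 = \beta_2 = \beta/2$ and simplifying the logarithmic arguments yields the stated inequality. The main obstacle is the PAC-Bayes step: the event-space discretization must be executed carefully to give a clean parameter-free bound, and the complementary event $\cE^c$ must be absorbed into the expression without an indicator---which works here precisely because the loss is bounded and the trivial estimate $b - a$ is always dominated by the PAC-Bayes term on $\cE^c$. The concentration step is then routine once the measurability of the ghost samples conditional on $S$ is sorted out.
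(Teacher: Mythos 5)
Your proposal is correct and follows essentially the same route as the paper: an event-space discretization of $\relent(\bP_W^S \Vert \bQ_W^{\tilde{S}})$ to optimize $\lambda$ in \Cref{th:pac_bayes_randomized_subsample_with_lambda}, with the complementary event absorbed via boundedness of the loss, followed by a Hoeffding-type concentration of the ghost-sample risk (the paper invokes Theorem 3 of \citet{hellstrom2020generalization} for this last step, which is exactly the conditional Hoeffding argument you spell out). The only cosmetic difference is that you cut off at $k_{\max}=n$ whereas the paper notes $k_{\max}\geq \nicefrac{n}{2}-1$ already suffices for $\bP[\cB'\cap\cE^c]=0$, so with the $\nicefrac{\beta}{2}$ split your first logarithm comes out as $\log\nicefrac{2en}{\beta}$ rather than the stated $\log\nicefrac{en}{\beta}$ --- a negligible factor of two inside a logarithm.
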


\begin{proof}
Similarly to the proof of~\Cref{th:pac_bayes_chernoff_analogue}, let $\cB_\lambda$ be the complement of the event in~\eqref{eq:pac_bayes_randomized_subsample_with_lambda}, and define the event $\cE \coloneqq \{ \relent(\bP_W^S \lVert \bQ_W) \leq k_\textnormal{max} \}$,  and the sub-events $\cE_1 \coloneqq \{ \relent(\bP_W^S \Vert \bQ_W^{\tilde{S}}) \leq 1 \}$ and  $\cE_k \coloneqq \{ \lceil \relent(\bP_W^S \Vert \bQ_W^{\tilde{S}}) \rceil = k \}$ for $k = 2, \ldots, k_\textnormal{max}$. Define also $\cK \coloneqq \{ k \in \bN: 1 \leq k \leq k_\textnormal{max} \textnormal{ and } \bP[\cE_k] > 0 \}$. Then, for all $k \in \cK$, given the event $\cE_k$, with probability at most $\bP[\cB_\lambda | \cE_k]$, there exists some posterior $\bP_W^S$ such that
\begin{equation*}
    \label{eq:pac_bayes_randomized_subsample_with_lambda_and_k}
    \bE^{\tilde{S},U} \emprisk{W}{S_\textnormal{ghost}}  > \bE^S \emprisk{W}{S} + \frac{1}{\lambda} \bigg[ k + \log \frac{1}{\beta}  \bigg] + \frac{\lambda (b-a)^2}{2n}    
\end{equation*}
for all $\lambda > 0$. The above equation is optimized for
\begin{equation*}
    \label{eq:optimal_lambda}
    \lambda = \lambda_k \coloneqq \sqrt{\frac{2n}{(b-a)^2} \Big(k + \log \frac{1}{\beta} \Big)}
\end{equation*}
and using that $k \leq \relent( \bP_W^S \lVert \bQ_{W|\tilde{S}} ) + 1$ yields that, given the event $\cE_k$, with probability smaller or equal than $\bP[\cB_{\lambda_k} | \cE_k]$, there exists some posterior $\bP_W^S$ such that
\begin{equation}
    \label{eq:solution_rs_event_conditioned}
    \bE^{\tilde{S},U} \emprisk{W}{S_\textnormal{ghost}} > \bE^S \emprisk{W}{S} + \sqrt{\frac{ 2(b-a)^2 \big(\relent( \bP_{W}^S \lVert \bQ_{W}^{\tilde{S}} ) + \log\frac{1}{\beta} + 1  \big) }{n}}.
\end{equation}
Let $\cB'$ be the event in~\eqref{eq:solution_rs_event_conditioned} and note that $\bP[B' \cap E^c] = 0$ as long as $k_\textnormal{max} \geq \frac{n}{2} - 1$ by the boundedness of the loss. Solving as in \Cref{th:pac_bayes_chernoff_analogue} and using \citep[Theorem 3]{hellstrom2020generalization} completes the proof. 
\end{proof}

The parameter $\lambda$ optimization strategies from \citet{langford2001not} and \citet{catoni2003pac} are based on a quantization of the \emph{parameter space}. They choose a countable set $\cA \subset \bR_+$ and use other techniques (e.g., rounding) to ensure that the optimization can be done on a larger set $\cA'$ with a union bound price of $\log |\cA|$~\citep[Section 2.1.4]{alquier2021user}. However, it is not always certain that the optimal parameter lies on the extended set $\cA'$, and thus a parameter-free closed form expression of the bound is unattainable.
Here, we instead partition \emph{the set of events described by the random variables}, create a further upper bound that depends only on deterministic terms for each event, and select a deterministic parameter for that event. Then, we pay a union bound price for each event considered. Therefore, as mentioned previously, our technique can be seen as an optimization of the set $\cA^\star \coloneqq \{ \lambda_0, \ldots, \lambda_{k_\textnormal{max}} \}$ of parameters for which we know that an \emph{almost optimal} bound can be reached, where the distance to optimality is given by how loose is the upper bound depending only on deterministic terms~\eqref{eq:pac_bayes_randomized_subsample_with_lambda_and_k}.

\vskip 0.2in
\bibliography{references}

\end{document}